\documentclass{article}

\PassOptionsToPackage{numbers, compress}{natbib}

\usepackage[final]{neurips_2022}




\usepackage[utf8]{inputenc} 
\usepackage[T1]{fontenc}    
\usepackage{hyperref}       
\usepackage{url}            
\usepackage{booktabs}       
\usepackage{amsfonts}       
\usepackage{nicefrac}       
\usepackage{microtype}      
\usepackage{xcolor}         

\title{Concept Activation Regions: A Generalized Framework For Concept-Based Explanations}

%

\author{%
	Jonathan Crabbé \\
	University of Cambridge \\
	\texttt{jc2133@cam.ac.uk} \\
	\And
	Mihaela van der Schaar\\
	University of Cambridge \\
	The Alan Turing Institute \\
	UCLA \\
	\texttt{mv472@cam.ac.uk} \\
}

\usepackage{amsmath,amsfonts,amssymb}
\usepackage{amsthm}
\usepackage{graphicx}
\usepackage{caption}
\usepackage{subcaption}
\usepackage{soul}
\usepackage{hyperref}
\usepackage{tikz}
\usepackage{wrapfig}
\usepackage{lipsum}
\usepackage[ruled]{algorithm2e}
\usepackage{setspace}
\usepackage{makecell}
\usepackage{adjustbox}
\usepackage{multirow}
\usepackage{diagbox}
\usepackage{pifont}
\usepackage{transparent}

\SetKwComment{Comment}{/* }{ */}

\hypersetup{
	colorlinks=true,
	linkcolor=blue,
	filecolor=magenta,      
	urlcolor=cyan,
	citecolor=blue,
	pdfpagemode=FullScreen,
}

\newtheorem{proposition}{Proposition}[section]
\theoremstyle{definition}
\newtheorem{definition}{Definition}[section]
\newtheorem{assumption}{Assumption}[section]
\theoremstyle{remark}
\newtheorem{remark}{Remark}[section]

\newcommand{\X}{\mathcal{X}}
\newcommand{\Y}{\mathcal{Y}}
\renewcommand{\H}{\mathcal{H}}
\newcommand{\D}{\mathcal{D}}

\renewcommand{\S}{\mathcal{S}}
\newcommand{\T}{\mathcal{T}}
\newcommand{\Dtrain}{\D_{\textrm{train}}}
\newcommand{\Pos}{\mathcal{P}}
\newcommand{\Neg}{\mathcal{N}}
\newcommand{\R}{\mathbb{R}}
\newcommand{\N}{\mathbb{N}}
\newcommand{\x}{\boldsymbol{x}}
\newcommand{\y}{\boldsymbol{y}}
\newcommand{\h}{\boldsymbol{h}}
\newcommand{\f}{\boldsymbol{f}}
\newcommand{\g}{\boldsymbol{g}}
\renewcommand{\l}{\boldsymbol{l}}
\newcommand{\w}{\boldsymbol{w}}
\newcommand{\grad}{\boldsymbol{\nabla}}
\newcommand{\TCAV}{\textrm{TCAV}}
\newcommand{\TCAR}{\textrm{TCAR}}
\newcommand{\card}[1]{\left| #1 \right|}

\renewcommand{\a}{\boldsymbol{a}}

\newcommand{\norm}[2][\X]{\left\| #2 \right\|_{#1}}
\newcommand{\takeaway}[1]{\textcolor{purple}{\textbf{Take-away #1:}}}
\newcommand*\circled[1]{\tikz[baseline=(char.base)]{
		\node[shape=circle,draw,inner sep=.3pt] (char) {#1};}}
\newcommand{\isometry}{\boldsymbol{\tau}}
\newcommand{\isometryarrow}{\overset{\isometry}{\mapsto}}
\renewcommand{\r}{\boldsymbol{r}}
\newcommand{\Jacobian}{\boldsymbol{J}}
\newcommand{\Lagrangian}{\mathcal{L}}
\newcommand{\dualpha}{\boldsymbol{\alpha}}
\newcommand{\cmark}{\ding{51}}%
\newcommand{\xmark}{\transparent{0.4}\ding{55}}%

\DeclareMathOperator{\softmax}{Softmax}
\DeclareMathOperator{\idg}{IDG}
\DeclareMathOperator{\edg}{EDG}

\begin{document}
	
	\maketitle
	
	\begin{abstract}
		Concept-based explanations permit to understand the predictions of a deep neural network (DNN) through the lens of concepts specified by users. Existing methods assume that the examples illustrating a concept are mapped in a fixed direction of the DNN's latent space. When this holds true, the concept can be represented by a concept activation vector (CAV) pointing in that direction. In this work, we propose to relax this assumption by allowing concept examples to be scattered across different clusters in the DNN's latent space. Each concept is then represented by a region of the DNN's latent space that includes these clusters and that we call concept activation region (CAR). To formalize this idea, we introduce an extension of the CAV formalism that is based on the kernel trick and support vector classifiers. This CAR formalism yields global concept-based explanations and local concept-based feature importance. We prove that CAR explanations built with radial kernels are invariant under latent space isometries. In this way, CAR assigns the same explanations to latent spaces that have the same geometry. We further demonstrate empirically that CARs offer (1) more accurate descriptions of how concepts are scattered in the DNN's latent space; (2) global explanations that are closer to human concept annotations and (3) concept-based feature importance that meaningfully relate concepts with each other. Finally, we use CARs to show that DNNs can autonomously rediscover known scientific concepts, such as the prostate cancer grading system.

	\end{abstract}
	
	\section{Introduction} \label{sec:introduction}
	Deep learning models are both useful and challenging. Their utility is reflected in their increasing contributions to sophisticated tasks such as natural language processing~\cite{Young2018, Chowdhary2020}, computer vision~\cite{Szeliski2010, Voulodimos2018} and scientific discovery~\cite{Jumper2021, Davies2021, Kirkpatrick2021, Degrave2022}. Their challenging nature can be attributed to their inherent complexity. State of the art deep models typically contain millions to billions parameters and, hence, appear as \emph{black-boxes} to human users. The opacity of black-box models make it difficult to: anticipate how models will perform at deployment~\cite{Lipton2016}; reliably distil knowledge from the models~\cite{Doshi-Velez2017} and earn the trust of stakeholders in high-stakes domains~\cite{Ching2018,Tonekaboni2019,Langer2021}. With the aim of increasing the transparency of black-box models, the field of explainable AI (XAI) developed~\cite{Adadi2018, Das2020, Tjoa2020, BarredoArrieta2020}. We can broadly divide XAI methods in 2 categories: \circled{1} Methods that restrict the model's architecture to enable explanations. Examples include attention models that motivate their predictions by highlighting features they pay attention to~\cite{Choi2016} and prototype-based models that motivate their predictions by highlighting relevant examples from their training set~\cite{Chen2018}. \circled{2} \emph{Post-hoc} methods that can be used in a plug-in fashion to provide explanation for a pre-trained model. Examples include \emph{feature importance methods} (also known as feature attribution or saliency methods) that highlight features the model is sensitive to~\cite{Binder2016,Ribeiro2016,Fong2017,Shrikumar2017,Lundberg2017,Sundararajan2017,Crabbe2021Dynamask}; \emph{example importance methods} that identify influential training examples~\cite{Koh2017, Ghorbani2019Shap, Pruthi2020} and \emph{hybrid methods} combining the two previous approaches~\cite{Crabbe2021Simplex}. In this work, we focus on a different type of explanation methods known as \emph{concept-based} explanations. Let us now summarize the relevant literature to contextualize our own contribution.    

\textbf{Related work.} Concept-based explanations were first formalized with the \emph{concept activation vector} (CAV) formalism~\cite{Kim2017, Schrouff2021}. Given a concept specified by the user (e.g. stripes in an image), linear classifiers are used as {probes}~\cite{Alain2016} to assess whether a deep model's representation space separates examples where a concept  is present (\emph{concept positive examples}) from examples where a concept is absent (\emph{concept negative examples}). A CAV is then extracted from the linear classifier's weights. With this CAV, it is possible to provide post-hoc explanations such as the sensitivity of a model's prediction to the presence/absence of a concept. It goes without saying that several concepts are needed to explain the prediction of a deep model. To formalize this idea, existing works have used concept basis decomposition~\cite{Zhou2018} and sufficient statistics~\cite{Yeh2020}. Although most applications of CAV involve image data, we note that the formalism has been successfully applied to time-series data~\cite{Kusters2020,Mincu2020}. While concepts are typically specified by the user, early works in computer vision have been undertaken to discover concepts in the form of meaningful image segmentations~\cite{Ghorbani2019}. The main criticism against the CAV formalism is that it requires concept positive examples to be linearly separable from concept negative examples~\cite{Chen2020,Soni2020}. This is because linear separability of concept sets is a restrictive criterion that the model is not explicitly trained to fulfil. To address this issue, works like \emph{concept whitening transformations}~\cite{Chen2020} and \emph{concept bottleneck models}~\cite{Koh2020, Barbiero2021, Georgiev2021} propose to do away with the post-hoc nature of concept-based explanations. These methods introduce new neural network architectures that permit to train the models with concept labels. We stress that this requires the set of concepts to be specified \emph{before} training the model. This assumes that we know what concepts are relevant for a model to solve a downstream task \emph{a-priori}. This is not the case whenever we train a model to solve a task for which little or no knowledge is available. In this setup, it seems more appropriate to train a model for the task first and, then, perform a post-hoc analysis of the model to determine the concepts that were relevant in providing a solution. Furthermore, recent concerns have emerged regarding the reliability of interpretations provided by these altered model architectures~\cite{Margeloiu2021,Mahinpei2021}. 

\begin{figure}
	\centering
	\includegraphics[width=.8\textwidth]{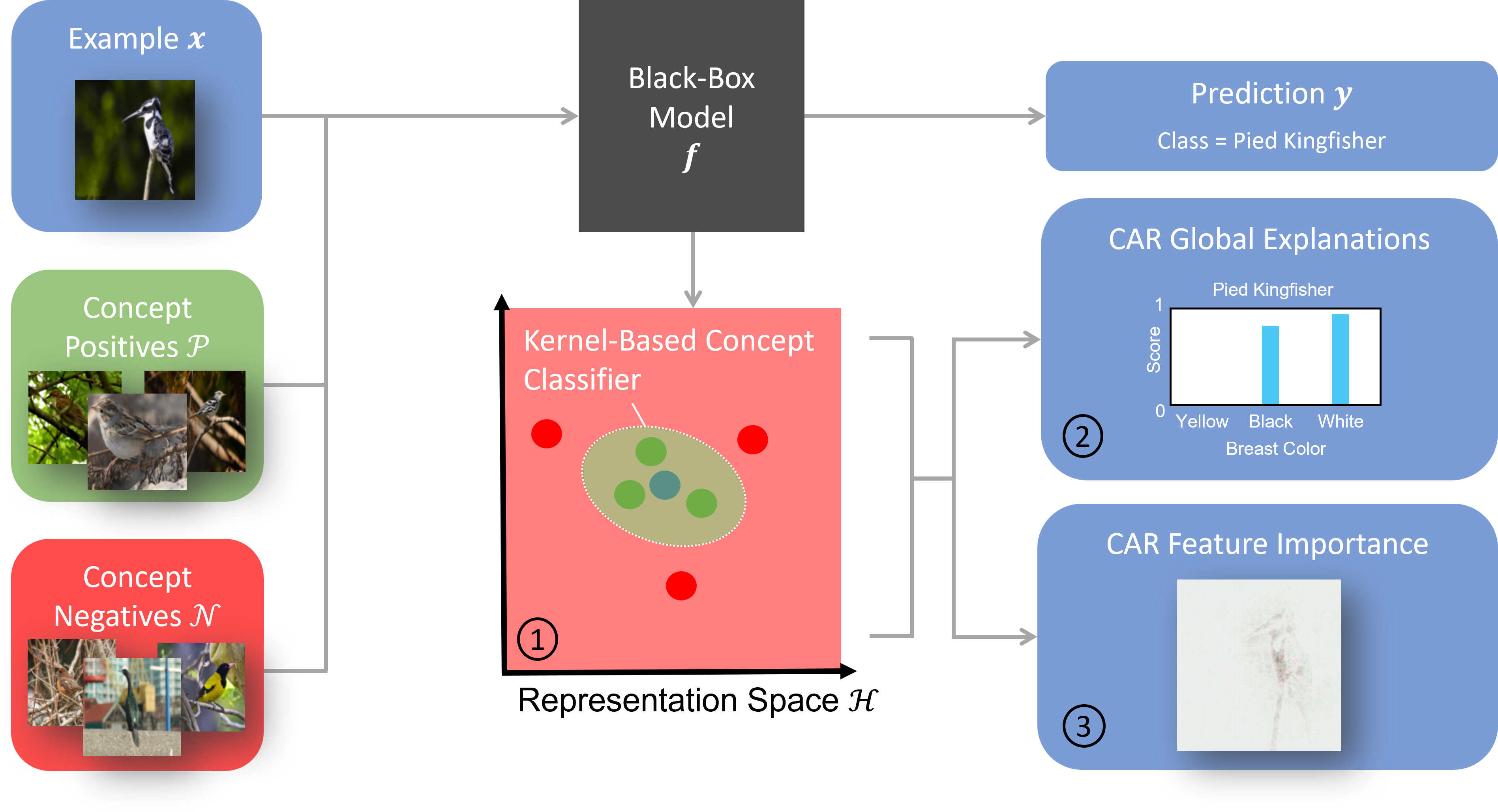}
	\caption{Illustration of the CAR framework. To define a concept, a set of positive and negative examples are fed to the model. In this illustration, we consider a neural network predicting the species of a bird based on a picture. The concept we illustrate is \emph{white breast}. Concept positive and negative images respectively exhibit birds with and without white breasts. CARs aim at explaining the prediction for a given example $\x$. \textcircled{1}~All the examples are mapped in the model's representation space $\H$. Unlike CAVs~\cite{Kim2017}, CARs rely on kernel-based support vector classifiers (SVC) and do not require the positive and negative sets to be linearly separable in $\H$. \textcircled{2}~CAR uses the SVC to output TCAR scores that indicate how classes are related to concepts. \textcircled{3}~The CAR formalism can be used in combination with any feature importance method for deep neural networks. This permits to create concept-specific saliency maps.}
	\label{fig:main_figure}
\end{figure}

\textbf{Contributions.} In this work, our purpose is to retain the flexible post-hoc nature of concept-based explanations without assuming that the concept sets are linearly separable. To that aim, we introduce \emph{concept activation regions} (CARs), an extension of the CAV framework illustrated in Figure~\ref{fig:main_figure}. \textbf{\circled{1}~Generalized formalism.}~As a substitute to linear separability, we propose in Section~\ref{subsec:car_prelim} to adapt the smoothness assumption from semi-supervised learning~\cite{Chapelle2006}. Intuitively, this more general assumption only requires positive and negative examples to be scattered across distinct clusters in the model's representation space. In practice, this generalization is implemented by substituting CAV's linear classifiers by  kernel-based support vector classifiers. We demonstrate that choosing radial kernels leads to CAR classifiers that are invariant under isometries of the latent space. This permits to assign identical explanations to latent spaces characterized by the same geometry. Moreover, we show in Section~\ref{subsec:car_acc_validation} that our CAR classifiers yield a substantially more accurate description of how concepts are distributed in deep model's representation spaces. \textbf{\circled{2}~Better global explanations.}~With the nonlinear decision boundaries of our support vector classifiers, there is no obvious way to adapt the notion of concept activation vectors. Since CAV's concept importance (TCAV score) is computed with these vectors, we need an alternative approach. In Section~\ref{subsec:car_concept_detection}, we propose to define concept importance by building on our smoothness assumption. Concretely, a concept is important for a given example if the model's representation for this example lies in a cluster of concept positive representations. With this characterization, we define TCAR scores that are the CAR equivalent of TCAV scores. In Section~\ref{subsec:car_global_validation}, we demonstrate that TCAR scores lead to global explanations that are more consistent with concept annotations provided by humans. \textbf{\circled{3}~Concept-based feature importance.}~In Section~\ref{subsec:car_concept_features}, we argue that our CAR formalism permits to assign concept-specific feature importance scores for each example fed to the neural network. We verify empirically in Section~\ref{subsec:car_feature_importance_validation} that those feature importance scores reflect meaningful concept associations. Finally, we illustrate in Section~\ref{subsec:seer_use_case} how these contributions permit to establish that deep models implicitly discover known scientific concepts. 

	\section{Concept Activation Regions (CARs)} \label{sec:car}
	\subsection{Preliminaries}  \label{subsec:car_prelim}
We assume a typical supervised setting where each sample is represented by a couple $(\x , \y)$ with input features $\x \in \X \subseteq \R^{d_X}$ and a label $\y \in \Y \subseteq \R^{d_Y}$, where $d_X, d_Y \in \N^*$ are respectively the dimensions of the feature (or input) and label (or output) spaces~\footnote{Note that we use bold symbols for vectors.}. In order to predict the labels from the features, we are given a deep neural network (DNN) $\f = \l \circ \g : \X \rightarrow \Y$, where $\g: \X \rightarrow \H$ is a feature extractor that maps features $\x \in \X$ to latent representations $\h = \g(\x) \in \H \subseteq \R^{d_H}$ and $\l : \H \rightarrow \Y$ maps latent representations $\h \in \H$ to labels $\y = \l(\h) \in \Y$. The representation $\h = \g(\x)$ typically corresponds to the output from one of the DNN's hidden layer. We assume that the model $\f$ was obtained by fitting a training set $\Dtrain \subset \X \times \Y$. Our purpose is to \emph{understand} how the neural network representation induced by $\g$ allows the model $\f$ to solve the prediction task. To that end, we shall use concept-based explanations. Those explanations rely on a set of $C \in \N^*$ concepts indexed~\footnote{It is understood that there exists a dictionary between the integer concept identifier and the concept's name. In this way, if the first concept of interest is \emph{stripes}, we can write $c=1$ and $c = \textrm{Stripes}$.} by $c \in [C]$, where $[C]$ denotes the set of positive integers between $1$ and $C$. Each concept $c \in [C]$ is defined by the user through a set of $N^{c} \in \N^*$ positive examples $\Pos^c = \left\{ \x^{c, n} \mid n \in [N^{c}] \right\}$ and a set of $N^{\neg c}$ negative examples $\Neg^c = \left\{ \x^{\neg c, n} \mid n \in [N^{\neg c}] \right\}$. The concept $c \in [C]$ is present in the positive examples $\Pos^c$ and absent from the negative examples $\Neg^c$. For instance, if we work in a computer vision context and the concept of interest is $c = \textrm{Stripes}$, then $\Pos^c$ contains images with stripes (e.g. zebra images) and $\Neg^c$ contains images without stripes (e.g. cow images). We note that non-binary concepts (e.g. a colour) can be one-hot encoded to fit this formulation. In this paper, we use balanced sets: $N^c = N^{\neg c}$.

\textbf{CAV formalism.} Let us start by summarizing the CAV formalism~\cite{Kim2017}. This formalism relies on the crucial assumption that the sets $\g(\Pos^c) \subset \H$ and $\g(\Neg^c) \subset \H$ are linearly separable in $\H$. Hence, it is possible to find a vector $\w^c \in \R^{d_H}$ and a bias term $b^c \in \R$ such that $(\w^c)^\intercal \h + b^c > 0$ for all $\h \in \g(\Pos^c)$ and, conversely, $(\w^c)^\intercal \h + b^c < 0$ for all $\h \in \g(\Neg^c)$. The vector $\w^c$ is called the \emph{concept activation vector} (abbreviated CAV) associated to the concept $c \in [C]$. Geometrically, this vector is normal to the hyperplane separating $\g(\Pos^c)$ from $\g(\Neg^c)$ in the latent space $\H$. Intuitively, this vector points in a direction of the latent space where the presence of the concept $c \in [C]$ increases. When $f_k(\x) = l_k \circ \g(\x)$ corresponds to the predicted probability of class $k \in [d_Y]$ for $\x \in \X$, this insight allows us to define a class-wise conceptual sensitivity. Indeed, the directional derivative $S^c_k(\x) \equiv (\w^c)^\intercal \grad_{\h} l_k \left[ \g(\x) \right]$ measures the extent to which the probability of class $k$ varies in the direction of the CAV. If the sensitivity is positive $S^c_k(\x) > 0$, this indicates that the presence of concept $c$ increases the belief of the model $\f$ that $k$ is the correct class for example $\x$. Note that CAV sensitivities $S^c_k(\x)$ can be generalized to aggregate the gradients between $\g(\x)$ and a baseline $\bar{\h} \in \H$~\cite{Schrouff2021}. Given a dataset of examples split by class $\D = \bigsqcup_{k=1}^{d_Y} \D_k$, where $\D_k$ contains only examples of class $k \in [d_Y]$, it is possible to summarize the overall sensitivity of class $k$ to concept $c$ by computing the score $\TCAV^c_k = \nicefrac{\card{ \left\{ \x \in \D_k \mid S^c_k(\x) > 0 \right\}}}{\card{\D}}$, where $\card{\cdot}$ denotes the set cardinality.

\begin{wrapfigure}{r}{.3\textwidth}
	\vspace{-.4cm}
	\centering
	\includegraphics[width=.3\textwidth]{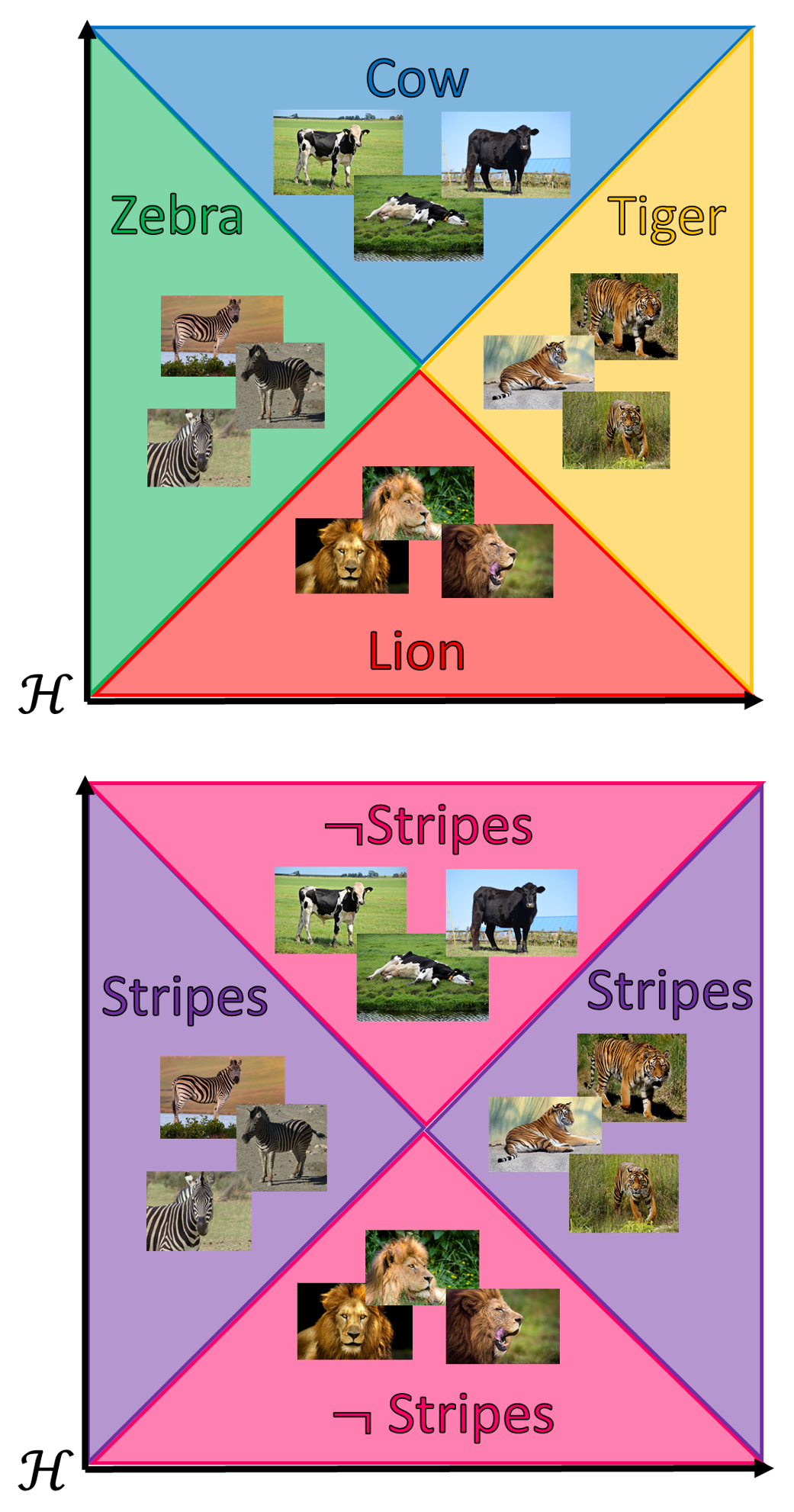}
	\caption{Classes are linearly separable (top) but concept sets are not (bottom).}
	\label{fig:concept_linear_separability}
\end{wrapfigure}
\textbf{The limitations of linear separability.} The linear separability of concept negatives and positives is central in the aforementioned formalism. Indeed, the existence of a separating hyperplane in $\H$ is necessary to define CAVs, which in turn are required to define concept sensitivity and TCAV scores. This assumption has been criticized in the literature~\cite{Chen2020, Soni2020}. The main criticism is the following: there is no reason to expect the representation map $\g$ to linearly separate concept positive and negatives since these concept-related labels are not known by the model. Let us consider a concrete example to stress that generic classifiers should not be expected to linearly separate concepts even if they linearly separate classes. Consider a classifier $\f = \l \circ \g$, where for each class $k \in [d_Y]$: $l_k(\h) = \softmax(\boldsymbol{\alpha}_k^\intercal \h + \beta_k)$ with $\boldsymbol{\alpha}_k \in \R^{d_H}$ and $\beta_k \in \R$. This parametrization is typically realized when the representation space $\H$ corresponds to the penultimate layer of a neural network. For concreteness, we consider a computer vision setting with the classes $k \in \{ \textrm{Tiger}, \textrm{Lion}, \textrm{Cow}, \textrm{Zebra} \}$. We note that the decision boundary in $\H$ for any pair of class is a hyperplane. For instance, the decision boundary between the classes lion and tiger is parametrized by $l_{\textrm{Lion}}(\h) = l_{\textrm{Tiger}}(\h)$, which corresponds to the equation of a hyperplane in $\H$: $(\boldsymbol{\alpha}_{\textrm{Lion}} - \boldsymbol{\alpha}_{\textrm{Tiger}})^\intercal \h + (\beta_{\textrm{Lion}} - \beta_{\textrm{Tiger}}) = 0$. In this setting, any successful classifier $\f$ requires a representation map $\g$ that linearly separates the classes in the latent space $\H$. We now consider the concept $c = \textrm{Stripes}$. In this case, we can expect examples from the classes tiger and zebra in $\Pos^{\textrm{Stripes}}$ (as both tigers and zebras have stripes) and examples from the classes lion and cow in $\Neg^{\textrm{Stripes}}$ (as neither lions nor cows have stripes). As illustrated in Figure~\ref{fig:concept_linear_separability}, it is therefore perfectly possible to have $\g(\Pos^{\textrm{Stripes}})$ and $\g(\Neg^{\textrm{Stripes}})$ that are not linearly separable in spite of the classes linear separability. With this example, we emphasize the subtle distinction between classes and concept linear separability. While the former is expected and holds experimentally~\cite{Alain2016}, this is not the case for the latter.

\textbf{A better assumption.} Although the representations from Figure~\ref{fig:concept_linear_separability} do not linearly separate the concept sets, we note that concept positive and negative examples are scattered across distinct clusters. In this way, a model that produces these representations appears to make a difference between presence and absence of the concept. From this angle, we could consider that the concept is well encoded in the representation space geometry. To formalize this more general notion of concept set separability, we adapt the smoothness assumption originally formulated in semi-supervised learning~\cite{Chapelle2006}.

\begin{assumption}[Concept Smoothness] \label{assumption:smoothness}
	A concept $c \in [C]$ is encoded in the latent space $\H$ if $\H$ is \emph{smooth} with respect to the concept. This means that we can separate $\H = \H^c \bigsqcup \H^{\neg c}$ into a \emph{concept activation region} (CAR) $\H^c$ where the concept $c$ is mostly present (i.e. $\card{\g(\Pos^c) \bigcap \H^c} \gg \card{\g(\Neg^c) \bigcap \H^c}$) and a region $\H^{\neg c}$ where the concept $c$ is mostly absent (i.e. $\card{\g(\Neg^c) \bigcap \H^{\neg c}} \gg\card{\g(\Pos^c) \bigcap \H^{\neg c}}$). If two points $\h_1, \h_2 \in \H$ in a high-density region of the latent space are close to each other, then we should have $\h_1 , \h_2 \in \H^c$ or $\h_1 , \h_2 \in \H^{\neg c}$. 
\end{assumption}
\begin{remark}
	We can easily see that linear separability is trivially included in this assumption: it corresponds to the case where the CAR $\H^c$ and $\H^{\neg c}$ are separated by a hyperplane. Conversely, our assumption does not require the CAR $\H^c$ and $\H^{\neg c}$ to be separated by a hyperplane for a concept to be relevant, as illustrated in Figure~\ref{fig:concept_smoothness}. 
\end{remark}

\begin{figure} 
	\centering
	\includegraphics[width=.7\textwidth]{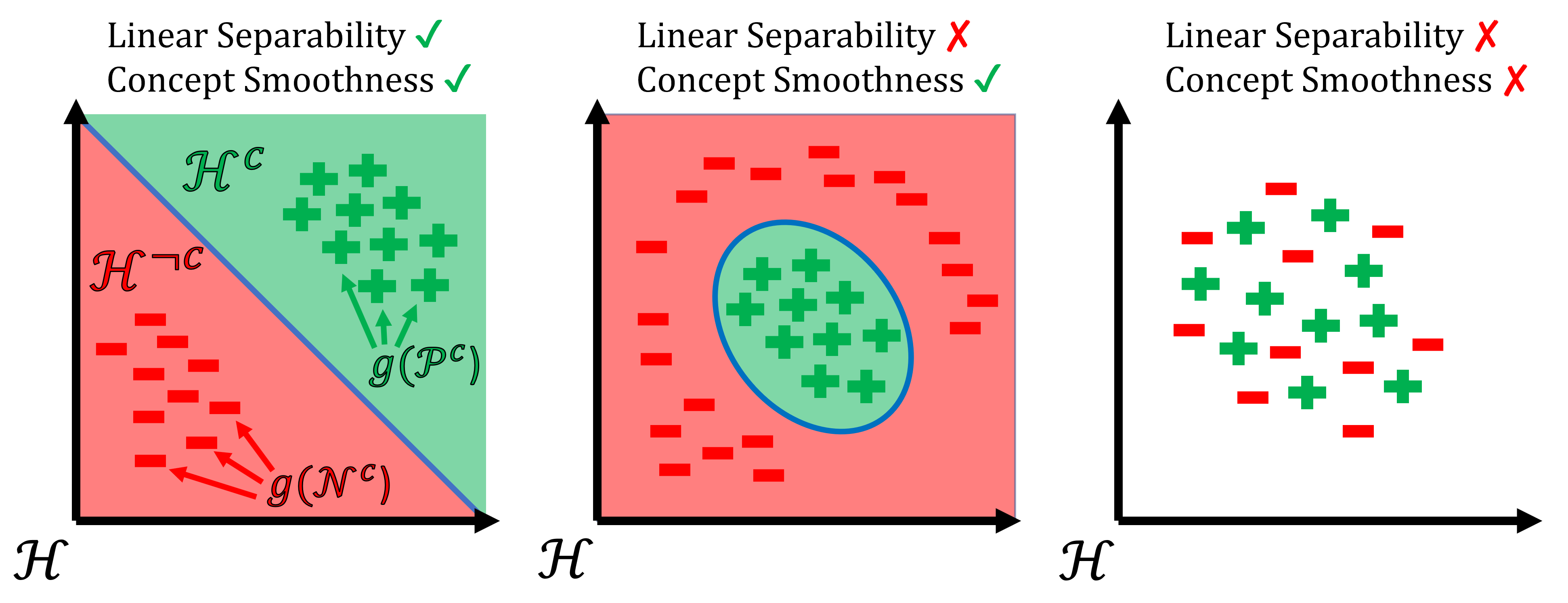}
	\caption{Linear separability implies concept smoothness but not the opposite.}
	\label{fig:concept_smoothness}
\end{figure}

There are two crucial components in the previous assumption that need to be detailed: what do we mean by \emph{density} and how do we extract a CAR $\H^c$ from $\H$. 

\subsection{Detecting Concepts} \label{subsec:car_concept_detection}
\textbf{Concept Density.} Let us start by detailing the notion of density that we use. We first note that the notion of proximity in $\H$ is crucial in Assumption~\ref{assumption:smoothness}. It is conventionally formalized through a kernel function $\kappa : \H^2 \rightarrow \R^+$~\cite{Hofman2008}. These functions are such that the proximity between $\h_1, \h_2 \in \H$ increases with $\kappa(\h_1, \h_2)$. Similarly, the proximity between $\h$ and a discrete set $\S \subset \H$ of representations increases with $\sum_{\h' \in \S} \kappa(\h, \h')$. This last sum can be interpreted as the density of examples from $\S$ at a point $\h \in \H$. In our setup, it is natural to define a density relative to each concept $c \in [C]$ by using the two sets $\Pos^c$ and $\Neg^c$. This motivates the following definition.

\begin{definition}[Concept Density] \label{def:concept_density}
	Let $\kappa : \H^2 \rightarrow \R^+$ be a kernel function. For each concept $c \in [C]$, we assume that we have a positive set $\Pos^c = \left\{ \x^{c, n} \mid n \in [N^{c}] \right\}$ and a negative set $\Neg^c = \left\{ \x^{\neg c, n} \mid n \in [N^{c}] \right\}$. We define the concept density as a function $\rho^c : \H \rightarrow \R$ such that
	\begin{align*}
		&\rho^c(\h) = \rho^{\Pos^c}(\h) - \rho^{\Neg^c}(\h), \\
		&\rho^{\Pos^c}(\h) = \frac{1}{N^c} \sum_{n=1}^{N^c} \kappa\left[ \h , \g(\x^{c, n})\right], \hspace{.5cm} \rho^{\Neg^c}(\h) = \frac{1}{N^c} \sum_{n=1}^{N^c} \kappa\left[ \h , \g(\x^{\neg c, n})\right].
	\end{align*}
	The concept density for an example $\x \in \X$ can similarly be defined as $\rho^c[\g(\x)]$.
\end{definition}

\begin{remark}
	This density function is not necessarily positive. Indeed, we have assigned a positive contribution for examples from $\Pos^c$ and a negative one for those of $\Neg^c$. The idea is that $\rho^c(\h) > 0$ whenever the density of $\g(\Pos^c)$ is higher around $\h$. Conversely, $\rho^c(\h) < 0$ whenever the density of $\g(\Neg^c)$ is higher around $\h$. Finally,  $\rho^c(\h) \approx 0$ if $\h$ is isolated from positive and negative examples or if the density of  $\g(\Pos^c)$ balances the density of $\g(\Neg^c)$ around $\h$.     
\end{remark}

\textbf{Concept Activation Regions.} With this definition, it is tempting to use the density $\rho^c$ to define the regions $\H^c$ and $\H^{\neg c}$. A natural choice would be to define the CAR as the positive density region $\H^c = (\rho^c)^{-1}(\R^+)$ and its complementary as the negative density region $\H^{\neg c} = (\rho^c)^{-1}(\R^-)$. This corresponds to using a Parzen window classifier for the concept~\cite{Parzen1962}. An obvious limitation of this approach is that each evaluation of the density $\rho^c$ scales linearly with the number $N^c$ of concept examples. If the size of concept sets is large, it is possible to obtain a sparse version of this Parzen window classifier with a \emph{support vector classifier} (SVC)~\cite{Benett1992, Cortes1995} $s^c_{\kappa} : \H \rightarrow \{0,1\}$. This SVC is fitted to discriminate the concept sets $\g(\Pos^c)$ and $\g(\Neg^c)$. We can then define the CARs as $\H^c = (s^c_{\kappa})^{-1}(1)$ and $\H^{\neg c} = (s^c_{\kappa})^{-1}(0)$. More details can be found in Appendix~\ref{appendix:car_classifiers}.

\textbf{Global explanations.} Our CAR formalism permits to extend those local (i.e. sample-wise) considerations globally. Let us start by defining the equivalent of TCAV scores described in Section~\ref{subsec:car_prelim}. This score aims at understanding how the model relates classes with concepts. We define the TCAR score as the fraction of examples that have class $k$ and whose representation lies in the CAR $\H^c$: $\TCAR^c_k = \nicefrac{\card{\g(\D_k) \bigcap \H^c }}{\card{\D_k}}$. Note that $\TCAR^c_k \in [0,1]$, where $0$ corresponds to no overlap and $1$ to a full overlap. In Appendix~\ref{appendix:tcar}, we extend this approach to measure the overlap between two concepts.

\textbf{Latent space isometries invariance.} In many applications such as clustering~\cite{Rokach2005} or data visualization~\cite{Vandermaaten2008}, the only relevant geometrical information of the representation space $\H$ is the distance $\norm[\H]{\h_1 - \h_2}$ between every pair of points $(\h_1, \h_2) \in \H^2$. Informally, we say that two representation spaces are isometric if they assign the same distance to each pair of points. In the aforementioned applications, two isometric representation spaces are therefore indistinguishable from one another. Since concept-based explanations similarly describe the representation space geometry, one might require similar invariance to hold in this context. In Appendix~\ref{appendix:isometry_invariance}, we show that our CAR formalism provides such guarantee if $\kappa$ is a radial kernel~\cite{Berlinet2011}. To the best of our knowledge, this type of analysis has not been performed in the context of concept-based explanation methods. We believe that future works in this domain would greatly benefit from this type of insight.

\subsection{Concepts and Features} \label{subsec:car_concept_features}
Not all features are relevant to identify a concept. As an example, let us consider the concept $c = \textrm{Stripes}$ in a computer vision setting. If the image $\x$ represents a zebra, only some small portion of the image will exhibit stripes (namely the body of the zebra). Therefore, if we build a saliency map for the concept $c$, we expect only this part of the image to be relevant for the identification of this concept. Existing works to produce concept-level saliency maps relying on the CAV formalism exist in the literature~\cite{Zhou2018, Brocki2019}. In the absence of CAVs, we need an alternative approach. We now describe how any feature importance method can be used in conjunction with CARs. A generic feature importance method assigns a score $a_i(f, \x) \in \R$ to each feature $i \in [d_X]$ for a \emph{scalar} model $f: \X \rightarrow \R$ to make a prediction $f(\x)$~\cite{Covert2020}. Since our purpose is to measure the relevance of each feature in identifying a concept $c \in [C]$, we compute the feature importance for the concept density: $a_i(\rho^c \circ \g, \x)$. Note that some specific feature importance methods, such as Integrated Gradients~\cite{Sundararajan2017} or Gradient Shap~\cite{Lundberg2017}, require the explained model to be differentiable. This is the case whenever the kernel $\kappa$ and the latent representation $\g$ are differentiable. We note that the support vector classifier $s_{\kappa}^c$ cannot be used in this context due to its non-differentiability. In Appendix~\ref{appendix:car_feature_importance}, we show that our CAR-based feature importance can be endowed with a useful completeness propriety that relates the importance scores $a_i(\rho^c \circ \g, \x)$ to the concept density $\rho^c \circ \g(\x)$.

	\section{Experiments} \label{sec:experiments}  
	The code to reproduce all the experiments from this section is available at \url{https://github.com/JonathanCrabbe/CARs} and  \url{https://github.com/vanderschaarlab/CARs}.

\subsection{Empirical Evaluation}
Our purpose is to empirically validate the formalism described in the previous section. We have several independent components to evaluate: \circled{1}~the concept classifier used to detect the CARs $\H^c$, \circled{2}~the global explanations induced by the TCAR values and \circled{3}~the feature importance scores induced by the concept densities $\rho^c$.

\textbf{Datasets.} We perform our experiments on 3 datasets. \circled{1}~The MNIST dataset~\cite{LeCun1998} consists of $28 \times 28$ grayscale images, each representing a digit. We train a convolutional neural network (CNN) with 2 layers to identify the digit of each image. \circled{2}~The MIT-BIH Electrocardiogram (ECG) dataset~\cite{Goldberger2000, Moody2001} consists of univariate time series with $187$ time steps, each representing a heartbeat cycle. We train a CNN with 3 layers to determine whether each heartbeat is normal or abnormal.  \circled{3}~The Caltech-UCSD Birds-200 (CUB) dataset~\cite{Branson2011} consists of coloured images of various sizes, each representing a bird from one of the $200$ species present in the dataset. We fine-tune an Inceptionv3 neural network~\cite{Szegedy2015} to identify the species each bird belongs to among the $200$ possible choices.

\textbf{Concepts.} For each dataset, we study the models through the lens of several well-defined concepts that are provided by human annotations. \circled{1}~For MNIST: we use $C=4$ concepts that correspond to simple geometrical attributes of the images: \emph{Loop} (positive images include a loop), \emph{Vertical/Horizontal Line} (positive images include a vertical/horizontal line) and \emph{Curvature} (positive images contain segments that are not straight lines). \circled{2}~For ECG: we use $C=4$ concepts defined by cardiologists to better characterize abnormal heartbeats~\cite{Kachuee2018}: \emph{Premature Ventricular}, \emph{Supraventricular}, \emph{Fusion Beats} and \emph{Unknown}. The exact definition for each of these concepts is beyond the scope of this paper. We simply note that annotations for those concepts are  available in the ECG dataset.  \circled{3} For CUB: we use $C = 112$ concepts that correspond to visual attributes of the birds (e.g. their size, the colour of their wings, etc.). We use the same procedure as~\cite{Koh2020} to extract those concepts from the CUB dataset. We stress that, in each case, we selected concepts that can be unambiguously associated to the classes that are predicted by the models. Hence, it is reasonable to expect those concepts to be salient for the models. In each case, we sample the positive and negative sets $\Pos^c$ and $\Neg^c$ from the model's training sets. For more details on the concepts and the models, please refer to Appendix~\ref{appendix:empirical_evaluation}.

\subsubsection{Accuracy of concept activation regions} \label{subsec:car_acc_validation}

\begin{figure}
	\centering
	\begin{subfigure}[b]{0.32\textwidth} 
		\centering
		\includegraphics[width=\textwidth]{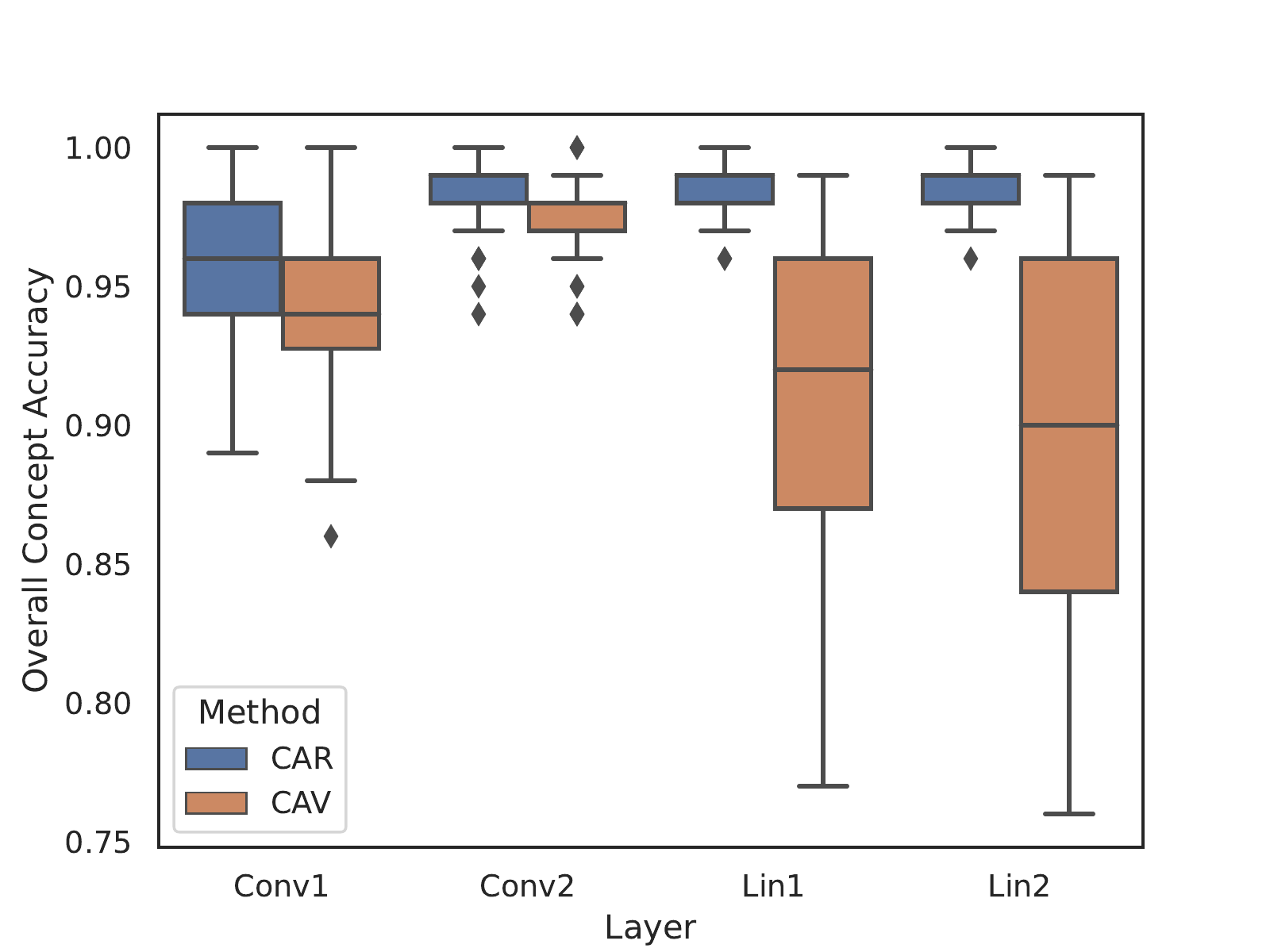}
		\caption{MNIST}
	\end{subfigure}
	\begin{subfigure}[b]{0.32\textwidth} 
		\centering
		\includegraphics[width=\textwidth]{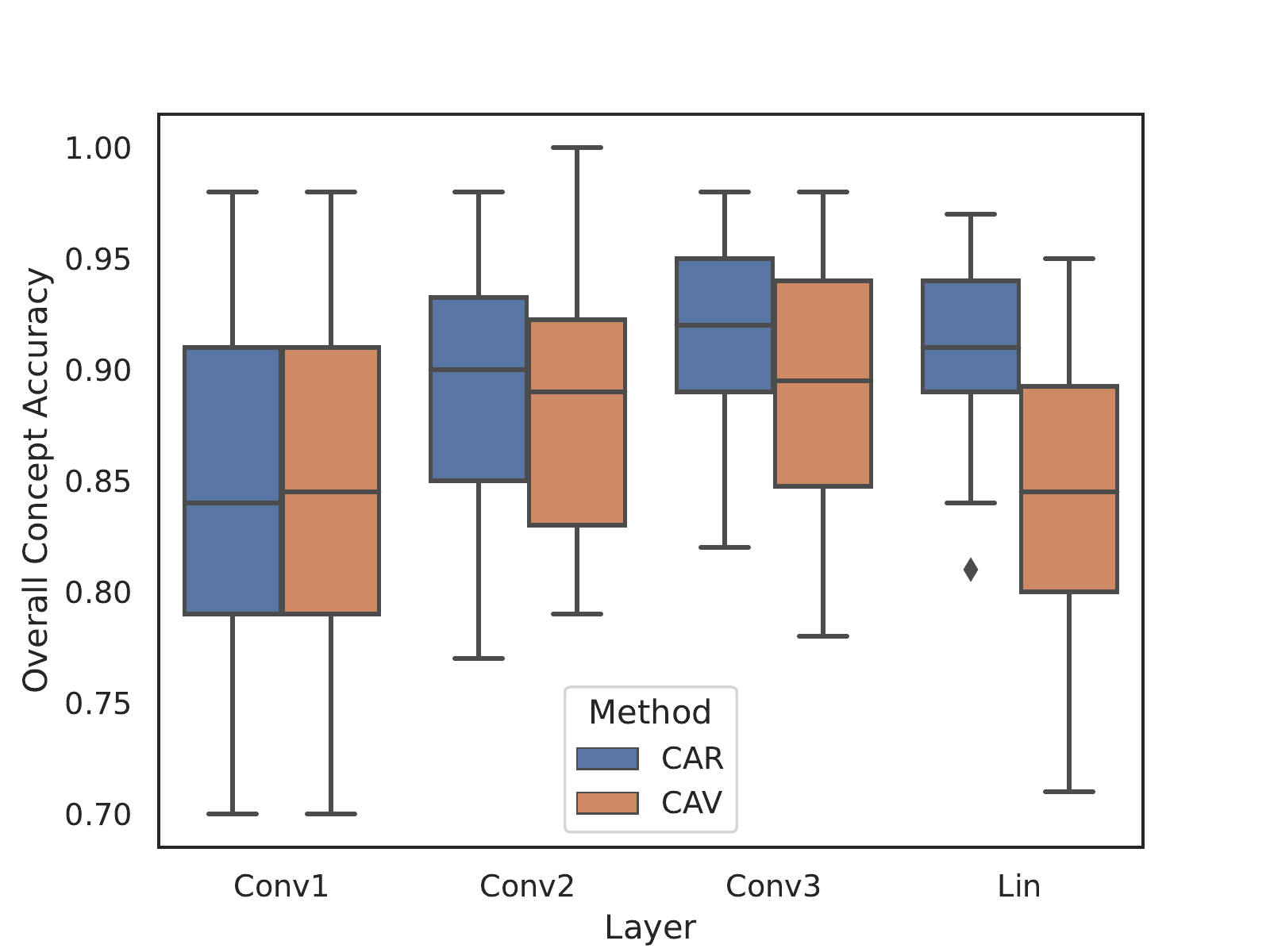}
		\caption{ECG}
	\end{subfigure}
	\begin{subfigure}[b]{0.32\textwidth} 
		\centering
		\includegraphics[width=\textwidth]{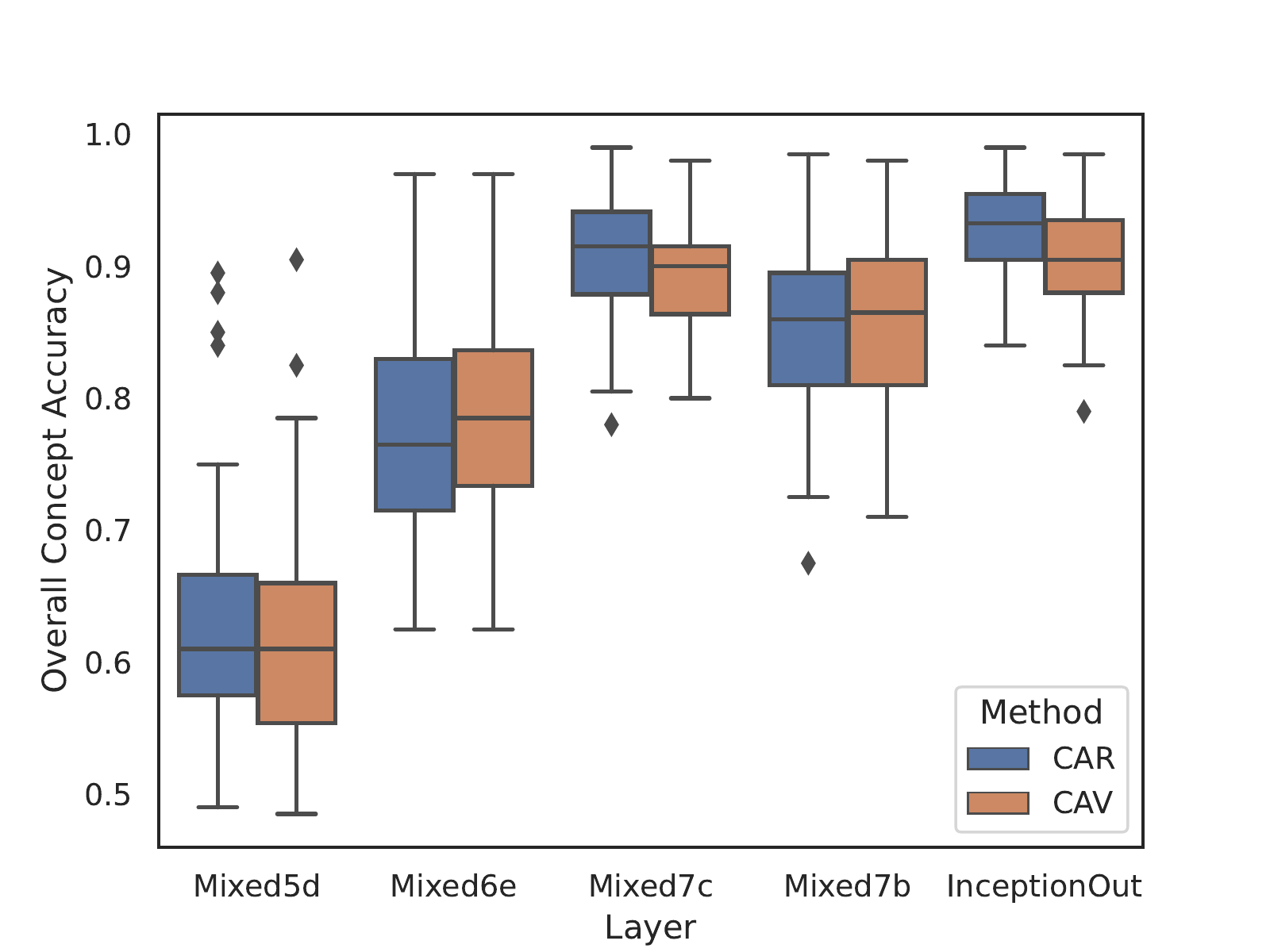}
		\caption{CUB}
		\label{fig:overall_concept_acc:cub}
	\end{subfigure}
	\caption{Overall accuracy of concept classifiers.}
	\label{fig:overall_concept_acc}
\end{figure}

\textbf{Methodology.} The purpose of this experiment is to assess if the concept regions $\H^c$ identified by our CAR classifier generalize well to unseen examples. Each of the models described above are endowed with several representation spaces (one per hidden layer). For several of those latent spaces, we fit our CAR classifier (SVC with radial basis function kernel) to discriminate the concept sets $\Pos^c , \Neg^c $ for each concept $c \in [C]$. These two sets have a size $N^c = 200$ and are sampled from the model's training set. The classifier is then evaluated by computing its accuracy on a holdout balanced concept set $\T^c$ of size 100 sampled from the model's testing set. For MNIST and ECG, we repeat this experiment 10 times for each concept and let the sets $\Pos^c , \Neg^c, \T^c$ vary on each run. For comparison, we perform the same experiment with a linear CAV classifier as a benchmark. We report the overall (all the concepts together) accuracy in Figure~\ref{fig:overall_concept_acc}.

 \textbf{Analysis.}  The CAR classifier substantially outperforms the CAV classifier. Note that this advantage is even more striking in the representation spaces associated to the deeper (last) DNN layers. This can be better understood through the lens of Cover's theorem~\cite{Cover1965}: the linear separation underlying CAV is usually easier to achieve in higher dimensional spaces, which corresponds to the shallower (first) DNN layers in this case. When the dimension $d_H$ of the latent space becomes comparable to the size $N^c$ of the concept sets, linear separation often fails to maintain a high accuracy. By contrast, the SVC underlying CARs manage to maintain high accuracy through the more flexible notion of concept smoothness defined in Assumption~\ref{assumption:smoothness}. This suggests that concepts can be well encoded in the geometry of the latent space even when accurate linear separability is not possible.  We also note that the accuracy CAR classifiers seems to increase with the representation's depth. This is consistent with the behaviour of class probes~\cite{Alain2016}. 

\textbf{Statistical significance.} The statistical significance of the concept classifiers is evaluated with the permutation test from~\cite{Schrouff2021}. All of them are statistically significant with p-value $ < .05$ except for some concepts classifiers that are fitted with the layers \emph{Mixed5d} and \emph{Mixed6e} of the CUB Inceptionv3 model. We note that those classifiers do not generalize well in Figure~\ref{fig:overall_concept_acc:cub}. This suggests that deeper networks are required to identify more challenging concepts correctly.  

\takeaway{1} CAR classifiers better capture how concepts are spread across representation spaces.

\subsubsection{Consistency of global explanations} \label{subsec:car_global_validation}

\begin{figure}
	\centering
	\begin{subfigure}[b]{0.32\textwidth} 
		\centering
		\includegraphics[width=\textwidth]{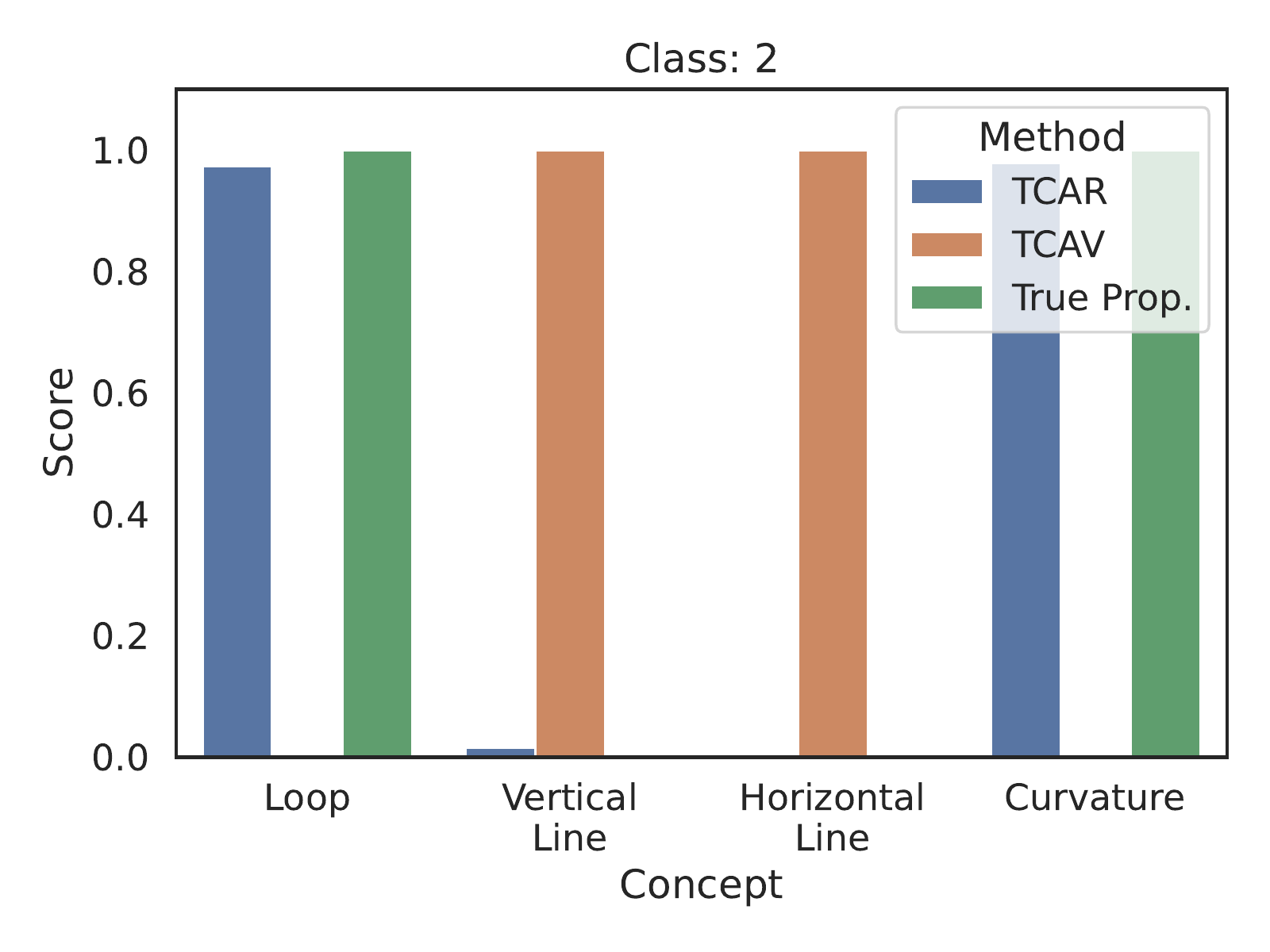}
		\caption{MNIST Digit 2}
	\end{subfigure}	
	\begin{subfigure}[b]{0.32\textwidth} 
		\centering
		\includegraphics[width=\textwidth]{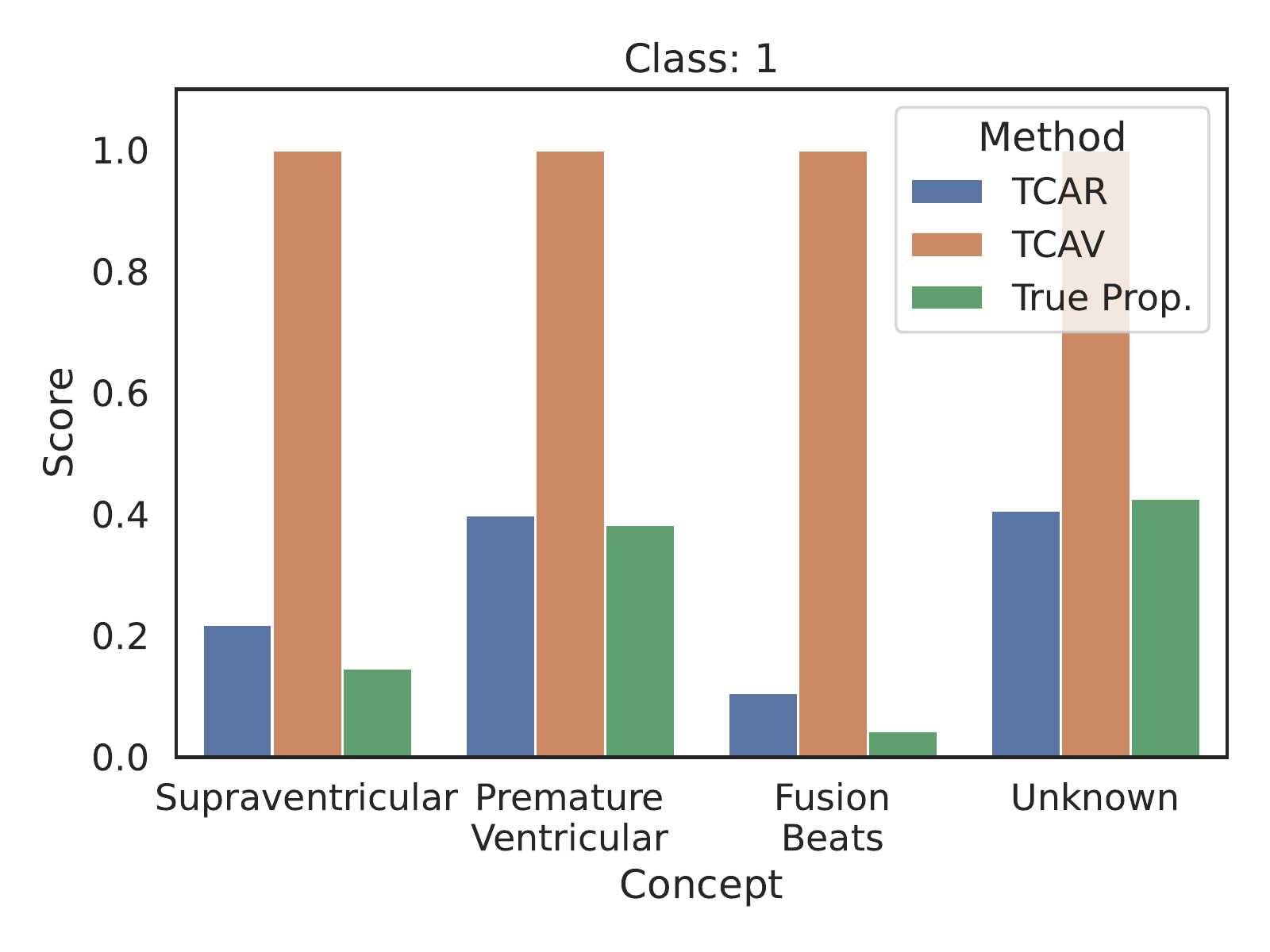}
		\caption{ECG Abnormal Heartbeat}
	\end{subfigure}
	\begin{subfigure}[b]{0.32\textwidth} 
		\centering
		\includegraphics[width=\textwidth]{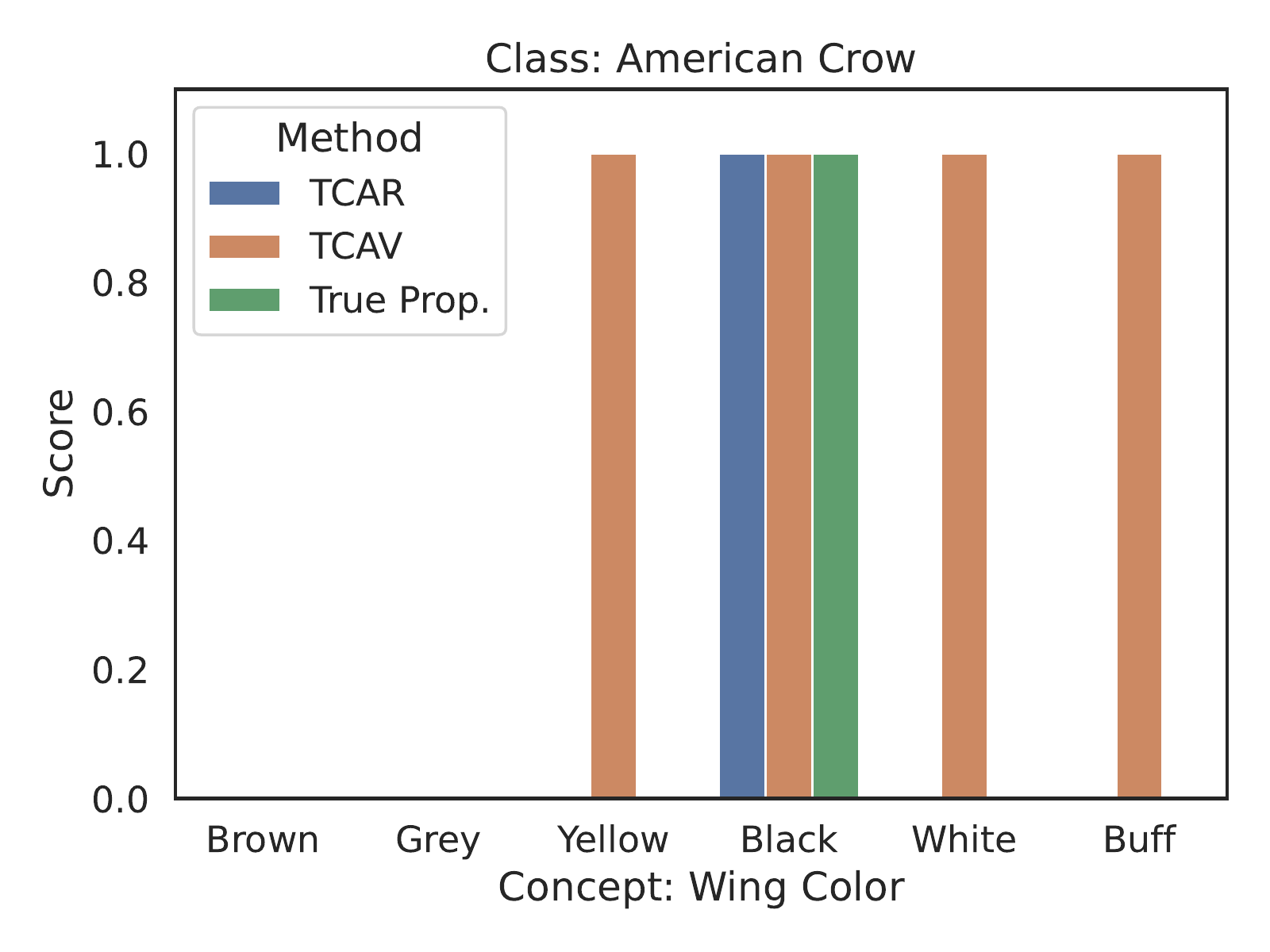}
		\caption{CUB American Crow}
	\end{subfigure}
	\caption{Examples of global concept-based explanations.}
	\label{fig:global_explanation}
\end{figure}

\textbf{Methodology.} The purpose of this experiment is to assess if the aforementioned concept classifiers create meaningful global association between classes and concepts. For each model, we now focus our study on the penultimate layer.  We compute the TCAV and TCAR scores for each $(\mathrm{class}, \mathrm{concept})$ pair over the whole testing set. Ideally, these scores should be correlated to the true proportion (computed with the human concept annotations) of examples from the class that exhibit the concept. To that aim, we report the Pearson correlation $r$ between each score and this true proportion in Table~\ref{tab:global_explanations}. To illustrate those global explanations, we also provide the scores for one class of each dataset in Figure~\ref{fig:global_explanation} (for CUB, we restrict to wing colour concepts, other examples are reported in Appendix~\ref{appendix:further_examples}).

\begin{wraptable}{r}{.5\textwidth}
	\caption{Evaluation of global explanations.}
	\label{tab:global_explanations}
	\centering
	\resizebox{.5\textwidth}{!}{
	\begin{tabular}{ccc}
		\toprule
		Dataset     & $r(\mathrm{TCAR}, \mathrm{True Prop.})$     & $r(\mathrm{TCAV}, \mathrm{True Prop.})$ \\
		\midrule
		MNIST & \textbf{1.00} & .60 \\
		ECG & \textbf{.99} & .74 \\
		CUB & \textbf{.86} & .52 \\ 
		\bottomrule
	\end{tabular}}
\end{wraptable}
\textbf{Analysis.} The TCAR scores better correlate with the true presence of concepts. This difference can be understood by looking at the examples from Figure~\ref{fig:global_explanation}. We note that TCAV scores tend to predict nonexistent associations (e.g. yellow wings for American crows) and miss existing associations (e.g. curvature for digit 2). For ECG, we note that TCAR does capture the fact that some concepts (like fusion beats) are less represented within the class, while TCAV does not. In all of these cases, TCAV explanations might give the impression that the model did not learn meaningful class-concept associations. The TCAR analysis leads to the opposite conclusion. Since we have established that TCAR is built upon more accurate concept classifiers, it seems that models indeed learn concepts as intended, in spite of what TCAV explanations suggest.  

\takeaway{2} TCAR scores more faithfully reflect the true association between classes and concepts.

\subsubsection{Coherency of concept-based feature importance} \label{subsec:car_feature_importance_validation}

\begin{figure}
	\centering
	\begin{subfigure}[b]{0.32\textwidth} 
		\centering
		\includegraphics[width=\textwidth]{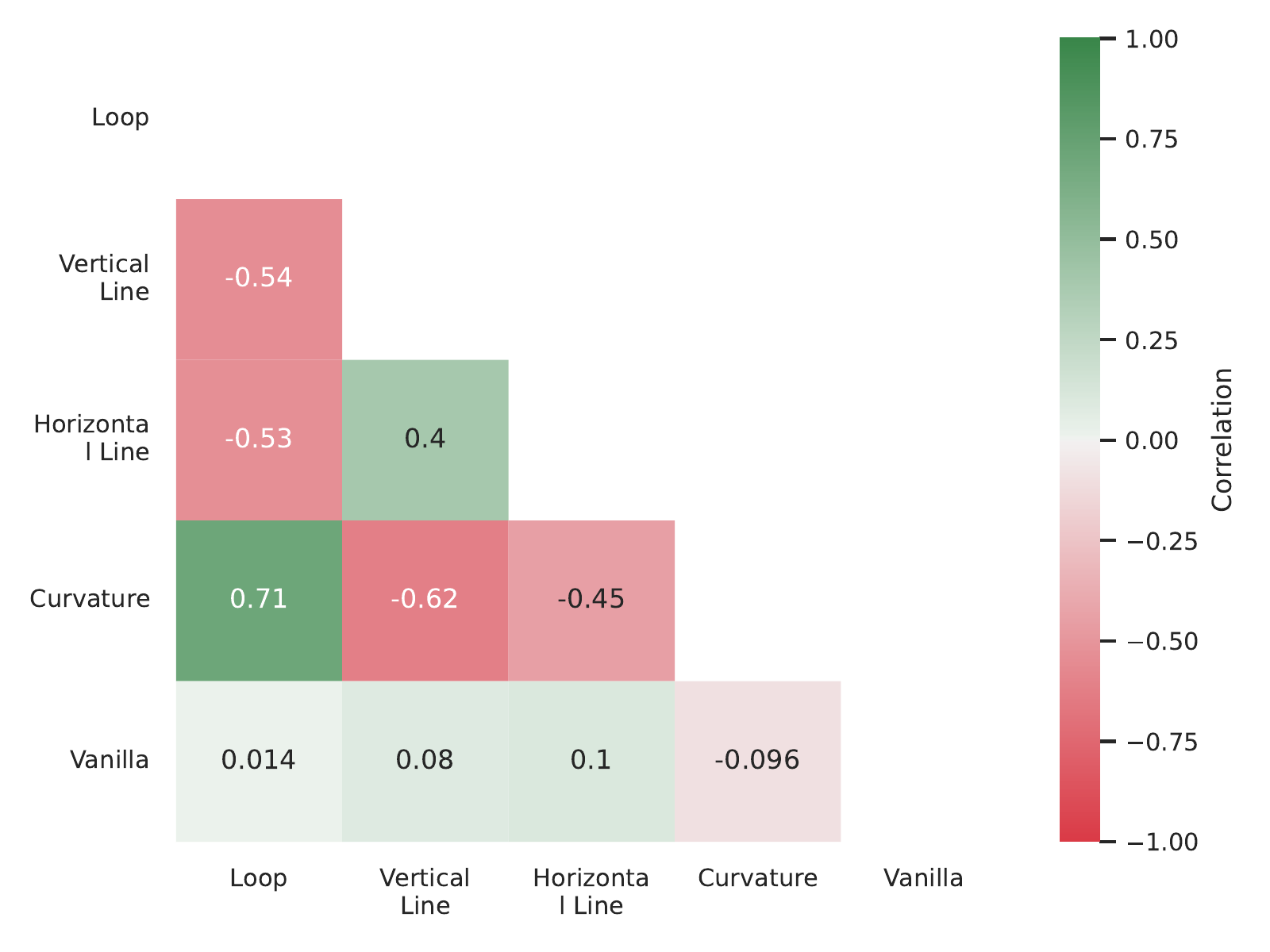}
		\caption{MNIST}
	\end{subfigure}	
	\begin{subfigure}[b]{0.32\textwidth} 
		\centering
		\includegraphics[width=\textwidth]{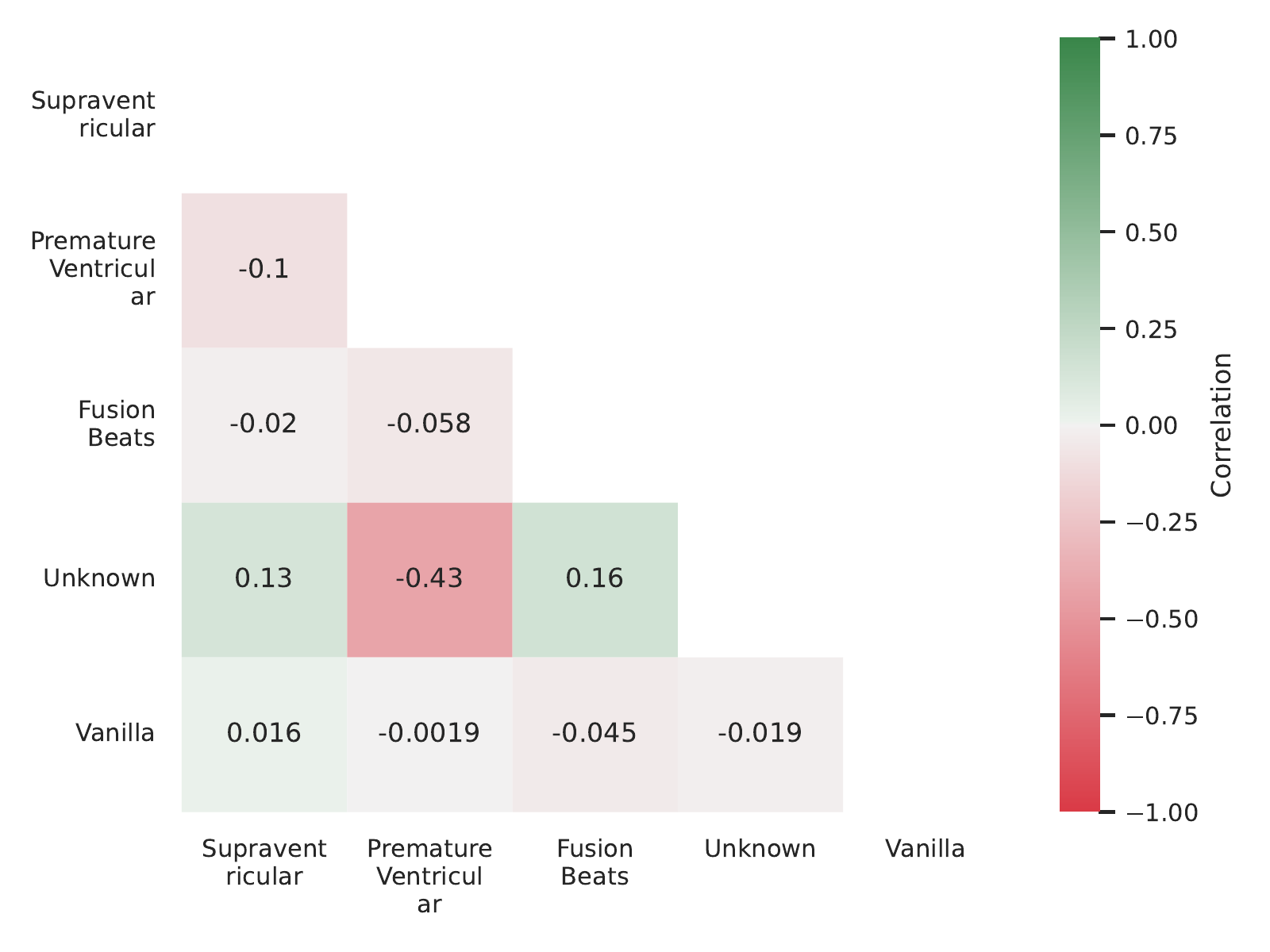}
		\caption{ECG}
	\end{subfigure}
	\begin{subfigure}[b]{0.32\textwidth} 
	\centering
	\includegraphics[width=\textwidth]{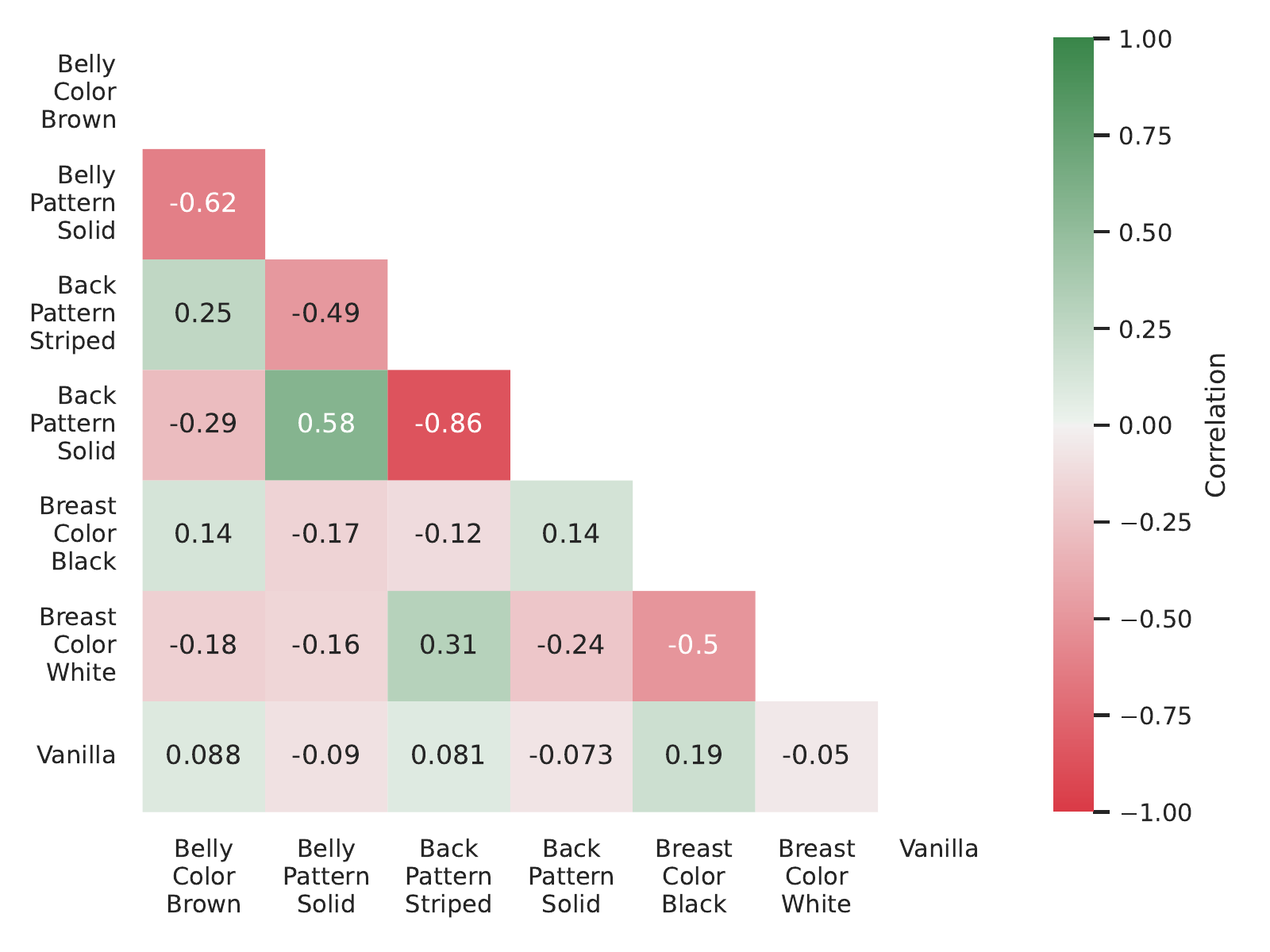}
	\caption{CUB}
	\end{subfigure}
	\caption{Correlations between concept feature importance.}
	\label{fig:attr_corr}
\end{figure}

\textbf{Methodology.} Focusing again on the model's penultimate layer, we are now interested in feature importance. The evaluation of feature importance methods is a notoriously difficult problem~\cite{Kindermans2019}. To perform an analysis in line with our concept-based explanations, we propose to check that our concept-based feature importance meaningfully captures features that are concept-specific. We analyse this through 2 desiderata: \circled{1} concept-based feature importance is not equivalent to the model (vanilla) feature importance and \circled{2} only concepts that can be identified with similar features should yield correlated feature importance. To evaluate these desiderata empirically, we compute our CAR-based Integrated Gradients $a_i(\rho^c \circ \g , \x)$ for each concept $c \in [C]$ and for each example $\x \in \D_{\mathrm{test}}$ from the test set. Similarly, we compute the vanilla Integrated Gradients $a_i(\f, \x)$ for the model~\footnote{The vanilla Integrated Gradients are computed for the model estimation of the true class probability.}. We quantitatively compare these various feature importance scores by computing their Pearson correlation $r$ as in~\cite{LeMeur2013, Crabbe2022}. We report the results in Figure~\ref{fig:attr_corr} (again, we selected a subset of 6 concepts for CUB).
 
\textbf{Analysis.} First, we observe that CAR-based feature importance correlates weakly with vanilla feature importance ($\card{r}<.25$ for all datasets). This confirms that desideratum~\circled{1} is fulfilled. Then, we note that most of the CAR-based feature importance scores are decorrelated or weakly correlated with each other. The counterexamples that we observe indeed correspond to concepts that can be identified with similar features. A first example is the positive correlation between the loop and the curvature concepts for MNIST ($r=.71$), both concepts are generally associated to curved symbols. Another example is the negative correlation between striped and solid back patterns for CUB birds ($r=-.86$), those concepts are mutually exclusive and identified by inspecting the bird's back. Those examples support that CAR-based feature importance fulfils desideratum~\circled{2}.

\takeaway{3} CAR-based feature importance is concept-specific and captures concept associations.

\subsection{Use Case: Machine Learning Model Rediscovering Known Medical Concepts} \label{subsec:seer_use_case}
We will now describe a use case of the CAR formalism introduced in this paper. We stress that the literature already contains numerous use cases of CAV concept-based explanations, especially in the medical setting~\cite{Graziani2019, Clough2019, Mincu2020}. Since CARs generalize CAVs, it goes without saying that they apply to these use cases. Rather than repeating existing usage of concept-based explanations, we discuss an alternative use case motivated by recent trends in machine learning. With the successes of deep models in various scientific domains~\cite{Jumper2021, Davies2021, Kirkpatrick2021, Degrave2022}, we witness an increasing overlap between scientific discovery and machine learning. While this new trend opens up fascinating opportunities, it comes with a set of new challenges. The evaluation of machine learning models is arguably one of the most important of these challenges. In a scientific context, the canonical machine learning approach to validate models (out-of-sample generalization) is likely to be insufficient. Beyond generalization on unseen data, the scientific validity of a model requires consistency with established scientific knowledge~\cite{Roscher2020}. We propose to illustrate how our CARs can be used in this context.

\textbf{Dataset.} We use the data collected with the Surveillance, Epidemiology, and End Results (SEER) Program. The dataset~\cite{Adamo2017} contains a US population-based cohort of 171,942 men diagnosed with non-metastatic prostate cancer between Jan 1, 2000, and Dec 31, 2016. Each patient is described by age, the results of a prostate-specific antigen blood test (PSA), the clinical stage of its tumour and two Gleason scores (primary and secondary). Each patient also has a label that indicates if they died because of their prostate cancer. We train a multilayer perceptron (MLP) to predict the patient's mortality on $90 \%$ of the data and test on the remaining $10 \%$. For a more detailed description of the data and the model, please refer to Appendix \ref{appendix:use_case}.
 
\textbf{Concepts.} Doctors use an established grading system to predict how likely the cancer is to spread~\cite{Gordetsky2016}. This system assigns to each patient a grade between 1 and 5. The probability that the cancer spreads increases with this grade. It can be computed from the two Gleason scores (more details in Appendix~\ref{appendix:use_case}). We can consider each of these 5 grades as a concept. We note that this grade is not explicitly part of the input features of the MLP that we trained. Our purpose is to assess if the MLP implicitly discovered those grades in order to predict the patient's mortality. If this happens to be the case, this would demonstrate that the MLP is in-line with existing medical knowledge.  

\begin{figure}
	\centering
	\begin{subfigure}[b]{0.49\textwidth} 
		\centering
		\includegraphics[width=\textwidth]{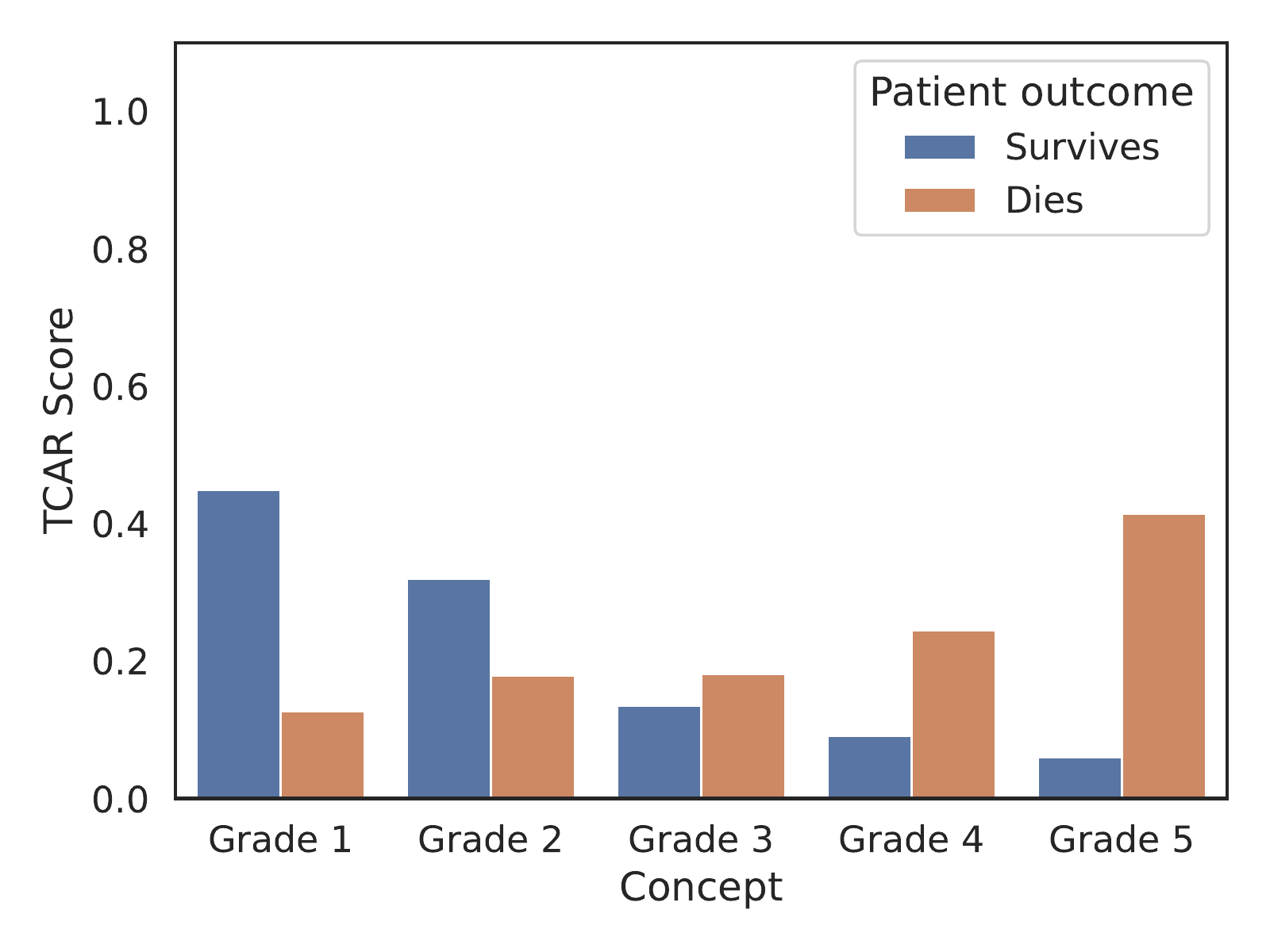}
		\caption{TCAR grades vs mortality.} \label{fig:seer_global}
	\end{subfigure}	
	\begin{subfigure}[b]{0.49\textwidth} 
		\centering
		\includegraphics[width=\textwidth]{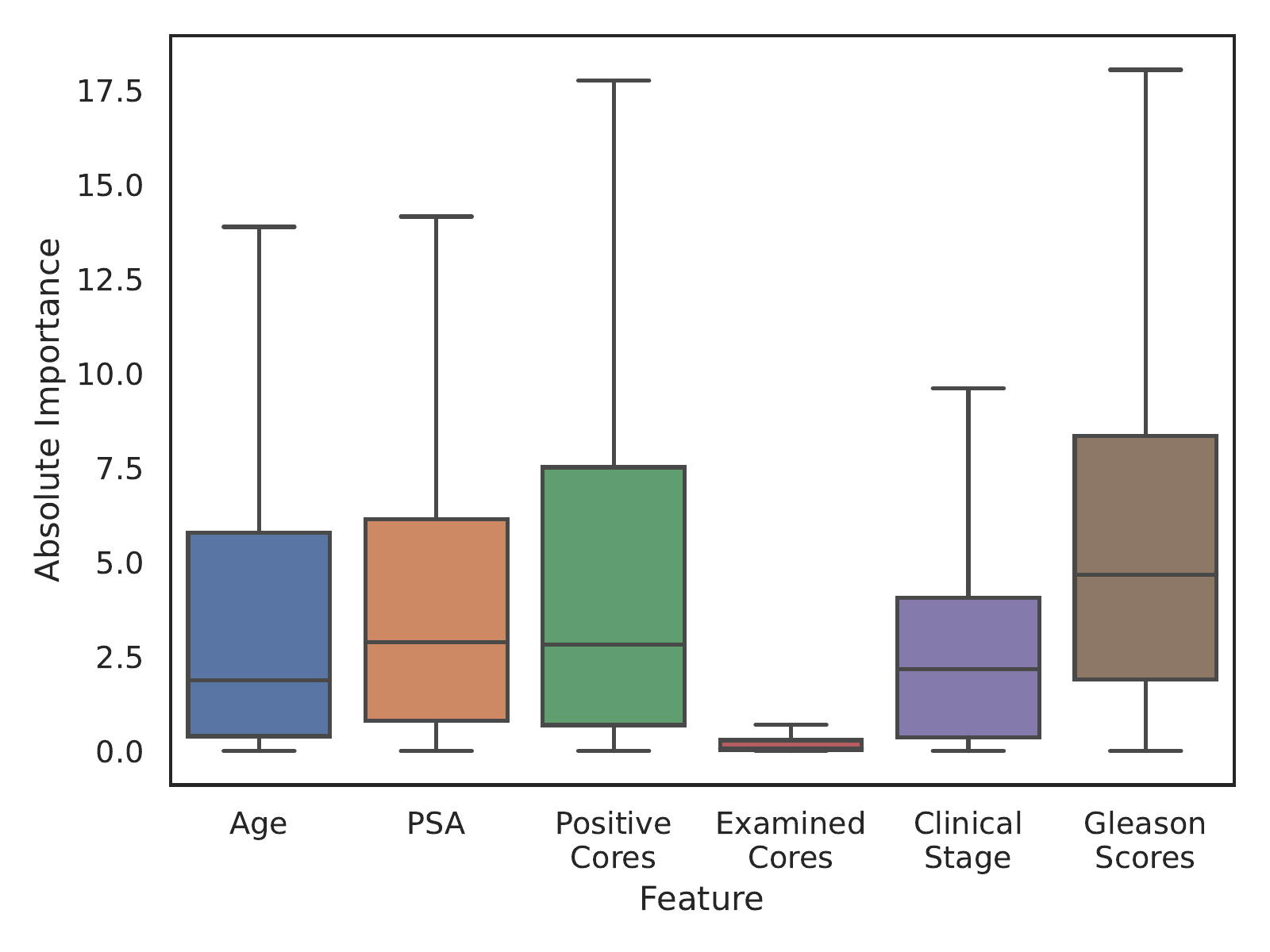}
		\caption{Grades feature importance.} \label{fig:seer_feature_importance}
	\end{subfigure}
	\caption{CAR explanations with the prostate cancer grades as concepts.}
\end{figure}

\textbf{Methodology.} As in the previous section, we are going to use the output of the MLP's penultimate layer as a representation space. We fit a CAR classifier (SVC with linear kernel) to discriminate the concepts sets $\Pos^c, \Neg^c$ for each grade $c \in [5]$. Both of those sets contain $N^c = 250$ patients sampled from the training set. We then proceed to several verifications. \circled{1} We determine if the grades have been learned by measuring the accuracy of each CAR classifier on a holdout balanced concept set $\T^c$ of size 100 sampled from the model's testing set. \circled{2} We check that each grade is associated to the appropriate outcome by reporting the TCAR scores between each pair $(\mathrm{mortality}, \mathrm{grade})$ in Figure~\ref{fig:seer_global}. \circled{3} We verify that grades are identified with the right features by plotting a summary of the absolute feature importance scores $\card{a_i(\rho^c \circ \g, \x)}$ on the test set $\x \in \D_{\mathrm{test}}$ in Figure~\ref{fig:seer_feature_importance}.

\textbf{Analysis.} Let us summarize the findings for each of the above points. \circled{1}~All of the CAR classifiers generalize well on the test set (their accuracy ranges from $90\%$ to $100\%$). This strongly suggests that the MLP implicitly separates patients with different grades in representation space. \circled{2}~Figure~\ref{fig:seer_global} demonstrates that the model associates higher grades with higher mortality. This is in line with the clinical interpretation of the grades. \circled{3}~Figure~\ref{fig:seer_feature_importance} suggests that the Gleason scores constitute the most important features overall (highest quantiles) for the model to discriminate between grades. This is consistent with the fact that grades are computed with the Gleason scores. We conclude that the MLP implicitly identifies the patient's grades with high accuracy and in a way that is consistent with the medical literature.  

\takeaway{4} The CAR formalism can reliably support scientific evaluation of a model.

	\section{Conclusion} \label{sec:conclusion}
	We introduced Concept Activation Regions, a new framework to
relax the linear separability assumption underlying the Concept Activation Vector formalism. We showed that our
framework guarantees crucial properties, such as invariance with respect to latent symmetries. Through extensive validation on several datasets, we verified that \circled{1}~Concept Activation Regions better capture the distribution of concepts across the model's representation space, \circled{2}~The resulting global explanations are more consistent with human annotations and \circled{3}~Concept Activation Regions permit to define concept-specific feature importance that is consistent with human intuition. Finally, through a use case involving prostate cancer data, we show the neural network can implicitly rediscover known scientific concepts, such as the prostate cancer grading system.

Many important points that were not covered in the main paper can be found in the appendices. In Appendix~\ref{appendix:car_classifiers}, we discuss how to tune the various hyperparameters of the CAR classifiers. In Appendix~\ref{appendix:tcar}, we explain how to generalize concept activation vectors to nonlinear decision boundaries. In Appendix~\ref
{appendix:robustness}, we show that the CAR explanations are robust to adversarial perturbations and background shifts. In Appendix~\ref{appendix:unsupervised_concepts}, we demonstrate that CAR explanations can be used to relate abstract concepts discovered by self-explaining neural networks with human concepts. Finally, Appendix~\ref{appendix:car_nlp} illustrates how CAR explanations allow us to probe language models. 
 
We believe that our extended concept explainability framework opens up many interesting avenues for future work. A first one would be to probe state of the art neural networks with an approach similar to Section~\ref{sec:experiments}. In particular, it would be interesting to analyse if improving model performance is associated with a better encoding of human concepts. A second one would be to analyze how concept discovery~\cite{Ghorbani2019} can benefit from our generalized notion of concept activation. A more fine-grained characterization of a model's latent space is likely to improve the surfaced concept. A third one, as suggested
by Section~\ref{subsec:seer_use_case}, would be to use concept-based explanations to make a better scientific assessment of neural networks. Indeed, consistency with well-established knowledge is crucial for a scientific model to be accepted.

	\begin{ack}
		The authors are grateful to Fergus Imrie, Yangming Li and
the 3 anonymous NeurIPS reviewers for their useful comments
on an earlier version of the manuscript. Jonathan Crabbé is
funded by Aviva and Mihaela van der Schaar by the Office
of Naval Research (ONR), NSF 172251.
	\end{ack}
	
	\nocite{*}
	\bibliographystyle{unsrt}
	\bibliography{main.bib}
	\clearpage
	
	\section*{Checklist}
	
	\begin{enumerate}

		\item For all authors...
		\begin{enumerate}
			\item Do the main claims made in the abstract and introduction accurately reflect the paper's contributions and scope?
			\answerYes{All the claims from the abstract and Section~\ref{sec:introduction} are verified empirically in Section~\ref{sec:experiments}. The theoretical claims are demonstrated in Appendices~\ref{appendix:car_feature_importance}~and~\ref{appendix:isometry_invariance}.}
			\item Did you describe the limitations of your work?
			\answerYes{The assumption for our concept-based explanations to be valid are clearly stated in Assumption~\ref{assumption:smoothness}. Our method only applies to neural networks, which is stated in Section~\ref{sec:car}.}
			\item Did you discuss any potential negative societal impacts of your work?
			\answerYes{Our work improves the transparency of deep neural networks. This has many beneficial societal impacts, as described in Section~\ref{sec:introduction}. We do not see any potential negative societal impact to this approach. We have carefully reviewed the points from the ethical guideline and none of the mentioned negative impact seems to apply to our work.}  
			\item Have you read the ethics review guidelines and ensured that your paper conforms to them?
			\answerYes{We have read the ethics review guidelines and confirm that our paper conforms to them.}
		\end{enumerate}

		\item If you are including theoretical results...
		\begin{enumerate}
			\item Did you state the full set of assumptions of all theoretical results?
			\answerYes{The assumptions in Propositions~\ref{prop:completeness}~and~\ref{prop:isometry_invariance} are clearly stated in Appendices~\ref{appendix:car_feature_importance}~and~\ref{appendix:isometry_invariance} respectively.}
			\item Did you include complete proofs of all theoretical results?
			\answerYes{The proofs for Propositions~\ref{prop:completeness}~and~\ref{prop:isometry_invariance} are given in Appendices~\ref{appendix:car_feature_importance}~and~\ref{appendix:isometry_invariance} respectively.}
		\end{enumerate}

		\item If you ran experiments...
		\begin{enumerate}
			\item Did you include the code, data, and instructions needed to reproduce the main experimental results (either in the supplemental material or as a URL)?
			\answerYes{The full code is available at \url{https://github.com/JonathanCrabbe/CARs} and  \url{https://github.com/vanderschaarlab/CARs}. The implementation of our method closely follows Algorithms~\ref{alg:car_classifier},~\ref{alg:tcar_class}~and~\ref{alg:car_feature_importance} in the appendices. All the details to reproduce the experimental results are given in Appendices~\ref{appendix:empirical_evaluation}~and~\ref{appendix:use_case}.}
			\item Did you specify all the training details (e.g., data splits, hyperparameters, how they were chosen)?
			\answerYes{All the training details are specified in Appendices~\ref{appendix:empirical_evaluation}~and~\ref{appendix:use_case}.}
			\item Did you report error bars (e.g., with respect to the random seed after running experiments multiple times)?
			\answerYes{Figure~\ref{fig:overall_concept_acc} measures the accuracy of our method over several runs. All the runs are aggregated together in the form of a box-plot.}
			\item Did you include the total amount of compute and the type of resources used (e.g., type of GPUs, internal cluster, or cloud provider)?
			\answerYes{Our computing resources are described in Appendices~\ref{appendix:empirical_evaluation}~and~\ref{appendix:use_case}.} 
		\end{enumerate}

		\item If you are using existing assets (e.g., code, data, models) or curating/releasing new assets...
		\begin{enumerate}
			\item If your work uses existing assets, did you cite the creators?
			\answerYes{We have included a citation for the MNIST dataset~\cite{LeCun1998}, for the ECG dataset~\cite{Goldberger2000}, for the CUB dataset~\cite{Branson2011}, for the SEER dataset~\cite{Adamo2017} and for the InceptionV3 model~\cite{Szegedy2015}.}
			\item Did you mention the license of the assets?
			\answerYes{The licenses are mentioned in Appendices~\ref{appendix:empirical_evaluation}~and~\ref{appendix:use_case}.}
			\item Did you include any new assets either in the supplemental material or as a URL?
			\answerNA{We have used only existing assets.}
			\item Did you discuss whether and how consent was obtained from people whose data you're using/curating?
			\answerNA{All the datasets that we use are publicly available.}
			\item Did you discuss whether the data you are using/curating contains personally identifiable information or offensive content?
			\answerNA{The medical datasets we use are public and have been de-identified.}
		\end{enumerate}

		\item If you used crowdsourcing or conducted research with human subjects...
		\begin{enumerate}
			\item Did you include the full text of instructions given to participants and screenshots, if applicable?
			\answerNA{}
			\item Did you describe any potential participant risks, with links to Institutional Review Board (IRB) approvals, if applicable?
			\answerNA{}
			\item Did you include the estimated hourly wage paid to participants and the total amount spent on participant compensation?
			\answerNA{}
		\end{enumerate}

	\end{enumerate}

	
	\clearpage
	\appendix
	
	\section{CAR Classifiers} 
	\label{appendix:car_classifiers}
	In this appendix, we provide some details about our CAR classifiers.

\textbf{Implementation.} To determine the concept activation regions, we fit a SVC $s^c_{\kappa}$ for each concept $c \in [C]$. This process is described in Algorithm~\ref{alg:car_classifier}. 

\begin{algorithm}
	\setstretch{1.35}
	\caption{Fit CAR Classifier}\label{alg:car_classifier}
	\KwIn{Neural network $\f = \l \circ \g : \X \rightarrow \H \rightarrow \Y$, kernel function $\kappa : \H^2 \rightarrow \R^+$, concept positives $\Pos^c \subset \X$, concept negatives $\Neg^c \subset \X$}
	\KwOut{CAR SVC Classifier $s^c_{\kappa} : \H \rightarrow \{0,1\}$}
	$s^c_{\kappa} \gets \mathrm{SVC}(\kappa)$ \Comment*[r]{Initialize SVC classifier} 
	$\D \gets \left\{ \left(\g(\x), \mathbb{I}_{\Pos^c}(\x) \right) \mid \x \in \Pos^c \bigsqcup \Neg^c \right\}$ \Comment*[r]{Assemble SVC training set}
	$(s^c_{\kappa}).\mathrm{fit}(\D)$ \Comment*[r]{Fit SVC classifier}
	\KwRet{$s^c_{\kappa}$}
\end{algorithm}

where $\mathbb{I}_{\Pos^c}$ is the indicator function on the set $\Pos^c$. Our implementation leverages the SVC class from scikit-learn~\cite{Scikit-learn}. We use the default hyperparameters for each CAR classifier that appears in our experiments. The resulting classifier $s^c_{\kappa}$ can then be used to assess if a test point $\x \in \X$ has a representation $\g(\x) \in \H$ that falls in the CAR $\H^c$ or not:
\begin{align*}
	\g(\x) \in \left\{
	\begin{array}{ll}
		\H^c & \mathrm{if} \ s^c_{\kappa} \circ \g (\x) = 1 \\
		\H^{\neg c} & \mathrm{if} \ s^c_{\kappa} \circ \g (\x) = 0. \\
	\end{array}
	\right. 
\end{align*}

\begin{figure}[b]
	\centering
	\includegraphics[width=.6\textwidth]{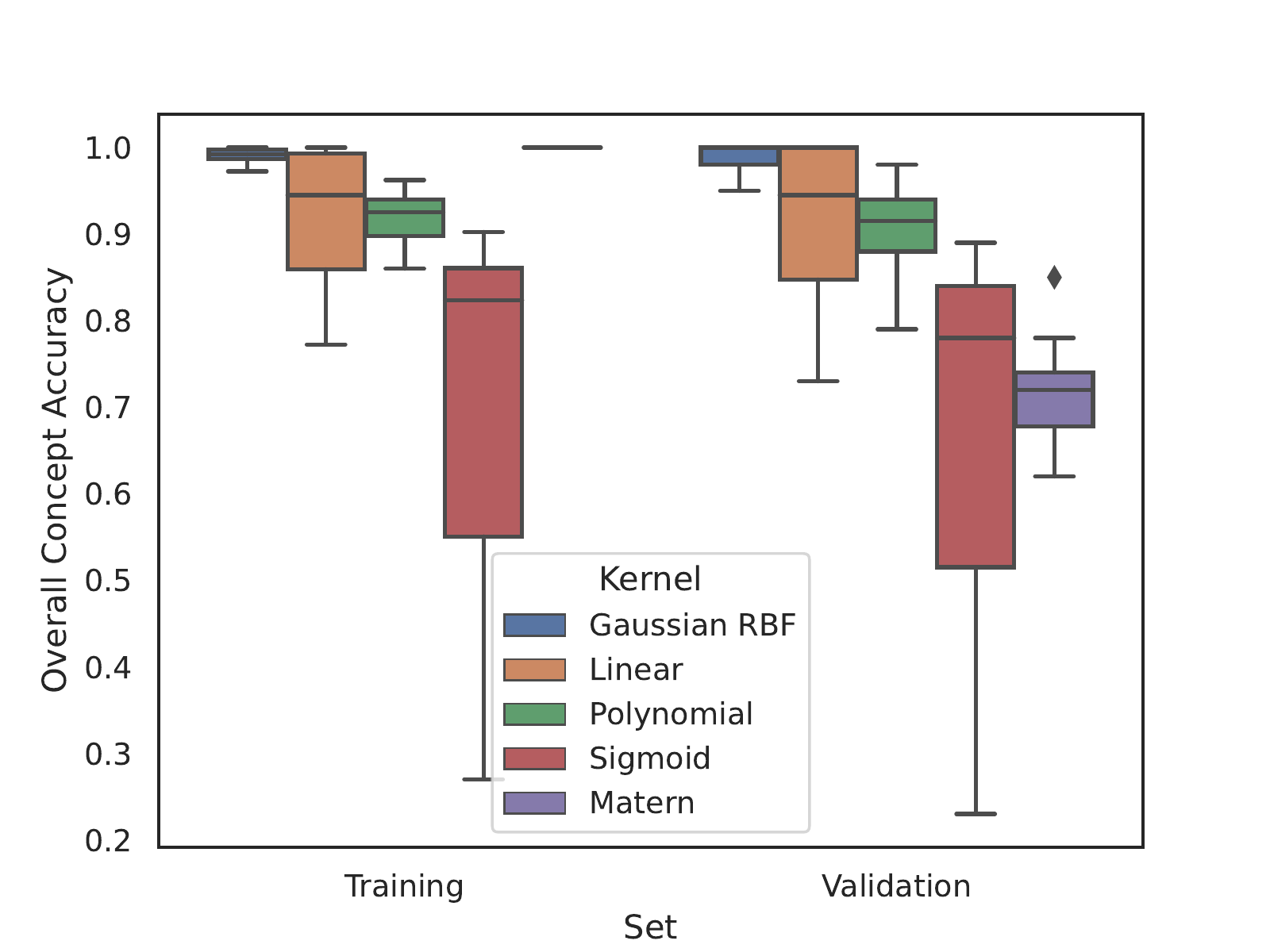}
	\caption{Accuracy of CAR classifiers for MNIST concepts with different kernels.}
	\label{fig:mnist_kernel_sensitivity}
\end{figure}

\textbf{Choosing the kernel.} Algorithm~\ref{alg:car_classifier} requires the user to specify a kernel function $\kappa$. To select this kernel, a good approach is to train several SVCs $s^c_{\kappa}$ with different kernels $\kappa$ and see how each SVC generalizes on a validation set. We illustrate this process with MNIST in Figure~\ref{fig:mnist_kernel_sensitivity}. We note that the Gaussian RBF kernel outperforms the other kernels on the validation set, although the Matern kernel achieves perfect accuracy on the training set. This highlights the importance of evaluating a CAR classifier on a held-out dataset to make sure that the related CAR offers a good description of how concepts are distributed in the latent space $\H$. In our experiments, we found that Gaussian RBF kernels are often the most interesting option.  

\begin{figure}
	\centering
	\includegraphics[width=.6\textwidth]{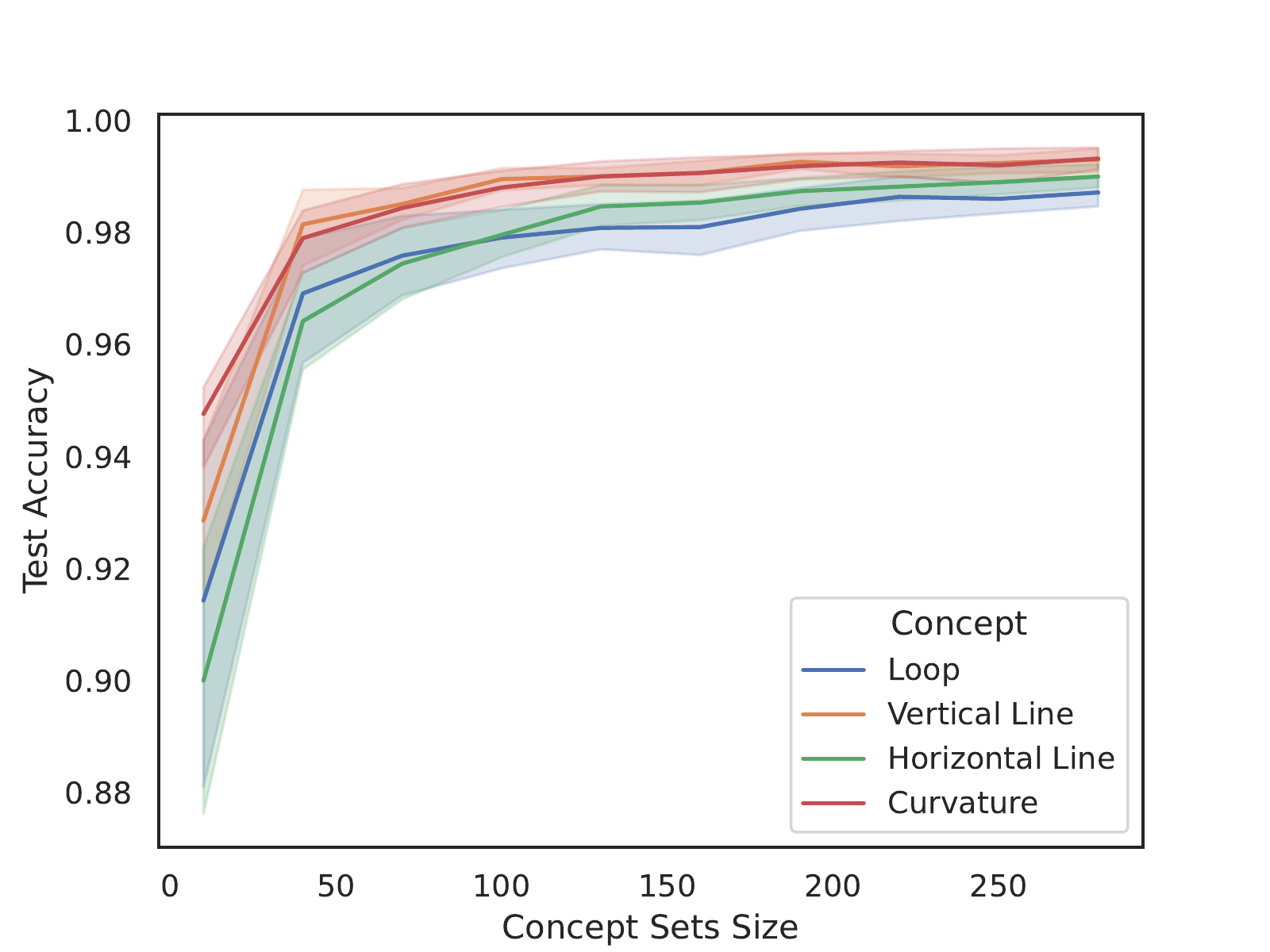}
	\caption{Accuracy of CAR classifiers for MNIST concepts for various concept set size $N^c$ (average and $95\%$ confidence intervals based on 10 runs).}
	\label{fig:mnist_concept_size_impact}
\end{figure}

\textbf{Concept set size.} Algorithm~\ref{alg:car_classifier} requires the user to specify concepts sets $\Pos^c$ and $\Neg^c$. What size $N^c = \card{\Pos^c} = \card{\Neg^c}$ should we choose for these concept sets? It seems logical that larger concepts sets are more likely to yield more accurate CARs. To study this experimentally, we propose to fit several concept classifiers for MNIST by varying the size $N^c$ of their training concept sets $\Pos^c$ and $\Neg^c$. We report the results in Figure~\ref{fig:mnist_concept_size_impact}. As we can see, the curves flatten above $N^c > 200$. Increasing the concept sets size beyond this point does not improve the accuracy of the resulting CAR classifier. We recommend to acquire concept examples until the performance of the CAR classifier stabilizes. In our experiments, we found that $N^c = 200$ examples is often sufficient to obtain accurate CAR classifiers. 

\textbf{Tuning hyperparameters.} In the case where the user desires a CAR classifier that generalizes as well as possible, tuning these hyperparameters might be useful. We propose to tune the kernel type, kernel width and error penalty of our CAR classifiers $s^c_{\kappa}$ for each concept $c \in [C]$ by using Bayesian optimization
and a validation concept set:

\begin{enumerate}
	\item Randomly sample the hyperparameters from an initial prior distribution $\theta_h \sim P_{\mathrm{prior}}$.
	\item Split the concept sets $\mathcal{P}^c, \mathcal{N}^c$ into training concept sets $\mathcal{P}^c_{\mathrm{train}}, \mathcal{N}^c_{\mathrm{train}}$ and validation concept sets $\mathcal{P}^c_{\mathrm{val}}, \mathcal{N}^c_{\mathrm{val}}$.
	\item For the current value $\theta_h$ of the hyperparameters, fit a model $s^c_{\kappa}$ to discriminate
	the training concept sets $\mathcal{P}^c_{\mathrm{train}}, \mathcal{N}^c_{\mathrm{train}}$.
	\item Measure the accuracy $\mathrm{ACC}_{\mathrm{val}} = \frac{\sum_{x \in \mathcal{P}^c_{\mathrm{val}}} \boldsymbol{1}(s^c_{\kappa}\circ \ g(x)=1) \ + \
		\sum_{x \in \mathcal{N}^c_{\mathrm{val}}} \boldsymbol{1}(s^c_{\kappa}\circ \ g(x)=0)}{|\mathcal{P}^c_{\mathrm{val}} \ \cup \ \mathcal{N}^c_{\mathrm{val}}|}$.
	\item Update the current hyperparameters $\theta_h$ based on $\mathrm{ACC}_{\mathrm{val}}$
	using Bayesian optimization (Optuna in our case).
	\item Repeat 3-5 for a predetermined number of trials.
\end{enumerate}

We applied this process to the CAR accuracy experiment (same setup as in Section 3.1.1 of the main paper) to tune the CAR classifiers for the CUB concepts. Interestingly, we noticed no improvement
with respect to the CAR classifiers reported in the main paper: tuned and standard CAR classifier have an average accuracy of $(93 \pm .2) \%$ for the penultimate Inception layer.
This suggests that the accuracy of CAR classifiers is not heavily dependant on hyperparameters
in this case.

	\section{TCAR Global Explanations}  
	\label{appendix:tcar}
	In this appendix, we provide some details about the TCAR scores. 

\textbf{Implementation.} When the CAR classifiers are available, they permit to compute TCAR scores through Algorithm~\ref{alg:tcar_class}.

\begin{algorithm}
	\setstretch{1.35}
	\caption{Compute Class-Concept TCAR Score}\label{alg:tcar_class}
	\KwIn{Neural network $\f = \l \circ \g : \X \rightarrow \H \rightarrow \Y$, CAR classifier $s^c_{\kappa} : \H \rightarrow \{0,1\}$, set of examples $\D_k \subset \X$ of a given class $k \in [d_Y]$}
	\KwOut{TCAR score $\TCAR^c_k \in [0,1]$}
	$\mathrm{count} \gets 0$ \Comment*[r]{Initialize concept positive count}
	\For{$\x \in \D_k$}{
		\If{$s^c_{\kappa} \circ \g (\x) = 1$}{
			$\mathrm{count} \gets \mathrm{count} + 1$ \Comment*[r]{Increment count if example in the CAR}
		}
	}
	$\TCAR^c_{k} \gets \nicefrac{\mathrm{count}}{\card{\D_k}} $ \Comment*[r]{Normalize count to get TCAR}
	\KwRet{$\TCAR^c_{k}$}
\end{algorithm}

\textbf{TCAR between concepts.} Up until now, we have discussed TCAR scores that indicate how models relate classes to concepts. It is possible to define a similar score to estimate how models relate two concepts with each other. Given a set $\D \subset \X$ of examples, we define the TCAR score associated to the concepts $c_1, c_2 \in [C]$ as the ratio $$\TCAR^{c_1, c_2} = \frac{\card{\g(\D) \bigcap \H^{c_1} \bigcap \H^{c_2} }}{\card{\g(\D) \bigcap (\H^{c_1} \bigcup \H^{c_2})}}.$$ Again, $\TCAR = 0$ corresponds to no overlap and $\TCAR = 1$ describes a perfect overlap. We note that the concept-concept TCAR score can be interpreted as a Jaccard index between the sets $\g(\D) \bigcap \H^{c_1}$ and $\g(\D) \bigcap \H^{c_2}$. This score is symmetric with respect to the concepts: $\TCAR^{c_1, c_2} = \TCAR^{c_2, c_1}$. The computation of this score is done as in Algorithm~\ref{alg:tcar_concept}.

\begin{algorithm}
	\setstretch{1.35}
	\caption{Compute Concept-Concept TCAR Score}\label{alg:tcar_concept}
	\KwIn{Neural network $\f = \l \circ \g : \X \rightarrow \H \rightarrow \Y$, CAR classifiers $s^{c_1}_{\kappa} : \H \rightarrow \{0,1\}$ and $s^{c_2}_{\kappa} : \H \rightarrow \{0,1\}$, set of examples $\D \subset \X$}
	\KwOut{TCAR score $\TCAR^{c_1 , c_2} \in [0,1]$}
	$\mathrm{numerator} \gets 0$ \Comment*[r]{Initialize score numerator}
	$\mathrm{denominator} \gets 0$ \Comment*[r]{Initialize score denominator}
	\For{$\x \in \D$}{
		\If{$s^{c_1}_{\kappa} \circ \g (\x) = 1$ and $s^{c_2}_{\kappa} \circ \g (\x) = 1$}{
			$\mathrm{numerator} \gets \mathrm{numerator} + 1$ \Comment*[r]{Increment if example in both CARs}
		}
		\If{$s^{c_1}_{\kappa} \circ \g (\x) = 1$ or $s^{c_2}_{\kappa} \circ \g (\x) = 1$}{
			$\mathrm{denominator} \gets \mathrm{denominator} + 1$ \Comment*[r]{Increment if example in one CAR}
		}
	}
	$\TCAR^{c_1 , c_2} \gets \nicefrac{\mathrm{numerator}}{\mathrm{denominator}} $ \Comment*[r]{Assemble the TCAR score}
	\KwRet{$\TCAR^{c_1 , c_2}$}
\end{algorithm}

\textbf{TCAR between MNIST concepts.} As an illustration, we compute concept-concept TCAR scores for the MNIST concepts and report the results in Figure~\ref{fig:mnist_tcar_inter_concept}. We see that the model relates concepts that tend to appear together (e.g. curvature and loop). Hence, concept-concept TCAR scores can serve as a proxy for the concept semantics encoded in a model's representation space. We note the similarity with the correlation between concept-based feature importance illustrated in Figure~\ref{fig:attr_corr}. The main difference is that concept-concept TCAR scores do not explicitly refer to input features.

\begin{figure}[b]
	\centering
	\includegraphics[width=.6\textwidth]{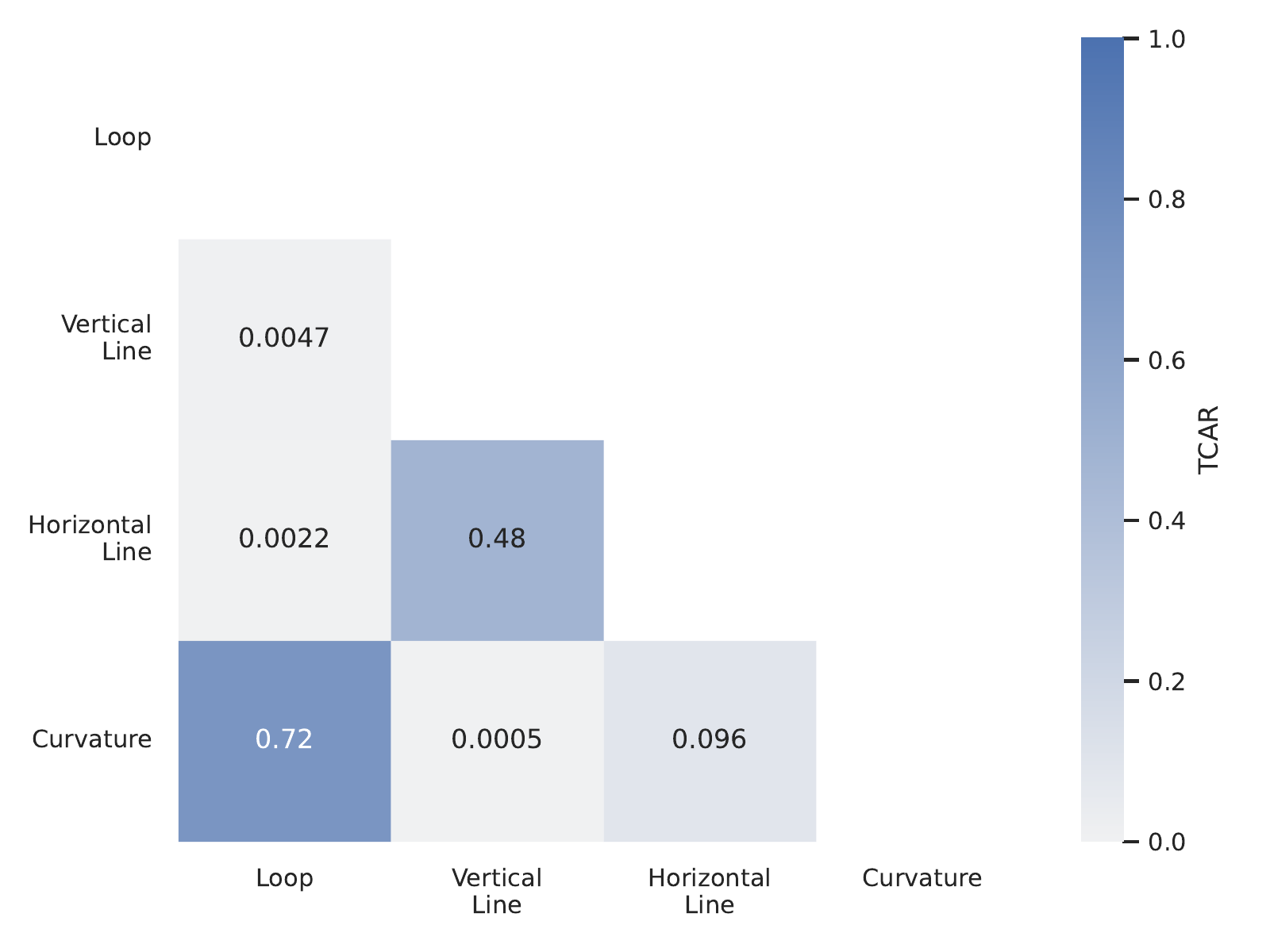}
	\caption{TCAR score between MNIST concepts.}
	\label{fig:mnist_tcar_inter_concept}
\end{figure}

\textbf{Generalizing concept sensitivity.} In our formalism, it is perfectly possible to define a \emph{local} concept activation vector through the concept density $\rho^c : \mathcal{H} \rightarrow \mathbb{R}^+$ defined in Definition~\ref{def:concept_density}. Indeed, the vector $\nabla_{\h}\rho^c[\h] \in \mathcal{H}$ points in the direction of the representation space $\mathcal{H}$ where the concept density (and hence the presence of the concept) increases. Hence, this vector can be interpreted as a \emph{local} concept activation vector. Note that this vector becomes global whenever we parametrize the concept density $\rho^c$ with a linear kernel $\kappa(\h_1, \h_2) = \h_1^{\intercal} \h_2$. Equipped with this generalized notion of concept activation vector, we can also generalize the CAV concept sensitivity $S^c_k$ by replacing the CAV $w^c$ by $\nabla_{\h}\rho^c[\h]$ for the representation $\h = \g(\x)$ of the input $\x \in \mathcal{X}$:  

\begin{align*}
	S^c_k(\x) \equiv (\nabla_{\h} \rho^c [\g(\x)])^{\intercal} (\nabla_{\h} l_k [\g(\x)]).
\end{align*}

In this way, all the interpretation provided by the CAV formalism are also available in the CAR formalism. 
	
	\section{CAR Feature Importance} \label{appendix:car_feature_importance}
	In this appendix, we provide some details about our concept-based feature importance.

\textbf{Implementation.} Concept-based feature importance uses the concept densities $\rho^c$ to attribute an importance score to each feature in order to confirm/reject the presence of a concept $c \in [C]$ for a given example $\x \in \X$. This process is described in Algorithm~\ref{alg:car_feature_importance}. We stress that features importance scores are computed with \emph{concept densities} and not with SVC classifiers. The reason for this is that SVCs are non-differentiable functions of the input and are therefore incompatible with gradient-based attribution methods such as Integrated-Gradients~\cite{Sundararajan2017}. One could argue that a sparse version of the density $\rho^c$ could be used by only allowing support vectors from the SVC to contribute. Our first implementation was relying on this approach. Unfortunately, this often led to vanishing importance scores. A possible explanation is the following: Gaussian radial basis function kernels $\kappa(\h_1 , \h_2) = \exp \left[ - (\gamma \norm[\H]{\h_1 - \h_2})^2\right]$ decay very quickly as $\norm[\H]{\h_1 - \h_2}$ increases. Hence, unless the example $\x$ we wish to explain has a representation $\g(\x)$ located near a support vector of the SVC classifier, the sparse density (and its gradients) vanishes. On the other hand, using the density from Definition~\ref{def:concept_density} allows us to incorporate the representations of all concept examples. This makes it more likely that the representation $\g(\x)$ is located near (at least) one of the representation that contributes to the density. This permits to solve the vanishing gradient problem that led to vanishing attributions. It goes without saying that this solution comes at the expense of having to compute a density that scales linearly with the size $N^c$ of the concept sets. In our implementation, we make this tractable by restricting to small concept sets ($N^c \leq 250$) and by saving the concept sets representations $\g(\Pos^c)$ and $\g(\Neg^c)$ to limit the number of queries to the model. We use Captum's implementation of feature importance methods~\cite{Kokhlikyan2020}. When using radial basis function kernels, we tune the kernel width $\gamma$ with  Optuna~\cite{Optuna} with $1,000$ trials so that the Parzen window classifier $\mathbb{I}_{\R^+} \circ \rho^c$, where $\mathbb{I}_{\R^+}$ denotes the indicator function on $\R^+$, accurately discriminates between positives representations $\g(\Pos^c)$ and negative representations $\g(\Neg^c)$.

\begin{algorithm}
	\setstretch{1.35}
	\caption{Compute concept-based feature importance}\label{alg:car_feature_importance}
	\KwIn{Neural network $\f = \l \circ \g : \X \rightarrow \H \rightarrow \Y$, kernel function $\kappa: \H^2 \rightarrow \R^+$, concept positives $\Pos^c \subset \X$, concept negatives $\Neg^c \subset \X$, feature importance method $\a : \mathcal{A}(\R^{\X}) \times \X \rightarrow \R^{d_X}$, example $\x \in \X$}
	\KwOut{Feature importance vector $\a \in \R^{d_X}$ for the example $\x$}
	$\rho^c \gets \mathrm{Density}(\g, \kappa, \Pos^c , \Neg^c)$ \Comment*[r]{Initialize concept density as Def.~\ref{def:concept_density}} 
	$\a \gets \a(\rho^c \circ \g, \x)$ \Comment*[r]{Compute feature importance vector}
	\KwRet{$\a$}
\end{algorithm}
where $\mathcal{A}(\R^{\X})$ denotes the hypothesis set of scalar functions on the input space $\X$. In this case, this corresponds to the set of neural networks with input space $\X$ and scalar output.

\textbf{Completeness.} Completeness is a crucial property of some feature importance methods. It guarantees that the feature importance scores can be used to reconstruct the model prediction. To make this more precise, let $f: \X \rightarrow \R$ be a model with \emph{scalar} output and let $\x \in \X$ be an example we wish to explain. Feature importance methods assign a score $a_i(f, \x)$ to each feature $i \in [d_X]$. This score reflects the sensitivity of the model prediction $f(\x)$ with respect to the component $x_i$ of $\x$. Completeness is fulfilled whenever the sum of this importance scores equals the model prediction up to a constant baseline $b \in \R$: $\sum_{i=1}^{d_X} a_i(f, \x) = f(\x) - b$. The baseline varies from one method to the other. For instance, Lime~\cite{Ribeiro2016} uses a vanishing baseline $b = 0$, SHAP~\cite{Lundberg2017} uses the average prediction $b = \mathbb{E}_{\boldsymbol{X}} \left[f(\boldsymbol{X})\right]$ and Integrated Gradients~\cite{Sundararajan2017} use a baseline prediction $b = f(\bar{\x})$ for some baseline input $\bar{\x}$. With completeness, the importance scores $a_i(f, \x)$ are given a natural interpretation: their sign indicates if the features tend to increase/decrease the prediction $f(\x)$ and their absolute value indicates how important this effect is. When combined with CAR concept densities $\rho^c$, this interpretation is even more insightful.  

\begin{proposition}[CAR Completeness] \label{prop:completeness}
	Consider a neural network decomposed as $\f = \l \circ \g$, where $\g : \X \rightarrow \H$ is a feature extractor mapping the input space $\X$ to the representation space $\H$ and $\l$ is a label function that maps the representation space $\H$ to the label set $\Y$. Let $\rho^c : \H \rightarrow \R^+$ be a concept density for some concept $c \in [C]$, defined as in Definition~\ref{def:concept_density}. Let $\a : \mathcal{A}(\R^{\X}) \times \X \rightarrow \R$ be a feature importance method satisfying the completeness property: $\sum_{i=1}^{d_X} a_i(f, \x) = f(\x) - b$ for some baseline $b \in \R$ and for all $\x \in \X$. Then, we have the following completeness property for the concept-based density:
	\begin{align*}
		\sum_{i=1}^{d_X} a_i(\rho^c \circ \g , \x) = \rho^c\circ\g(\x) - b
	\end{align*}
\end{proposition}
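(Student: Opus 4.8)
The plan is to observe that the proposition is almost immediate once we recognize that the composite map $\rho^c \circ \g$ is itself a legitimate \emph{scalar} model on the input space $\X$, and then simply instantiate the assumed completeness property at this particular model.

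First I would note that, by Definition~\ref{def:concept_density}, the concept density $\rho^c : \H \rightarrow \R$ is a finite linear combination of kernel evaluations against the fixed representations $\g(\x^{c,n})$ and $\g(\x^{\neg c, n})$. Composing with the feature extractor $\g : \X \rightarrow \H$ therefore yields a map $\rho^c \circ \g : \X \rightarrow \R$ with scalar output. Since $\g$ is realized by the neural network and $\kappa$ is a fixed scalar-valued function, this composite lies in the hypothesis set $\mathcal{A}(\R^{\X})$ of scalar functions on $\X$ on which the feature importance method $\a$ is defined --- exactly as recorded in Algorithm~\ref{alg:car_feature_importance}, where $\a$ is applied to $\rho^c \circ \g$.

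The key step is then to apply the completeness hypothesis $\sum_{i=1}^{d_X} a_i(f, \x) = f(\x) - b$, which is assumed to hold for every scalar model $f$ and every $\x \in \X$, with the specific choice $f = \rho^c \circ \g$. This substitution gives $\sum_{i=1}^{d_X} a_i(\rho^c \circ \g, \x) = \rho^c \circ \g(\x) - b$ directly, which is the claimed identity. No further computation is required, since the conclusion is literally the completeness statement specialized to the concept-density model.

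The only point requiring a little care --- and the closest thing to an obstacle --- is the status of the baseline $b$. For methods such as Integrated Gradients or SHAP the baseline is model-dependent (e.g.\ $b = f(\bar{\x})$ or $b = \mathbb{E}_{\boldsymbol{X}}[f(\boldsymbol{X})]$), so the constant appearing in the conclusion must be understood as the baseline evaluated at the model $\rho^c \circ \g$ rather than at $\f$ itself. I would state this explicitly to avoid ambiguity, but it introduces no genuine difficulty: the result is a direct corollary of completeness applied to the scalar concept-density function, and the main content of the proof is the observation that $\rho^c \circ \g$ is an admissible scalar model in the first place.
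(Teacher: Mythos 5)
Your proposal is correct and follows exactly the same route as the paper's own proof: observe that $\rho^c \circ \g$ is an admissible scalar model on $\X$ and instantiate the assumed completeness property at $f = \rho^c \circ \g$. Your added remark that the baseline $b$ should be understood as evaluated at the concept-density model (rather than at $\f$) is a worthwhile clarification that the paper leaves implicit.
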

\begin{remark}
	With this property, we can interpret features $i \in [d_X]$ with $a_i(\rho^c \circ \g , \x) > 0$ as those that tend to increase the concept density. This means that those features are important for the feature extractor $\g$ to map the example in a region of the representation space $\H$ where the concept is present. Hence, those are features that are important to identify a given concept $c \in [C]$. Conversely, features $i \in [d_X]$ with $a_i(\rho^c \circ \g , \x) < 0$ tend to decrease the concept density and therefore brings the example $\x$ in a region of the representation space $\H$ where the concept is absent. These features can therefore be interpreted as important to reject the presence of a given concept $c \in [C]$. 
\end{remark}
\begin{proof}
	We simply note that $\rho^c \circ \g$ is a scalar function as $\rho^c \circ \g (\x) \in \R^+$ for all $\x \in \X$. Hence, the proposition immediately follows by applying the completeness property to $f = \rho^c \circ \g$.
\end{proof}

\textbf{Input baseline choice.} The choice of baseline input $\bar{\x}$ has a notable effect on feature importance methods~\cite{Sturmfels2020}. What constitutes a good input baseline is problem dependant. Intuitively, $\bar{\x}$ should correspond to an input $\x \in \X$ where no information is present~\cite{Covert2020}. Let us now explain how this information removal is achieved with the datasets that we use in our experiments. \circled{1} For MNIST, we chose a black image $\bar{\x} = \boldsymbol{0}$ as an input baseline. This is because MNIST images have a black background an the information comes from white pixels that represent the digits. \circled{2} For ECG, we chose a constant $\bar{\x} = \boldsymbol{0}$ time series as an input baseline. This is because ECG time series are normalized ($x_t \in  [0,1]$ for all $t \in [d_X]$) and the pulse information comes from time steps where the time series is non-vanishing $x_t \neq 0$. \circled{3} For CUB, choosing a baseline is more complicated. The reason for this is that the background colour changes from one image to the other and rarely corresponds to a single colour. To address this, we proceed as in the literature~\cite{Fong2017} and select a baseline $\bar{\x}$ that corresponds to a blurred version of the image we wish to explain: $\bar{\x}(\x) = \boldsymbol{G}_{\sigma} \otimes \x$, where $\boldsymbol{G}_{\sigma}$ is a Gaussian filter of width $\sigma$ and $\otimes$ denotes the convolution operation. We note that this baseline depends on which input $\x \in \X$ we want to explain. In our implementation, we use $\sigma = 50$ to have images that are significantly blurred. \circled{4} For SEER, we chose a constant vector $\bar{\x} = \boldsymbol{0}$ as an input baseline. This is because all continuous features are standardized and all categorical features are one-hot encoded. 
	
	\section{CAR Latent Isometry Invariance} \label{appendix:isometry_invariance}
	In this appendix, we prove that CAR explanations are invariant under isometries of the latent space when built with a radial kernel. Let us first rigorously define the notion of isometry between two vector spaces.

\begin{definition}[Isometry] \label{def:isometry}
	Let $(\H, \norm[\H]{\cdot})$ and  $(\H', \norm[\H']{\cdot})$ be two normed vector spaces. An \emph{isometry} from $\H$ to $\H'$ is a map $\isometry: \H \rightarrow \H'$ such that for all $\h_1, \h_2 \in \H$:
	\begin{align*}
		\norm[\H']{\isometry(\h_1) - \isometry(\h_2)} = \norm[\H]{\h_1 - \h_2}.
	\end{align*}
We say that the two spaces $(\H, \norm[\H]{\cdot})$ and  $(\H', \norm[\H']{\cdot})$ are \emph{isometric} if there exists a bijective isometry $\isometry$ from $\H$ to $\H'$.
\end{definition}  

An explanation method is invariant to latent space isometries if applying a bijective isometry $\isometry$ to the model's latent space $\H$ does not affect the explanations produced by the method. To make this more formal, we write the model as $\f = \l \circ \g = \l \circ \isometry^{-1} \circ \isometry \circ \g$, where $\isometry^{-1}$ is the inverse of the bijective isometry $\isometry$. In this setup, we could produce explanations with CARs by making the following replacements for the feature extractor and the label map: $\g \isometryarrow \isometry \circ \g$ and $\l \isometryarrow \l \circ \isometry^{-1}$. The explanations are defined to be invariant to latent space isometries if they are unaffected by this replacement. It is legitimate to expect this since the previous replacement leads to the same model $\f \isometryarrow \f$ and substitutes the latent space by an isometric latent space $\H \isometryarrow \H' =  \isometry(\H)$.

Let us now discuss the isometry invariance of CAR explanations. First, we recall that CARs are defined through a kernel $\kappa$. Not all kernels lead to isometry invariant CAR explanations. We will show that it holds for a family of kernels known as radial kernels.

\begin{definition}[Radial Kernel] \label{def:radial_kernel}
	A \emph{radial kernel} is a kernel function $\kappa: \H^2 \rightarrow \R^+$ that can be written as
	\begin{align}
		\kappa(\h_1 , \h_2) = \chi\left(\norm[\H]{\h_1 - \h_2}\right),
	\end{align}	
	with a function $\chi: \R^+ \rightarrow \R^+$.
\end{definition}

A typical example of radial kernel is the Gaussian radial basis function kernel (RBF) that corresponds to $\chi(x)=\exp \left[- (\gamma x)^2\right]$ for some $\gamma \in \R^+$. We are now ready to state and prove the isometry invariance property of CAR explanations. 

\begin{proposition}[CAR Isometry Invariance] \label{prop:isometry_invariance}
	Consider a neural network decomposed as $\f = \l \circ \g$, where $\g : \X \rightarrow \H$ is a feature extractor mapping the input space $\X$ to the representation space $\H$ and $\l$ is a label function that maps the representation space $\H$ to the label set $\Y$. Let $\kappa: \H^2 \rightarrow \R^+$ be a radial kernel $\kappa(\h_1, \h_2) = \chi(\norm[\H]{\h_1 - \h_2})$ that we use to define CAR's concept density in Definition~\ref{def:concept_density}. Let $\isometry : \H \rightarrow \H'$ be a bijective isometry between the normed spaces $(\H, \norm[\H]{\cdot})$ and $(\H', \norm[\H']{\cdot})$. All the explanations outputted by CAR remain the same if we transform the latent space with the isometry $\isometry$ by making the following replacements: $\g \isometryarrow \isometry \circ \g$ and $\l \isometryarrow \l \circ \isometry^{-1}$.
\end{proposition}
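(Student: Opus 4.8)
The plan is to isolate the single property that drives the whole result—a radial kernel sees its arguments only through their distance, and an isometry preserves distances—and then to push this observation through every object CAR produces. After the substitution $\g \isometryarrow \isometry \circ \g$, the density on the new latent space $\H' = \isometry(\H)$ is built with the transformed radial kernel $\kappa'(\h_1', \h_2') = \chi(\norm[\H']{\h_1' - \h_2'})$. The key lemma I would prove first is that for all $\h_1, \h_2 \in \H$,
\begin{equation*}
\kappa'\left(\isometry(\h_1), \isometry(\h_2)\right) = \chi\left(\norm[\H']{\isometry(\h_1) - \isometry(\h_2)}\right) = \chi\left(\norm[\H]{\h_1 - \h_2}\right) = \kappa(\h_1, \h_2),
\end{equation*}
where the middle equality is exactly Definition~\ref{def:isometry}. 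This step uses only distance preservation, not linearity of $\isometry$, so it applies to the general isometries of Definition~\ref{def:isometry}.

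From this lemma the density is \emph{equivariant}. Evaluating the transformed density $\rho'^c$ at $\isometry(\h)$ and applying the lemma term by term in Definition~\ref{def:concept_density} gives $\rho'^c(\isometry(\h)) = \rho^c(\h)$, i.e. $\rho'^c \circ \isometry = \rho^c$. Composing with the transformed feature extractor then yields $\rho'^c \circ (\isometry \circ \g) = \rho^c \circ \g$ as maps $\X \to \R$. Since CAR feature importance is $a_i(\rho^c \circ \g, \x)$ and this scalar map on $\X$ is \emph{literally unchanged}, the feature importance scores are invariant with no further work, and in particular no differentiability or linearity assumption on $\isometry$ is needed for this part.

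For the regions and global scores I would argue that the SVC decision function is determined by the kernel only through the Gram matrix of the concept-set representations $\g(\Pos^c) \cup \g(\Neg^c)$ together with their $\{0,1\}$ labels. The lemma preserves all pairwise kernel values, hence the SVC dual problem is unchanged, its optimum (support-vector coefficients and bias) is identical, and the decision function obeys $s'^c_\kappa \circ \isometry = s^c_\kappa$. Membership in the CAR is therefore preserved example by example, since $\g'(\x) = \isometry(\g(\x)) \in \H'^c$ iff $\g(\x) \in \H^c$; equivalently $\H'^c = \isometry(\H^c)$. Because $\isometry$ is a bijection, the counts defining $\TCAR^c_k = \nicefrac{\card{\g(\D_k) \cap \H^c}}{\card{\D_k}}$ (and the concept-concept Jaccard variant) are unchanged.

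I expect the SVC step to be the only delicate point: asserting that the solution is genuinely identical requires the optimizer to be well defined, which I would handle by invoking uniqueness of the dual optimum (strict convexity of the dual objective, with the bias fixed by the usual KKT conditions) so that equal Gram matrices force equal classifiers. Everything else follows mechanically from the equivariance lemma; the one remaining object, the gradient-based local sensitivity of Appendix~\ref{appendix:tcar}, also inherits invariance, but there I would additionally note that a surjective isometry of a Euclidean latent space is affine with orthogonal linear part, so the latent inner product appearing in $S^c_k$—and hence the sensitivity—is preserved.
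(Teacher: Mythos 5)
Your proposal is correct and follows essentially the same route as the paper's proof: the same key lemma that radial kernels composed with an isometry preserve all pairwise kernel values, applied term by term to the concept density and then to the SVC dual problem and bias to conclude $s'^c_{\kappa'}\circ\isometry = s^c_{\kappa}$ and hence invariance of the regions and TCAR counts. Your explicit flagging of the uniqueness of the dual optimum (which the paper asserts without comment) and the Mazur--Ulam remark for the gradient-based sensitivity are welcome refinements, but they do not change the structure of the argument.
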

\begin{proof}
	For each concept $c \in [C]$, we note that CAR explanations exclusively rely on the concept density $\rho^c$ from Definition~\ref{def:concept_density} and the associated support vector classifier $s_{\kappa}^c$. Hence, it is sufficient to show that these two functions are invariant under isometry. 
	
	\textbf{Concept Density.} We start with the concept density $\rho^c$. We note that the kernel function $\kappa$ can be applied to vectors from $\H'$ as $\kappa'(\h'_1, \h'_2) \equiv \chi(\norm[\H']{\h'_1 - \h'_2})$ for all $(\h'_1, \h'_2) \in \H'$. Applying CAR in the latent space $\H'$ isometric to $\H$ corresponds to using this kernel to compute an alternative density $\rho'^c$. Let us fix an example $\x \in \X$. Under the isometry, the concept density for this example transforms as $\rho^c \circ \g (\x) \isometryarrow \rho'^c \circ \isometry \circ \g (\x)$. Let us show that this is in fact an invariance:
	\begin{align*}
		\rho'^c \circ \isometry \circ \g (\x) &\overset{\mathrm{Def.}~\ref{def:concept_density}}{=} \sum_{n=1}^{N^c} \kappa' \left[\isometry\circ \g(\x) , \isometry\circ \g(\x^{c , n}) \right] - \kappa' \left[\isometry\circ \g(\x) , \isometry\circ \g(\x^{\neg c , n}) \right]\\
		&\overset{\mathrm{Def.~\ref{def:radial_kernel}}}{=} \sum_{n=1}^{N^c} \chi\left(\norm[\H']{\isometry\circ \g(\x) - \isometry\circ \g(\x^{c , n})}\right) - \chi\left(\norm[\H']{\isometry\circ \g(\x) - \isometry\circ \g(\x^{\neg c , n})}\right) \\
		&\overset{\mathrm{Def.~\ref{def:isometry}}}{=} \sum_{n=1}^{N^c} \chi\left(\norm[\H]{ \g(\x) -  \g(\x^{c , n})}\right) - \chi\left(\norm[\H]{ \g(\x) -  \g(\x^{\neg c , n})}\right) \\
		&\overset{\mathrm{Def.~\ref{def:radial_kernel}}}{=} \sum_{n=1}^{N^c} \kappa \left[\g(\x) , \g(\x^{c , n}) \right] - \kappa \left[ \g(\x) , \g(\x^{\neg c , n}) \right]\\
		&\overset{\mathrm{Def.~\ref{def:concept_density}}}{=} \rho^c \circ \g (\x).
	\end{align*}
We deduce that the concept density is invariant under isometry: $\rho^c \circ \g (\x) \isometryarrow \rho^c \circ \g (\x)$. 

\textbf{SVC.} The proof is more involved for the SVC $s^c_{\kappa}$. We assume that the reader is familiar with the standard theory of SVC. If this is not the case, please refer e.g. to Chapter~7 of \cite{Bishop2006}. For the sake of notation, we will abbreviate $\g(\x^{c, n})$ and $\g(\x^{\neg c, n})$ by  $\h^{c, n}$ and $\h^{\neg c, n}$ respectively. Similarly, we abbreviate $\isometry \circ \g(\x^{c, n})$ and $\isometry \circ \g(\x^{\neg c, n})$ by  $\h'^{c, n}$ and $\h'^{\neg c, n}$ respectively. The SVC concept classifier can be written as
\begin{align} \label{equ:svc}
	s^c_{\kappa}(\h) = \mathbb{I}_{\R^+} \left( \sum_{n=1}^{N^c} \alpha^{c,n} \kappa[\h , \h^{c,n}] - \alpha^{\neg c,n} \kappa[\h , \h^{\neg c,n}] + \beta \right),
\end{align}
where $\mathbb{I}_{\R^+}$ is the indicator function on $\R^+$, the real numbers $\alpha^{c,n}, \alpha^{\neg c, n}$ maximize the objective
\begin{align} \label{equ:svc_lagrangian}
	\Lagrangian(\dualpha^c , \dualpha^{\neg c}) =& \sum_{n=1}^{N^c} \alpha^{c, n} + \alpha^{\neg c, n} - \frac{1}{2}  \sum_{n=1}^{N^c} \sum_{m=1}^{N^c}  \alpha^{c, n} \alpha^{c, m} \kappa\left[\h^{c, n}, \h^{c, m}\right]  \nonumber \\
	& + \alpha^{\neg c, n} \alpha^{\neg c, m} \kappa\left[\h^{\neg c, n}, \h^{\neg c, m}\right] - 2 \alpha^{c, n} \alpha^{\neg c, m} \kappa\left[\h^{c, n}, \h^{ \neg c, m}\right]
\end{align}
under the constraints
\begin{align*}
	&\alpha^{c, n} \geq 0 \hspace{.5cm} \alpha^{\neg c, n} \geq 0 \hspace{.5cm} \forall n \in [N^c],\\
	&\sum_{n=1}^{N^c} \alpha^{c, n} - \alpha^{\neg c, n} = 0.
\end{align*} 
Finally, the bias term can be written as 
\begin{align} \label{equ:svc_bias}
	\beta =& \frac{1}{\card{\S^c} + \card{\S^{\neg c}}}  \left(  \card{\S^c} - \card{\S^{\neg c}}  - \sum_{n \in \S^c} \sum_{m \in \S^c} \alpha^{c,m} \kappa \left[ \h^{c, n}, \h^{c, m} \right]     \right.   \nonumber \\
	& + \sum_{n \in \S^c} \sum_{m \in \S^{\neg c}} \alpha^{\neg c,m} \kappa\left[ \h^{c, n}, \h^{\neg c, m} \right] - \sum_{n \in \S^{\neg c}} \sum_{m \in \S^c} \alpha^{c,m} \kappa\left[ \h^{\neg c, n}, \h^{c, m} \right] \nonumber \\
	 & \left.   
     + \sum_{n \in \S^{\neg c}} \sum_{m \in \S^{\neg c}} \alpha^{\neg c,m} \kappa\left[ \h^{\neg c, n}, \h^{\neg c, m} \right] \right),
\end{align}
where $\S^c = \{ n \in [N^c] \mid \alpha^{c ,n} \neq 0\}$ and $\S^{\neg c} = \{ n \in [N^c] \mid \alpha^{\neg c ,n} \neq 0\}$
are the indices of the support vectors. Under isometry, the SVC classification for a latent vector $\h \in \H$ transforms as $s^c_{\kappa}(\h) \isometryarrow s'^c_{\kappa'}(\h')$ with $\h' = \isometry(\h)$ and
\begin{align} \label{equ:svc'}
	s'^c_{\kappa'}(\h') = \mathbb{I}_{\R^+} \left( \sum_{n=1}^{N^c} \underbrace{\alpha'^{c,n}}_{\circled{2}} \underbrace{\kappa'[\h' , \h'^{c,n}]}_{\circled{1}} - \underbrace{\alpha'^{\neg c,n}}_{\circled{2}} \underbrace{\kappa'[\h' , \h'^{\neg c,n}]}_{\circled{1}} + \underbrace{\beta'}_{\circled{3}}\right).
\end{align}
We will now show that $s^c_{\kappa}(\h) = s'^c_{\kappa'}(\h')$ so that the SVC is invariant under isometries. We proceed in 3 steps.

\circled{1} We show that $\kappa' \left[\h'_1, \h'_2\right] = \kappa \left[\h_1, \h_2\right]$ for any $(\h_1 , \h_2) \in \H^2$ and $\h'_1 = \isometry(\h_1), \h'_2 = \isometry(\h_2)$:
\begin{align*}
	\kappa' \left[\h'_1, \h'_2\right] &\overset{\mathrm{Def.~\ref{def:radial_kernel}}}{=} \chi \left(\norm[\H']{\isometry(\h_1) - \isometry(\h_2)}\right)\\
	&\overset{\mathrm{Def.~\ref{def:isometry}}}{=} \chi \left(\norm[\H]{\h_1 - \h_2}\right)\\
	&\overset{\mathrm{Def.~\ref{def:radial_kernel}}}{=} \kappa \left[\h_1, \h_2\right].
\end{align*}
By injecting this in \eqref{equ:svc'}, we are able to make the following replacements: $\kappa' \left[\h', \h'^{c,n}\right] = \kappa \left[\h, \h^{c,n}\right]$ and $\kappa' \left[\h', \h'^{\neg c,n}\right] = \kappa \left[\h, \h^{\neg c,n}\right]$ for all $n \in [N^c]$.

\circled{2} We show that $\alpha'^{c,n} = \alpha^{c,n}$ and $\alpha'^{\neg c,n} = \alpha^{\neg c,n}$ for all $n \in [N^c]$. To that aim, we note that $\dualpha'^c$ and $\dualpha'^{\neg c}$ maximize the objective
\begin{align*} 
	\Lagrangian'(\dualpha'^c , \dualpha'^{\neg c}) =& \sum_{n=1}^{N^c} \alpha'^{c, n} + \alpha'^{\neg c, n} - \frac{1}{2}  \sum_{n=1}^{N^c} \sum_{m=1}^{N^c}  \alpha'^{c, n} \alpha'^{c, m} \kappa'\left[\h'^{c, n}, \h'^{c, m}\right]   \\
	  &+ \alpha'^{\neg c, n} \alpha'^{\neg c, m} \kappa'\left[\h'^{\neg c, n}, \h'^{\neg c, m}\right] - 2 \alpha'^{c, n} \alpha'^{\neg c, m} \kappa'\left[\h'^{c, n}, \h'^{ \neg c, m}\right] \\
	\overset{\circled{1}}{=}& \sum_{n=1}^{N^c} \alpha'^{c, n} + \alpha'^{\neg c, n} - \frac{1}{2}  \sum_{n=1}^{N^c} \sum_{m=1}^{N^c} \alpha'^{c, n} \alpha'^{c, m} \kappa\left[\h^{c, n}, \h^{c, m}\right]  \\
	 & + \alpha'^{\neg c, n} \alpha'^{\neg c, m} \kappa\left[\h^{\neg c, n}, \h^{\neg c, m}\right] - 2 \alpha'^{c, n} \alpha'^{\neg c, m} \kappa\left[\h^{c, n}, \h^{ \neg c, m}\right] \\
	\overset{\eqref{equ:svc_lagrangian}}{=}& \Lagrangian(\dualpha'^c , \dualpha'^{\neg c}). \hspace{2.7cm}  
\end{align*}
Since this objective is identical to the one from the original SVC and the constraints are unaffected by the isometry, we deduce that the solution to this convex optimization problem is identical to the solution of \eqref{equ:svc_lagrangian}. Hence, we have that $\alpha'^{c,n} = \alpha^{c,n}$ and $\alpha'^{\neg c,n} = \alpha^{\neg c,n}$ for all $n \in [N^c]$. Again, we can make these replacements in \eqref{equ:svc'}.

\circled{3} We show that $\beta' = \beta$. First, we note that the support vector indices are invariant under isometry: $\S'^c = \{ n \in [N^c] \mid \alpha'^{c ,n} \neq 0\} = \{ n \in [N^c] \mid \alpha^{c ,n} \neq 0\} = \S^c$ and similarly $\S'^{\neg c} = \S^{\neg c}$. Hence we have
\begin{align*} 
	\beta' = \hspace{.2cm} &  \frac{1}{\card{\S^c} + \card{\S^{\neg c}}}  \left(  \card{\S^c} - \card{\S^{\neg c}}  - \sum_{n \in \S^c} \sum_{m \in \S^c} \alpha'^{c,m} \kappa' \left[ \h'^{c, n}, \h'^{c, m} \right]     \right.   \nonumber \\
	& + \sum_{n \in \S^c} \sum_{m \in \S^{\neg c}} \alpha'^{\neg c,m} \kappa'\left[ \h'^{c, n}, \h'^{\neg c, m} \right] - \sum_{n \in \S^{\neg c}} \sum_{m \in \S^c} \alpha'^{c,m} \kappa'\left[ \h'^{\neg c, n}, \h'^{c, m} \right] \nonumber \\
	& \left.   
	+ \sum_{n \in \S^{\neg c}} \sum_{m \in \S^{\neg c}} \alpha'^{\neg c,m} \kappa'\left[ \h'^{\neg c, n}, \h'^{\neg c, m} \right] \right)\\
	\overset{\circled{1} \& \circled{2}}{=}& \frac{1}{\card{\S^c} + \card{\S^{\neg c}}}  \left(  \card{\S^c} - \card{\S^{\neg c}}  - \sum_{n \in \S^c} \sum_{m \in \S^c} \alpha^{c,m} \kappa \left[ \h^{c, n}, \h^{c, m} \right]     \right.   \nonumber \\
	& + \sum_{n \in \S^c} \sum_{m \in \S^{\neg c}} \alpha^{\neg c,m} \kappa\left[ \h^{c, n}, \h^{\neg c, m} \right] - \sum_{n \in \S^{\neg c}} \sum_{m \in \S^c} \alpha^{c,m} \kappa \left[ \h^{\neg c, n}, \h^{c, m} \right] \nonumber \\
	& \left.   
	+ \sum_{n \in \S^{\neg c}} \sum_{m \in \S^{\neg c}} \alpha^{\neg c,m} \kappa\left[ \h^{\neg c, n}, \h^{\neg c, m} \right] \right)\\
	= \hspace{.4cm} &  \beta.
\end{align*}
By making the replacements from points \circled{1}, \circled{2} and \circled{3} in \eqref{equ:svc'}, we deduce that $s^c_{\kappa}(\h) = s'^c_{\kappa'}(\h')$.
\end{proof}

	\section{Empirical Evaluation} \label{appendix:empirical_evaluation}
	This appendix provides useful details to reproduce the empirical evaluation from Section~\ref{sec:experiments}.

\textbf{Computing Resources.} All the empirical evaluations were run on a single machine equipped with a 18-Core Intel Core i9-10980XE CPU and a NVIDIA RTX A4000 GPU. The machine runs on Python 3.9~\cite{Python} and  Pytorch 1.10.2~\cite{Pytorch}. 

\textbf{Dataset licenses.} The MNIST dataset dataset is made available under the terms of the Creative Commons Attribution-Share Alike 3.0 License. The ECG dataset dataset is made available under the terms of the Open Data Commons Attribution License v1.0. The CUB dataset is made available for non-commercial research and educational purposes.

\textbf{Models.} The detailed architecture of the models are provided in Tables~\ref{tab:mnist_model},~\ref{tab:ecg_model}~and~\ref{tab:cub_model}. The InceptionV3 architecture is the same as in the literature~\cite{Szegedy2015}. We use its official Pytorch implementation.

\begin{table}[b]
	\caption{MNIST Model}
	\label{tab:mnist_model}
	\centering
	\begin{adjustbox}{width=\columnwidth}
	\begin{tabular}{llll}
		\toprule
		\textbf{Block Name} & \textbf{Layer Type} & \textbf{Hyperparameters} & \textbf{Activation}\\ 
		\midrule
		\multirowcell{3}{Conv1} & Conv2d &  Input Channels = 1, Output Channels = 16, Kernel Size = 5, Stride = 1, Padding = 0 & ReLU\\
		& Dropout2d &  p = 0.2 & \\
		& MaxPool2d &  Kernel Size = 2, Stride = 2 & \\
		\midrule
		\multirowcell{3}{Conv2} & Conv2d &  Input Channels = 16, Output Channels = 32, Kernel Size = 5, Stride = 1, Padding = 0 & ReLU\\
		& Dropout2d &  p = 0.2 & \\
		& MaxPool2d &  Kernel Size = 2, Stride = 2 & \\
		\midrule
		\multirowcell{2}{Lin1} & Flatten &   & \\
		& Linear &  Input Features = 512, Output Features = 10 & \\
		\midrule
		\multirowcell{2}{Lin2} & Dropout & p = 0.2  & \\
		& Linear &  Input Features = 10, Output Features = 5 & \\
		\midrule
		& Dropout & p = 0.2  & \\
		& Linear & Input Features = 5, Output Features = 10  & \\
		\bottomrule
	\end{tabular}
	\end{adjustbox}
\end{table}

\begin{table}
	\caption{ECG Model}
	\label{tab:ecg_model}
	\centering
	\begin{adjustbox}{width=\columnwidth}
		\begin{tabular}{llll}
			\toprule
			\textbf{Block Name} & \textbf{Layer Type} & \textbf{Hyperparameters} & \textbf{Activation}\\ 
			\midrule
			\multirowcell{2}{Conv1} & Conv1d &  Input Channels = 1, Output Channels = 16, Kernel Size = 3, Stride = 1, Padding = 1 & \\
			& MaxPool1d &  Kernel Size = 2 & \\
			\midrule
			\multirowcell{2}{Conv2} & Conv1d &  Input Channels = 16, Output Channels = 64, Kernel Size = 3, Stride = 1, Padding = 1 & \\
			& MaxPool1d &  Kernel Size = 2 & \\
			\midrule
			\multirowcell{2}{Conv3} & Conv1d &  Input Channels = 64, Output Channels = 128, Kernel Size = 3, Stride = 1, Padding = 1 & \\
			& MaxPool1d &  Kernel Size = 2 & \\
			\midrule
			\multirowcell{2}{Lin} & Flatten &   & \\
			& Linear &  Input Features = 2944, Output Features = 32 & \\
			\midrule
			& Leaky ReLU & Negative Slope = $10^{-2}$ & Leaky ReLU\\
			& Linear & Input Features = 32, Output Features = 2  & \\
			\bottomrule
		\end{tabular}
	\end{adjustbox}
\end{table}

\begin{table}
	\caption{CUB Model}
	\label{tab:cub_model}
	\centering
		\begin{tabular}{llll}
			\toprule
			\textbf{Block Name} & \textbf{Layer Type} & \textbf{Hyperparameters} & \textbf{Activation}\\  
			\midrule 
			InceptionOut & InceptionV3~\cite{Szegedy2015} & Pretrained = True & \\
			\midrule
			& Linear & Input Features = 2048, Output Features = 200 & \\
			\bottomrule
		\end{tabular}
\end{table}

\textbf{Data Split.} All the datasets are naturally split in training and testing data. In the ECG dataset, the different types of abnormal heartbeats are imbalanced (e.g. the fusion beats constitute only $0.7\%$ of the training set). Hence, we create a synthetic training set with balanced concepts using SMOTE~\cite{Chawla2011}. As in \cite{Koh2020}, the CUB dataset is augmented by using random crops and random horizontal flips. 

\textbf{Model Fitting.} In fitting each model, we use the test set as a validation set since our purpose is not to obtain the models with the best generalization but simply models that perform well on a set of examples we wish to explain (here the examples from the test set). All the models are trained to minimize the cross-entropy between their prediction and the true labels.  The hyperparameters are as follows. \circled{1} For MNIST we use a Adam optimizer with batches of 120 examples, a learning rate of $10^{-3}$, a weight decay of $10^{-5}$ for $50$ epochs with patience $10$. \circled{2} For ECG we use a Adam optimizer with batches of 300 examples, a learning rate of $10^{-3}$, a weight decay of $10^{-5}$ for $50$ epochs  with patience $10$. \circled{3} For CUB, we use a stochastic gradient descent optimizer with batches of 64 examples, a learning rate of $10^{-3}$, a weight decay of $4 \cdot 10^{-5}$ for $1,000$ epochs  with patience $50$.

\textbf{Concepts.} The concept mapping between MNIST classes and concepts is provided in Table~\ref{tab:mnist_concepts}. For the ECG and the CUB datasets, the presence/absence of a concept for each example is readily available in the dataset.

\begin{table}
	\caption{MNIST Concepts}
	\label{tab:mnist_concepts}
	\centering
	\begin{tabular}{c|cccc}
		\hline
		\backslashbox{Class}{Concept} & Loop & Vertical Line & Horizontal Line & Curvature  \\  \hline 
		0 & \cmark & \xmark & \xmark & \cmark \\
		1 & \xmark & \cmark & \xmark & \xmark \\
		2 & \cmark & \xmark & \xmark & \cmark \\
		3 & \xmark & \xmark & \xmark & \cmark \\
		4 & \xmark & \cmark & \cmark & \xmark \\
		5 & \xmark & \xmark & \cmark & \cmark \\
		6 & \cmark & \xmark & \xmark & \cmark \\
		7 & \xmark & \cmark & \cmark & \xmark \\
		8 & \cmark & \xmark & \xmark & \cmark \\
		9 & \cmark & \xmark & \xmark & \cmark \\
		\bottomrule
	\end{tabular}
\end{table}

\textbf{Concept classifiers.} All the concept classifiers are implemented with scikit-learn~\cite{Scikit-learn}. For CAR classifiers, we fit a SVC with Gaussian RBF kernel and default hyperparameters from scikit-learn. For CAV classifiers, we fit a linear classifier with a stochastic gradient descent optimizer with learning rate $10^{-2}$ and a tolerance of $10^{-3}$ for $1,000$ epochs and the remaining default hyperparameters from scikit-learn.

\textbf{Statistical significance.} The statistical significance test from Section~\ref{subsec:car_acc_validation} is performed with the scikit-learn~\cite{Scikit-learn} implementation of the permutation test. For MNIST and ECG, we consider 100 permutations per concept. For CUB, this test is more expensive since the latent spaces are high-dimensional. We consider only 25 permutations per concept in that case.

\textbf{Concept-based feature importance.} In the experiment from Section~\ref{subsec:car_concept_features}, we use Captum's implementation~\cite{Kokhlikyan2020} of Integrated Gradients~\cite{Sundararajan2017} with default parameters. In the case of CUB, storing the feature importance scores for each concept and for the whole test set requires a prohibitive amount of memory. To avoid this problem, we select $C=6$ concepts and subsample 50 positive and 50 negative examples per concept from the test set. This corresponds to a set of $600$ examples. We compute the feature importance for these examples only. 
    
\textbf{Alternative architecture.} We extended the analysis of Section~\ref{subsec:car_acc_validation} to a ResNet-50 architecture. We fine-tune the ResNet model on the CUB dataset and reproduced the experiment from Section~\ref{subsec:car_acc_validation} with this new architecture. In particular, we fit a CAR and a CAV classifier on the penultimate layer of the ResNet. We then measure the accuracy averaged over the  CUB concepts. This results in $(89 \pm 1) \%$ accuracy for CAR classifiers and $(87 \pm 1) \%$ accuracy for CAV classifiers. We deduce that CAR classifiers are highly accurate to identify concepts in the penultimate ResNet layer. As in the main paper, we observe that CAR classifiers outperform CAV classifiers, although the gap is smaller than for the Inception-V3 neural network. We deduce that our CAR formalism extends beyond the architectures explored in the paper and we hope that CAR will become widely used to interpret any more architectures.

	\section{Use Case} 
	\label{appendix:use_case}
	This appendix provides useful details to reproduce the use case from Section~\ref{subsec:seer_use_case}.

\textbf{Computing Resources.} The use case was run on a single machine equipped with a 18-Core Intel Core i9-10980XE CPU and a NVIDIA RTX A4000 GPU. The machine runs on Python 3.9~\cite{Python} and  Pytorch 1.10.2~\cite{Pytorch}. 

\textbf{Dataset license.} The SEER dataset is made available under the terms of the SEER Research Data Use Agreement.

\begin{table}
	\caption{SEER Model}
	\label{tab:seer_model}
	\centering
		\begin{tabular}{lll}
			\toprule
			\textbf{Layer Type} & \textbf{Hyperparameters} & \textbf{Activation}\\ 
			\midrule
			Linear & Input Features = 21, Output Features = 400 & ReLU \\
			Dropout & p = 0.3 & \\
			Linear & Input Features = 400, Output Features = 100 & ReLU \\
			Dropout & p = 0.3 & \\
			Linear & Input Features = 100, Output Features = 2 & \\
			\bottomrule
		\end{tabular}
\end{table}

\textbf{Model.} The detailed architecture of the model is provided in Table~\ref{tab:seer_model}.

\textbf{Data split.} We randomly split the whole SEER dataset into a training set ($90 \%$ of the data) and a test set (the remaining $10 \%$). Since patients with a death outcome are in minority (less than $3 \%$), we oversample them to obtain a balanced training set.

\textbf{Model fitting.} In fitting the model, we use the test set as a validation set since our purpose is not to obtain the model with the best generalization but simply a model that performs well on a set of examples we wish to explain (here the examples from the test set). The model is trained to minimize the cross-entropy between its prediction and the true labels.  We use a Adam optimizer with batches of 500 examples, a learning rate of $10^{-3}$, a weight decay of $10^{-5}$ for $500$ epochs with patience $50$.

\textbf{Concepts.} The concepts correspond to prostate cancer grades. Those grades can be computed from the Gleason score as follows~\cite{Gordetsky2016}:

\begin{align*}
	\mathrm{Grade(Gleason_1 , Gleason_2)} = \left\{
	\begin{array}{ll}
		1 & \mathrm{if \ Gleason_1 + Gleason_2 \leq 6} \\
		2 & \mathrm{if \ Gleason_1 = 3 \ \wedge \ Gleason_2 = 4} \\
		3 & \mathrm{if \ Gleason_1 = 4 \ \wedge \ Gleason_2 = 3} \\
		4 & \mathrm{if \ Gleason_1 + Gleason_2 = 8} \\
		5 & \mathrm{if \ Gleason_1 + Gleason_2 \geq 9.} \\
	\end{array}
	\right.
\end{align*} 
It goes without saying that the model is trained with the Gleason scores only, not with the grades. 

\textbf{Concept classifiers.} We fit a SVC with linear kernel and default hyperparameters from scikit-learn.

\textbf{Concept-based feature importance.} We use Captum's implementation~\cite{Kokhlikyan2020} of Integrated Gradients~\cite{Sundararajan2017} with default parameters.

	\section{Explanation Robustness}
	\label{appendix:robustness}
	\textbf{Adversarial perturbations.} We perform an experiment to evaluate the robustness
of CAR explanations with respect to adversarial perturbations.
In this experiment, we work with the MNIST dataset in the same setting as
the experiment from Section \ref{subsec:car_global_validation}. We train a CAR concept classifier for each MNIST concept $c \in [C]$. We use the CAR classifier to output TCAR scores relating the concept $c$ with each class $k \in [d_Y]$. As in the main paper, since the ground-truth association between concepts and classes is known (e.g. the class corresponding
to digit 8 will always have the concept loop), we can compute the correlation $r(\mathrm{TCAR}, \mathrm{TrueProp})$ between
our TCAR score and the ground-truth proportion of examples that exhibit the concept. In this experiment, the correlation is evaluated on a test set $\mathcal{D}_{\mathrm{test}} = \mathcal{D}_{\mathrm{adv}} \ \sqcup \mathcal{D}_{\mathrm{orig}}$ that contains adversarial
test examples $\mathcal{D}_{\mathrm{adv}}$ and original test examples $\mathcal{D}_{\mathrm{orig}}$. Each adversarial MNIST image $\x_{\mathrm{adv}} \in \mathcal{D}_{\mathrm{adv}}$ is constructed by finding a small (w.r.t. the $\| \cdot \|_{\infty}$ norm) perturbation $\boldsymbol{\epsilon} \in \mathbb{R}^{d_X}$ around an original test image $\x \in \mathcal{X}$ that maximizes the prediction shift for the model $\f : \mathcal{X} \rightarrow \mathcal{Y}$:

$$\boldsymbol{\epsilon} = \arg \max_{\tilde{\boldsymbol{\epsilon}} \in \mathbb{R}^{d_X}} \mathrm{Cross Entropy}[\f(\x), \f(\x + \tilde{\boldsymbol{\epsilon}})] \ s.t. \ \| \tilde{\boldsymbol{\epsilon}} \|_{\infty} < .1$$

The adversarial image is then defined as $\x_{\mathrm{adv}} \equiv \x + \boldsymbol{\epsilon}$. We measure the correlation $r(\mathrm{TCAR}, \mathrm{TrueProp})$ by varying the proportion $\frac{|\mathcal{D}_{\mathrm{adv}}|}{|\mathcal{D}_{\mathrm{test}}|}$ of adversarial examples in the test set. The results are reported in Table~\ref{tab:mnist_adversarial}.

\begin{table}
	\caption{MNIST Adversarial Perturbation Sensitivity}
	\label{tab:mnist_adversarial}
	\centering
	\begin{tabular}{cc}
		\hline
		Adversarial \% & $r(\mathrm{TCAR}, \mathrm{TrueProp})$ \\  \hline 
		              0 & .99 \\
		              5 & .99 \\
		              10 & .99 \\
		              20 & .99 \\
		              50 & .97 \\
		              70 & .95 \\
		             100 & .90 \\
		\bottomrule
	\end{tabular}
\end{table}

We observe that the TCAR scores keep a high correlation with the true proportion of examples that exhibit the concept even when all the test examples are adversarially perturbed. We conclude that TCAR explanations are robust with respect to adversarial perturbations in this setting.

\textbf{Background shift.} For completeness, we have also adapted the background shift robustness experiment in Section 7 from \cite{Koh2017}. We use CAR to explain the predictions of our Inception-V3 model trained on the original CUB training set. The explanations are made on test images where the background has been replaced. As \cite{Koh2017}, we use the segmentation of the CUB dataset to isolate the bird on each image. The rest of the image is replaced by a random background sampled from the \emph{Place365} dataset~\cite{Zhou2018b}. This results in a test set $\mathcal{D}_{\mathrm{test}}$ with a background shift with respect to the training set. By following the approach from Section 3.1.2 of our paper, we measure the correlation $r(\mathrm{TCAR}, \mathrm{TrueProp})$ between the TCAR score and the true proportion of examples in the class that exhibit the concept for each $(\mathrm{class}, \mathrm{concept})$ pair. We measured a correlation of $r(\mathrm{TCAR}, \mathrm{TrueProp}) = .82$ in the background-shifted test set. This is close to the correlation for the original test set reported in the main paper, which suggests that CAR explanations are robust with respect to background shifts. Note that this correlation is still better than the one obtained with TCAV on the original test set.
	
	\section{Using CAR to Understand Unsupervised Concepts}
	\label{appendix:unsupervised_concepts}
	Our CAR formalism adapts to a wide variety of neural network architectures. In this appendix, we use CAR to analyze the concepts discovered by a self explaining neural network (SENN) trained on the MNIST dataset. As in \cite{Alvarez-Melis2018}, we use a SENN of the form

$$f(\x) = \sum_{s=1}^S \theta_s (\x) \cdot g_s(\x),$$

Where $g_s(\x)$ and $\theta_s(\x)$ are respectively the activation and the relevance of the synthetic concept $s \in [S]$ discovered by the SENN model. We follow the same training process as~\cite{Alvarez-Melis2018}. This yields a set of $S = 5$ concepts explaining the predictions made by the SENN $f : \mathcal{X} \rightarrow \mathcal{Y}$.

We use our CAR formalism to study how the synthetic concepts $s \in [S]$ discovered by the SENN are related to the concepts $c \in \{ \mathrm{Loop}, \mathrm{Vertical \ Line}, \mathrm{Horizontal \ Line}, \mathrm{Curvature}  \}$ introduced in our paper. With our formalism, the relevance of a concept $c$ for a given prediction $\x \mapsto f(\x)$ is measured by the concept density $\rho^c \circ \g(\x)$. To analyze the relationship between the SENN concept $s$ and the concept $c$, we can therefore compute the correlation of their relevance:

$$r(s, c) = \mathrm{corr}_{\textbf{X} \sim P_{\mathrm{empirical}}(\mathcal{D}_{\mathrm{test}})} [\theta_s(\textbf{X}) , \rho^c\circ \g (\textbf{X})]. $$

When this correlation increases, the concepts $s$ and $c$ tend to be relevant together more often. We report the correlation between each pair $(s, c)$ in Table~\ref{tab:mnist_senn}.

\begin{table}
	\caption{SENN Concepts Correspondence}
	\label{tab:mnist_senn}
	\centering
	\begin{tabular}{c|cccc}
		\hline
		\textbf{Correlation}   $\mathbf{r(s , c)}$  & \textbf{Loop} & \textbf{Vertical Line} & \textbf{Horizontal Line} & \textbf{Curvature}  \\  \hline 
		\textbf{SENN Concept 1} & -0.28 & -0.12 & 0.26 & 0.11 \\
		\textbf{SENN Concept 2} & -0.50 & 0.71 & -0.03 & -0.69 \\
		\textbf{SENN Concept 3} & -0.47 & 0.10 & 0.71 & -0.14\\
		\textbf{SENN Concept 4} & -0.33 &       0.02 &        -0.06 & -0.01  \\
		\textbf{SENN Concept 5} & 0.57 &      -0.00 &        -0.63  &  0.07  \\
		\bottomrule
	\end{tabular}
\end{table}

We note the following:

\begin{enumerate}
	\item SENN Concept 2 correlates well with the Vertical Line Concept.
	\item SENN Concept 3 correlates well with the Horizontal Line Concept
	\item SENN Concept 5 correlates well with the Loop Concept.
	\item SENN Concepts 1 and 4 are not well covered by our concepts.
\end{enumerate}

The above analysis shows the potential of our CAR explanations to better understand the abstract concepts discovered by SENN models. We believe that the community would greatly benefit from the ability to perform similar analyses for other interpretable architectures, such as disentangled VAEs.
	
	\section{Using CAR with NLP}
	\label{appendix:car_nlp}
	CAR is a general framework and can be used in a wide variety of domains that involve neural networks. In the main paper, we show that CAR provides explanations for various modalities:

\begin{enumerate}
	\item Large image dataset
	\item Medical time series
	\item Medical tabular data
\end{enumerate}

We now perform a toy experiment to assess if those conclusions extend to the NLP setting. We train a small CNN on the IMDB Review dataset to predict whether a review is positive or negative. We use GloVe~\cite{Pennington2014} to turn the word tokens into embeddings. We would like to assess whether the concept $c = \mathrm{Positive \ Adjective}$ is encoded in the model's representations.
Examples that exhibit the concept $c$ are sentences containing positive adjectives. We collect a positive set $\mathcal{P}^c$ of $N^c = 90$ such sentences. The negative set $\mathcal{N}^c$ is made of $N^c$ sentences randomly sampled from the Gutenberg Poem Dataset. We verified that the sentences from $\mathcal{N}^c$ did not contain positive adjectives. We then fit a CAR classifier on the representations obtained in the penultimate layer of the CNN.

We assess the generalization performance of the CAR classifier on a holdout concept set made of $N^c = 30$ concept positive and negative sentences (60 sentences in total). The CAR classifier has an accuracy of $87 \%$ on this holdout dataset. This suggests that the concept $c$ is smoothly encoded in the model's representation space, which is consistent with the importance of positive adjectives to identify positive reviews. We deduce that our CAR formalism can be used in a NLP setting. We believe that using CARs to analyze large-scale language model would be an interesting study that we leave for future work.  
	
	\section{Increasing Explainability at Training Time}
	\label{appendix:training_time}
	Improving neural networks explainability at training time constitutes a very interesting area of research but is beyond the scope of our paper. That said, we believe that our paper indeed contains insights that might be the seed of future developments in neural network training. As an illustration, we consider an important insight from our paper: the fact that the accuracy of concept classifiers seems to increase with the depth of the layer for which we fit a classifier. In the main paper, this is mainly reflected in Figure~\ref{fig:overall_concept_acc}. This observation has a crucial consequence: it is not possible to reliably characterize the shallow layers in terms of the concepts we use.

In order to improve the explainability of those shallow layers, one could leverage the recent developments in contrastive learning. The purpose of this approach would be to separate the concept set $\mathcal{P}^c$ and $\mathcal{N}^c$ in the representation space $\mathcal{H}$ corresponding to a shallow layer of the neural network. A practical way to implement this would be to follow \cite{Chen2020b}. Assume that we want to separate concept positives and negatives in the representation space $\mathcal{H}$ induced by the shallow feature extractor $\g : \mathcal{X} \rightarrow \mathcal{H}$. As \cite{Chen2020b}, one can use a projection head $\textbf{p} : \mathcal{H} \rightarrow \mathcal{Z}$ and enforce the separation of the concept sets through the contrastive loss

\begin{align}
	\mathcal{L}^c_{\mathrm{cont}} = \sum_{(\x_i,\x_j) \in (\mathcal{P}^c)^2} -\log \frac{\exp( \tau^{-1} \cdot\cos[\textbf{p} \circ \g (\x_i), \textbf{p} \circ \g (\x_j)])}{\sum_{\x_k \in (\mathcal{P}^c \cup \mathcal{N}^c) \setminus \{ \x_i \}} \exp(\tau^{-1} \cdot\cos[\textbf{p} \circ \g (\x_i), \textbf{p} \circ \g (\x_k)])},
\end{align}	

where $\cos(\textbf{z}_1, \textbf{z}_2) \equiv \frac{\textbf{z}_1^{\intercal}\textbf{z}_2}{\| \textbf{z}_1 \|_2 \cdot \| \textbf{z}_2 \|_2}$ and $\tau \in \mathbb{R}^+$ is a temperature parameter. The effect of this loss is to group the concept positive examples from $\mathcal{P}^c$ together and apart from the concept negatives $\mathcal{N}^c$ in the representation space $\mathcal{H}$. To the best of our knowledge, concept-based contrastive learning has not been explored in the literature. We believe that it would constitute an interesting contribution to the field based on the insights from our paper.
		
	\section{Further Examples} 
	\label{appendix:further_examples}
	In this appendix, we provide several examples to illustrate the experiments from Section~\ref{sec:experiments}.

\textbf{Concept classifiers.} The accuracy of the concept classifiers for all the MNIST and ECG concepts are given in Figures~\ref{fig:mnist_acc_per_concept}~and~\ref{fig:ecg_acc_per_concept}. Each box-plot is built with 10 random seeds where the concept sets $\Pos^c$ and $\Neg^c$ are allowed to vary. We observe that the CAR classifiers are more accurate for each of the observed concepts 

\textbf{Global explanations.} The global concept explanations for all the MNIST concepts and some of the CUB concepts are given in Figures~\ref{fig:mnist_tcar_extra}~and~\ref{fig:cub_tcar_extra}. As the examples from Section~\ref{subsec:car_global_validation}, we see that TCAR explanations are more consistent with the human concept annotations.

\textbf{Saliency maps.} Examples of concept-based an vanilla saliency maps for MNIST, ECG and CUB examples are given in Figures~\ref{fig:mnist_saliency}~,~\ref{fig:ecg_saliency}~,~\ref{fig:cub_saliency_belly}~,~\ref{fig:cub_saliency_back}~and~\ref{fig:cub_saliency_breast}. As explained in the main paper, we observe that concept-based saliency maps are indeed distinct form vanilla saliency maps. Furthermore, saliency maps for different concepts are not interchangeable. In the CUB case, we note that concept are not always identified with the minimal amount of features (e.g. some of the saliency maps from Figure~\ref{fig:cub_saliency_breast} highlight pixels that do not always belong to the bird's breast). This surprising observation seems to occur even for concept-bottleneck models that are explicitly trained to recognize concepts \cite{Margeloiu2021}. We believe that this could be improved by training concept classifiers with images that include a segmentation highlighting the concept of interest (e.g. the bird's breast). We leave this idea for future works.   

\begin{figure}[!ht]
	\centering
	\begin{subfigure}{0.45\linewidth}
		\includegraphics[width=\linewidth]{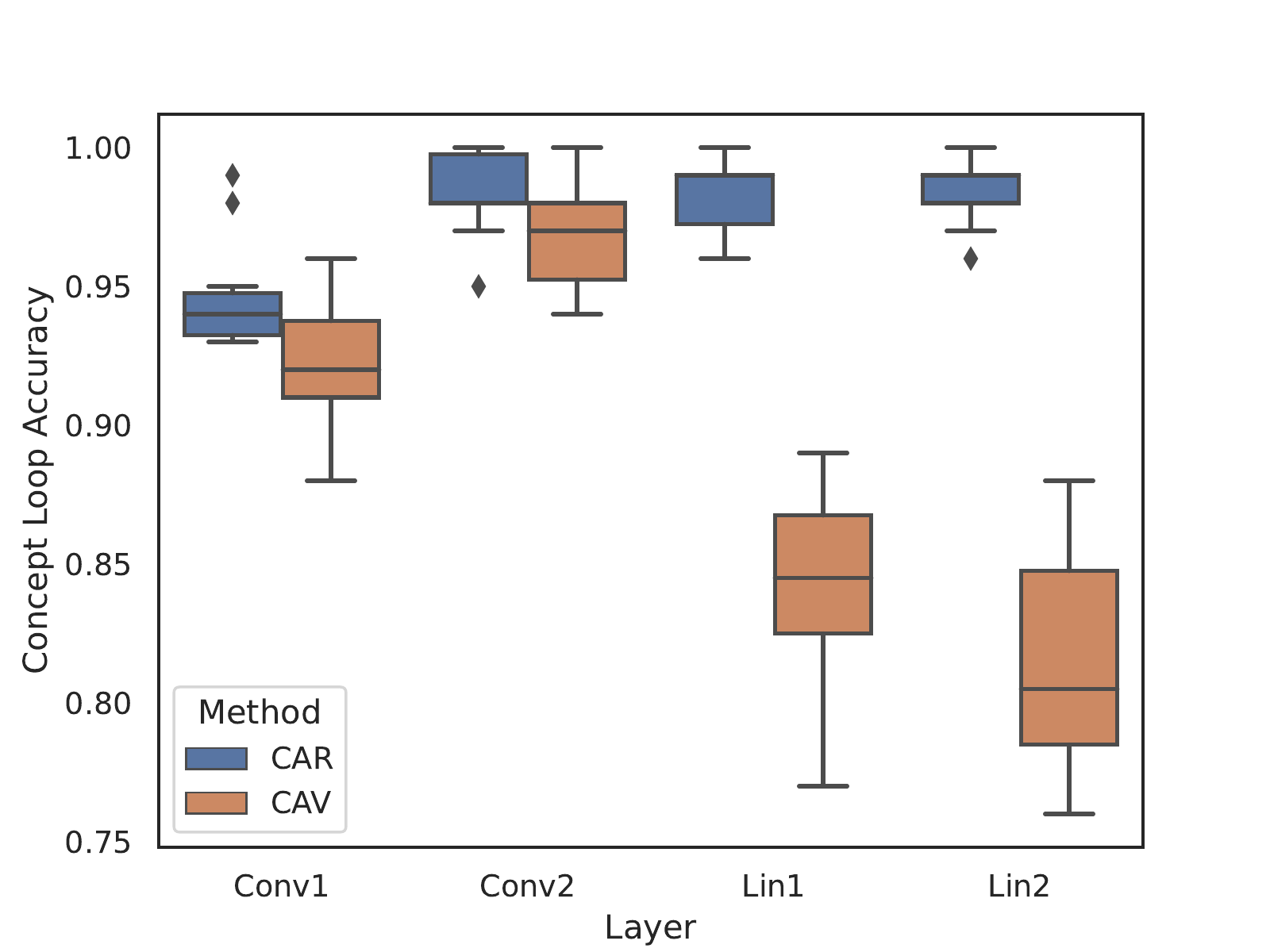}
		\caption{Loop concept}
	\end{subfigure}
	\hfil
	\begin{subfigure}{0.45\linewidth}
		\includegraphics[width=\linewidth]{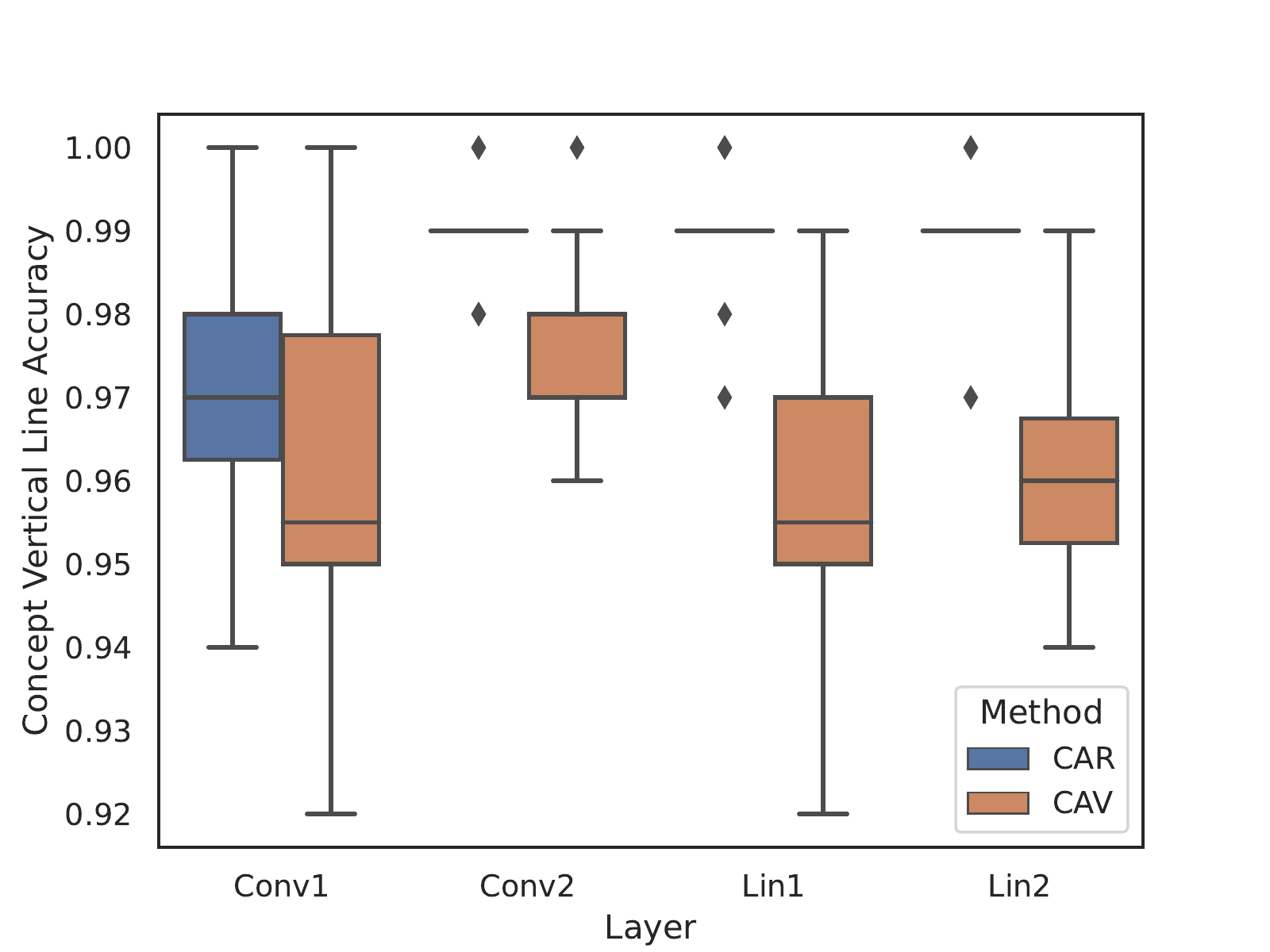}
		\caption{Vertical line concept}
	\end{subfigure}
	\begin{subfigure}{0.45\linewidth}
		\includegraphics[width=\linewidth]{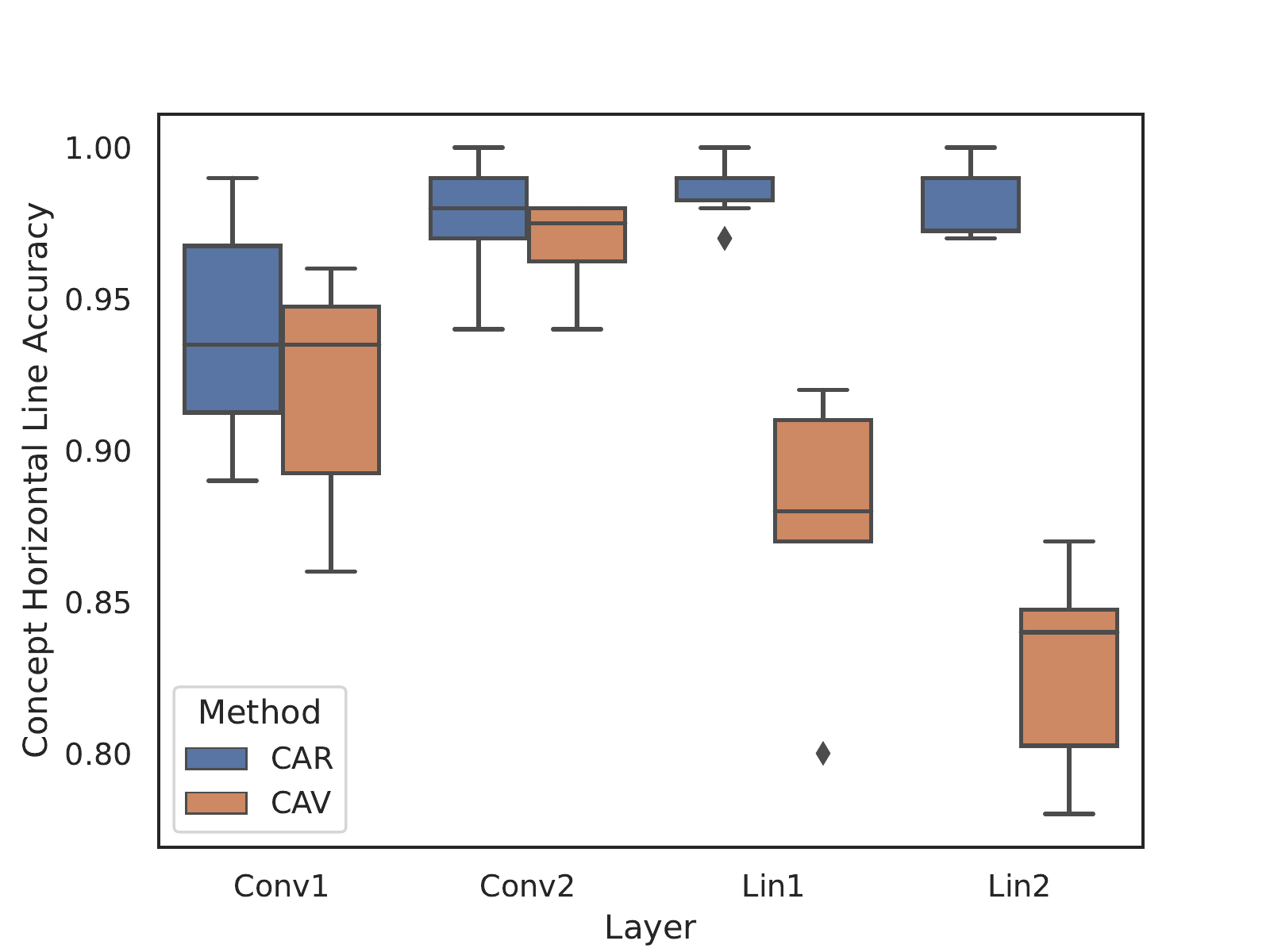}
		\caption{Horizontal line concept}
	\end{subfigure}
	\hfil
	\begin{subfigure}{0.45\linewidth}
		\includegraphics[width=\linewidth]{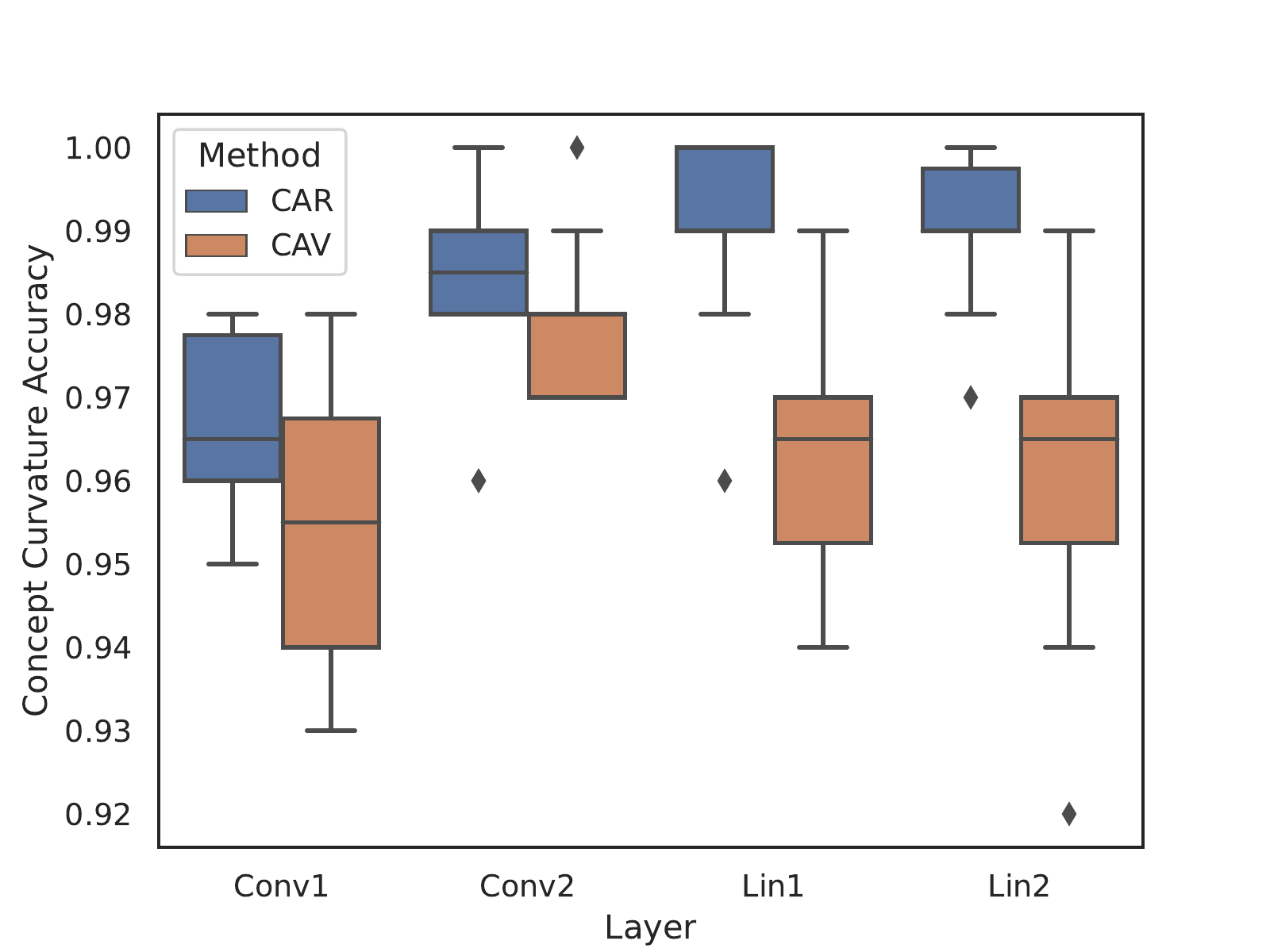}
		\caption{Curvature concept}
	\end{subfigure}
	
	\caption{Concept accuracy for MNIST concepts}
	\label{fig:mnist_acc_per_concept}
\end{figure}

\begin{figure}[!ht]
	\centering
	\begin{subfigure}{0.45\linewidth}
		\includegraphics[width=\linewidth]{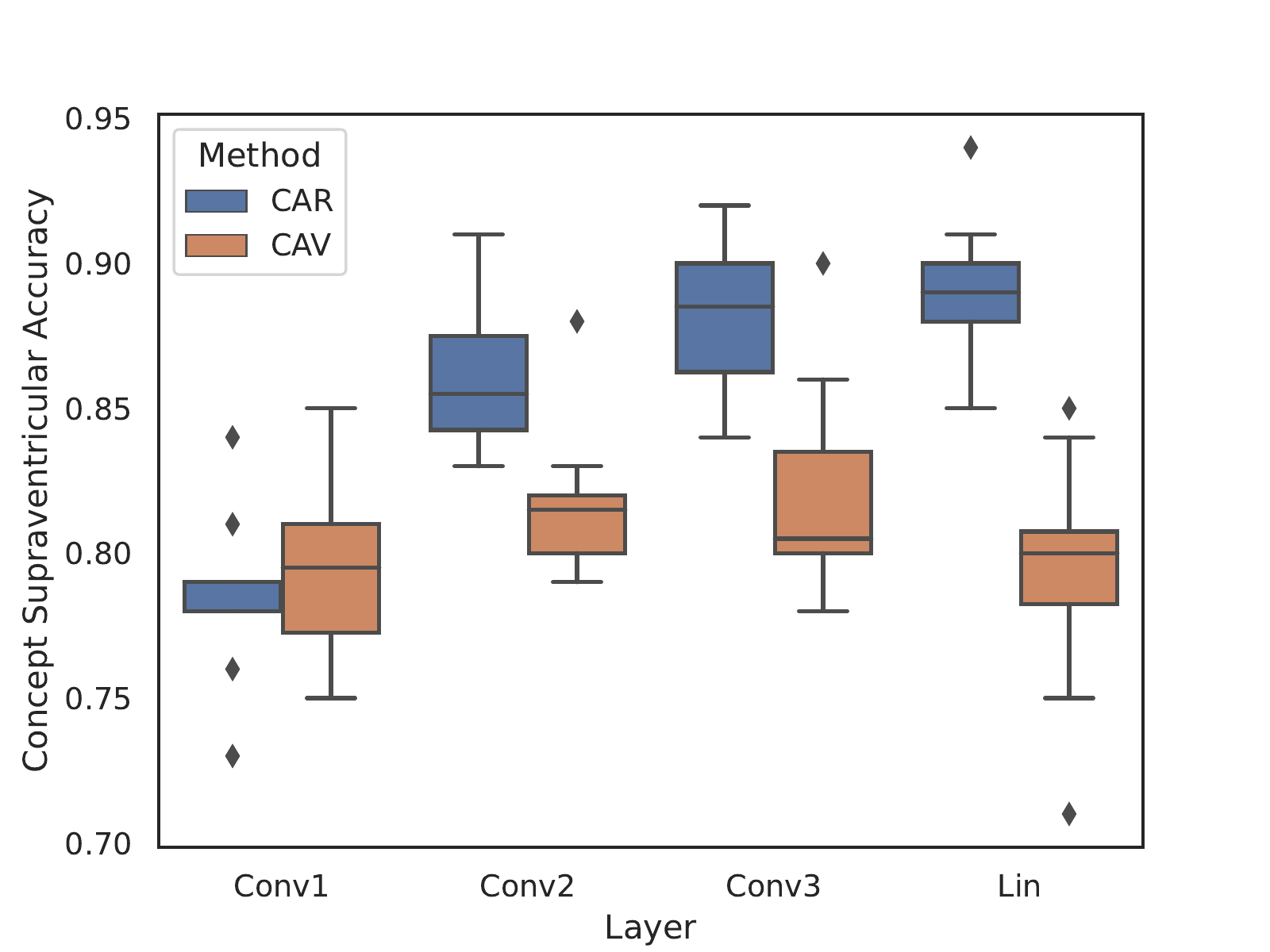}
		\caption{Supraventricular concept}
	\end{subfigure}
	\hfil
	\begin{subfigure}{0.45\linewidth}
		\includegraphics[width=\linewidth]{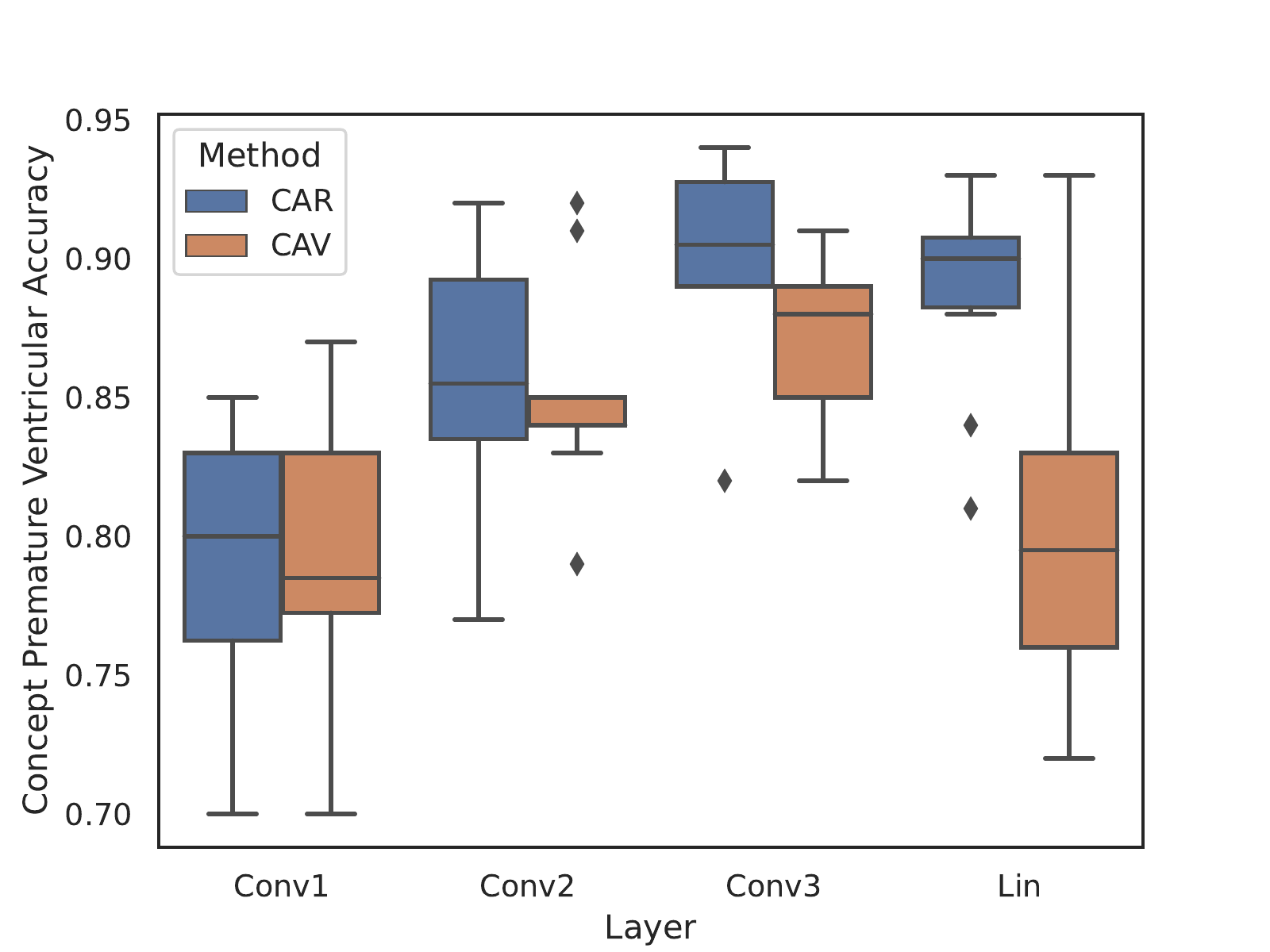}
		\caption{Premature ventricular concept}
	\end{subfigure}
	\begin{subfigure}{0.45\linewidth}
		\includegraphics[width=\linewidth]{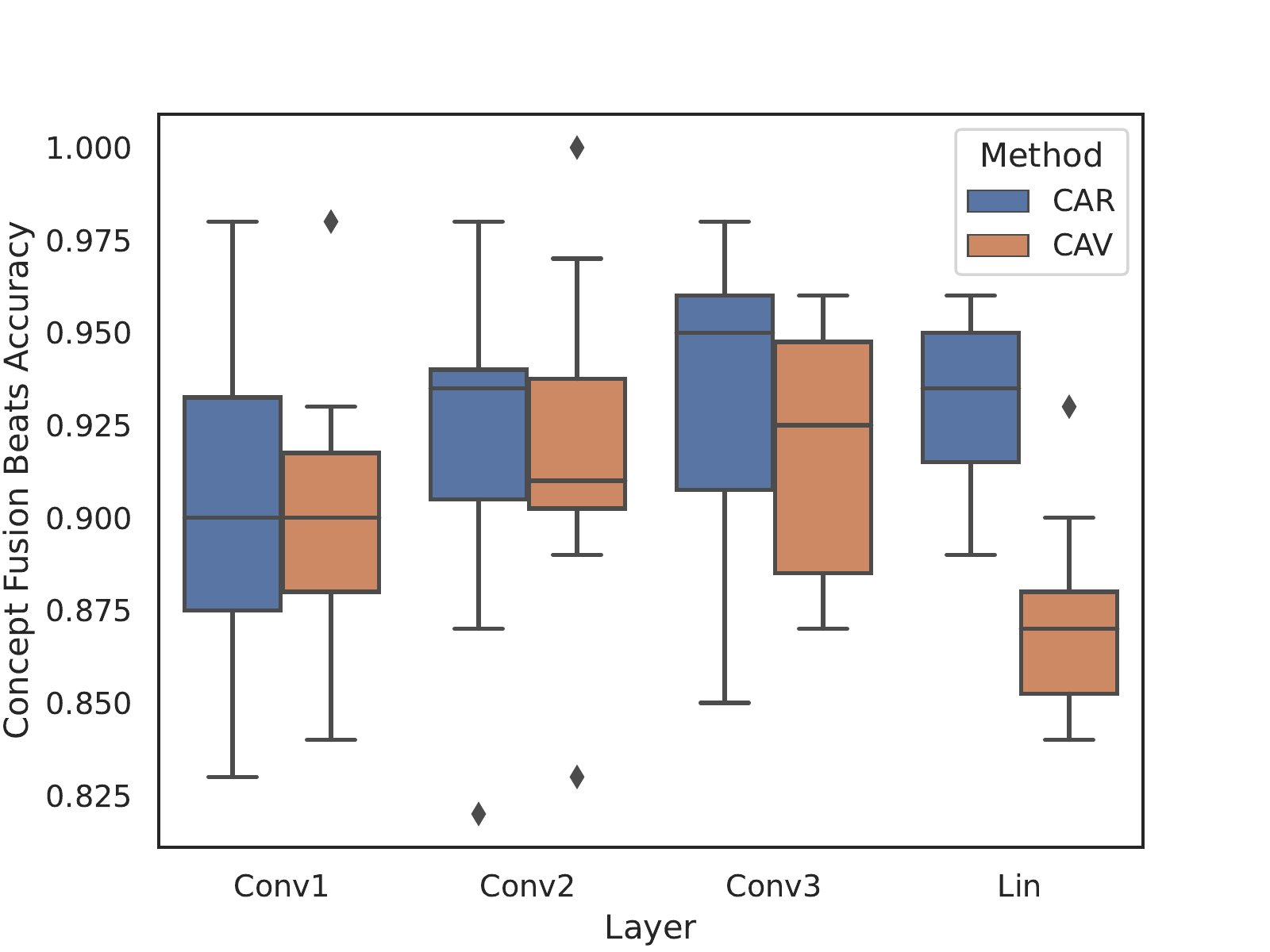}
		\caption{Fusion beats concept}
	\end{subfigure}
	\hfil
	\begin{subfigure}{0.45\linewidth}
		\includegraphics[width=\linewidth]{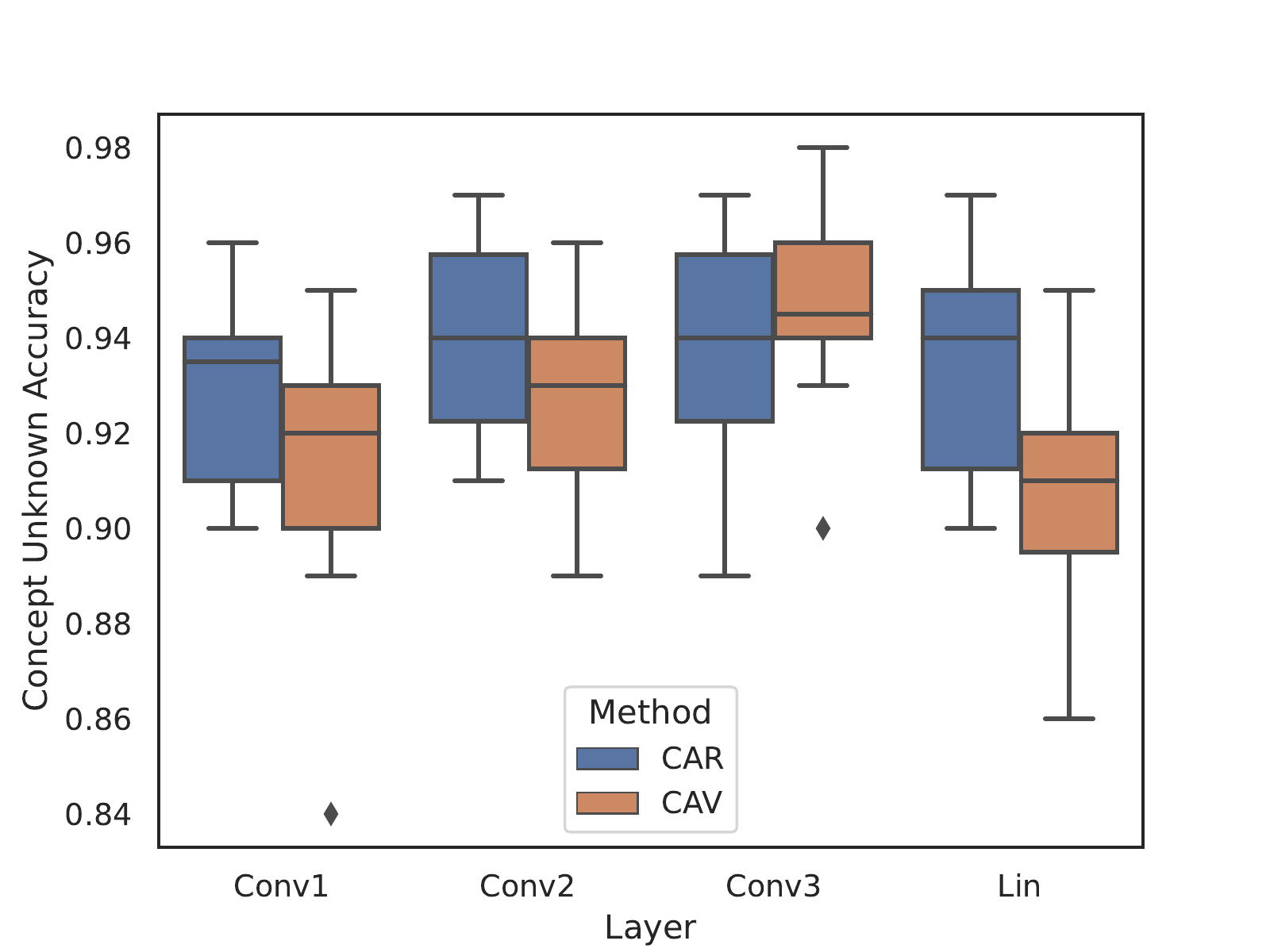}
		\caption{Unknown beat concept}
	\end{subfigure}

	\caption{Concept accuracy for ECG concepts}
	\label{fig:ecg_acc_per_concept}
\end{figure}

\begin{figure}[!ht]
	\centering
	\begin{subfigure}{0.32\linewidth}
		\includegraphics[width=\linewidth]{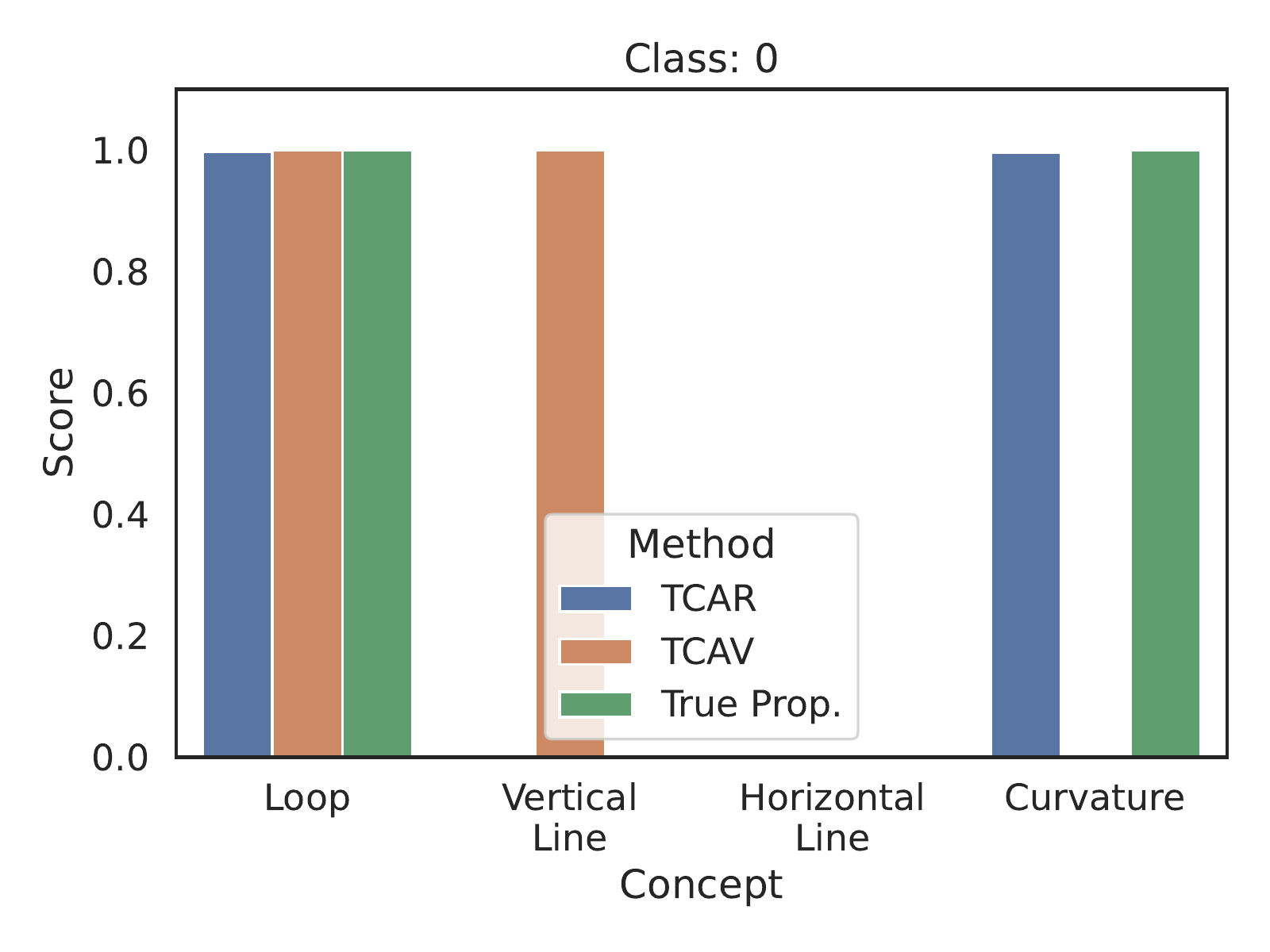}
		\caption{Class 0}
	\end{subfigure}
	\hfil
	\begin{subfigure}{0.32\linewidth}
		\includegraphics[width=\linewidth]{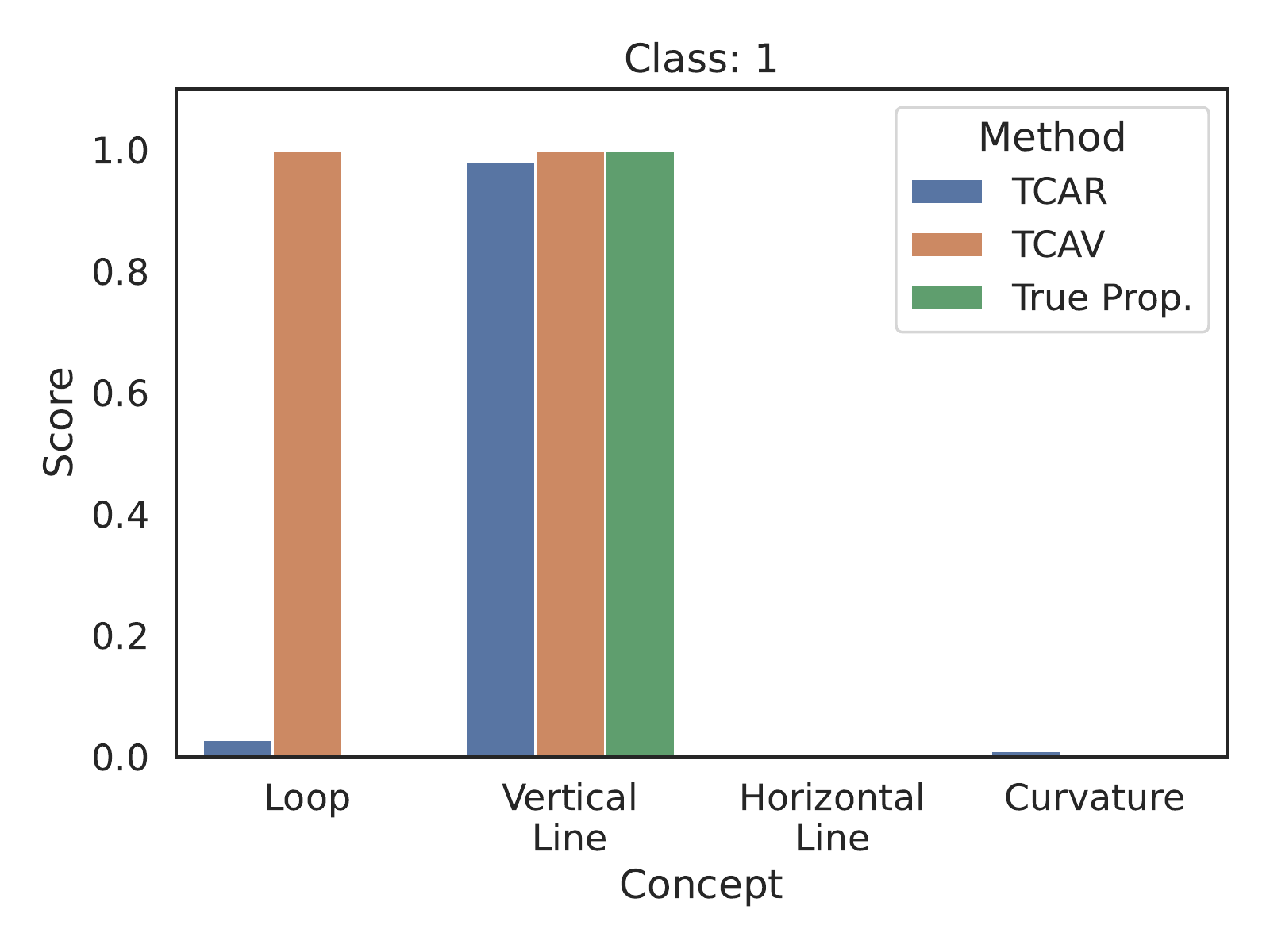}
		\caption{Class 1}
	\end{subfigure}
	\begin{subfigure}{0.32\linewidth}
		\includegraphics[width=\linewidth]{Figures/mnist_global_class2}
		\caption{Class 2}
	\end{subfigure}
	\hfil
	\begin{subfigure}{0.32\linewidth}
		\includegraphics[width=\linewidth]{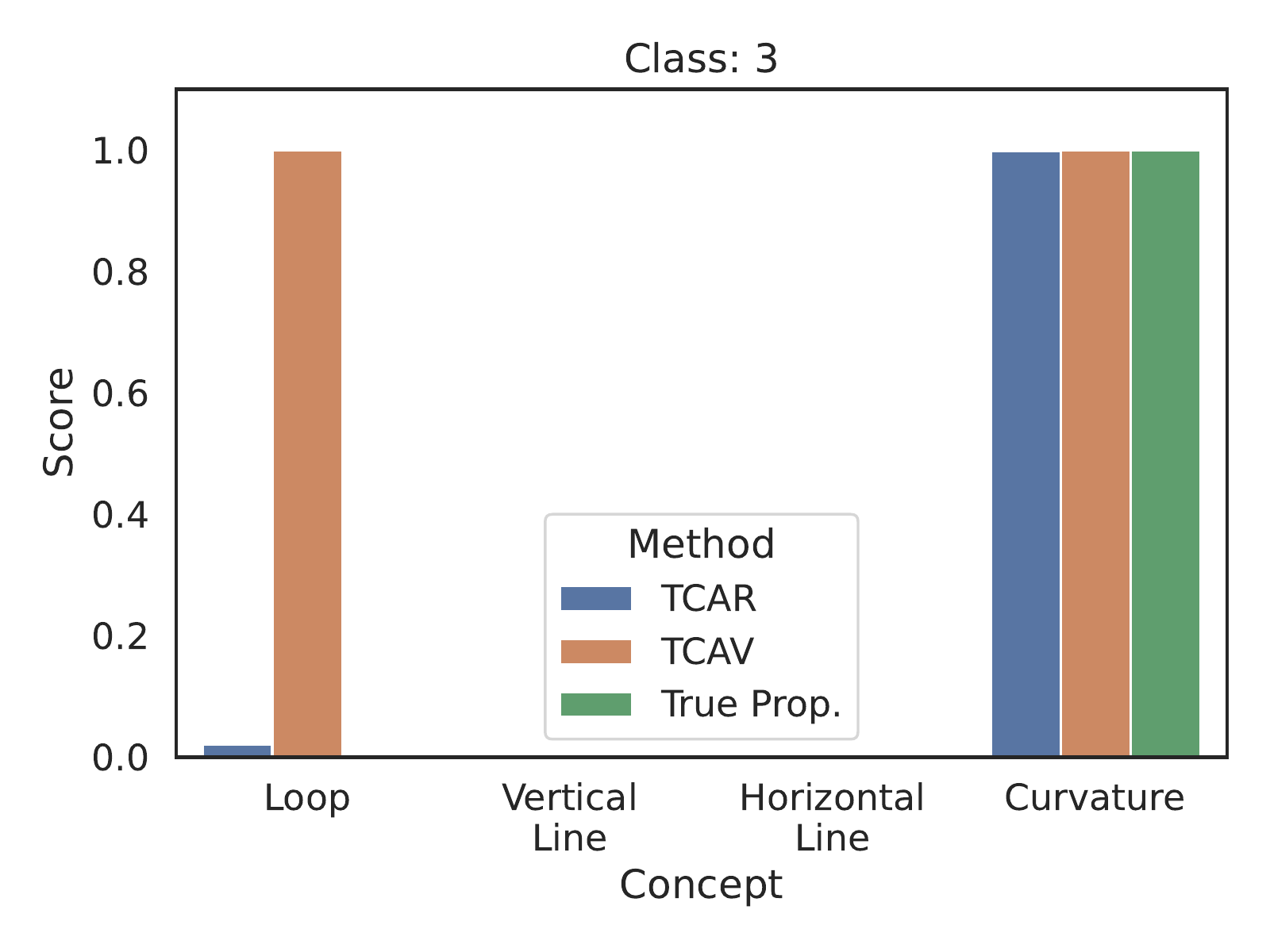}
		\caption{Class 3}
	\end{subfigure}
	\begin{subfigure}{0.32\linewidth}
		\includegraphics[width=\linewidth]{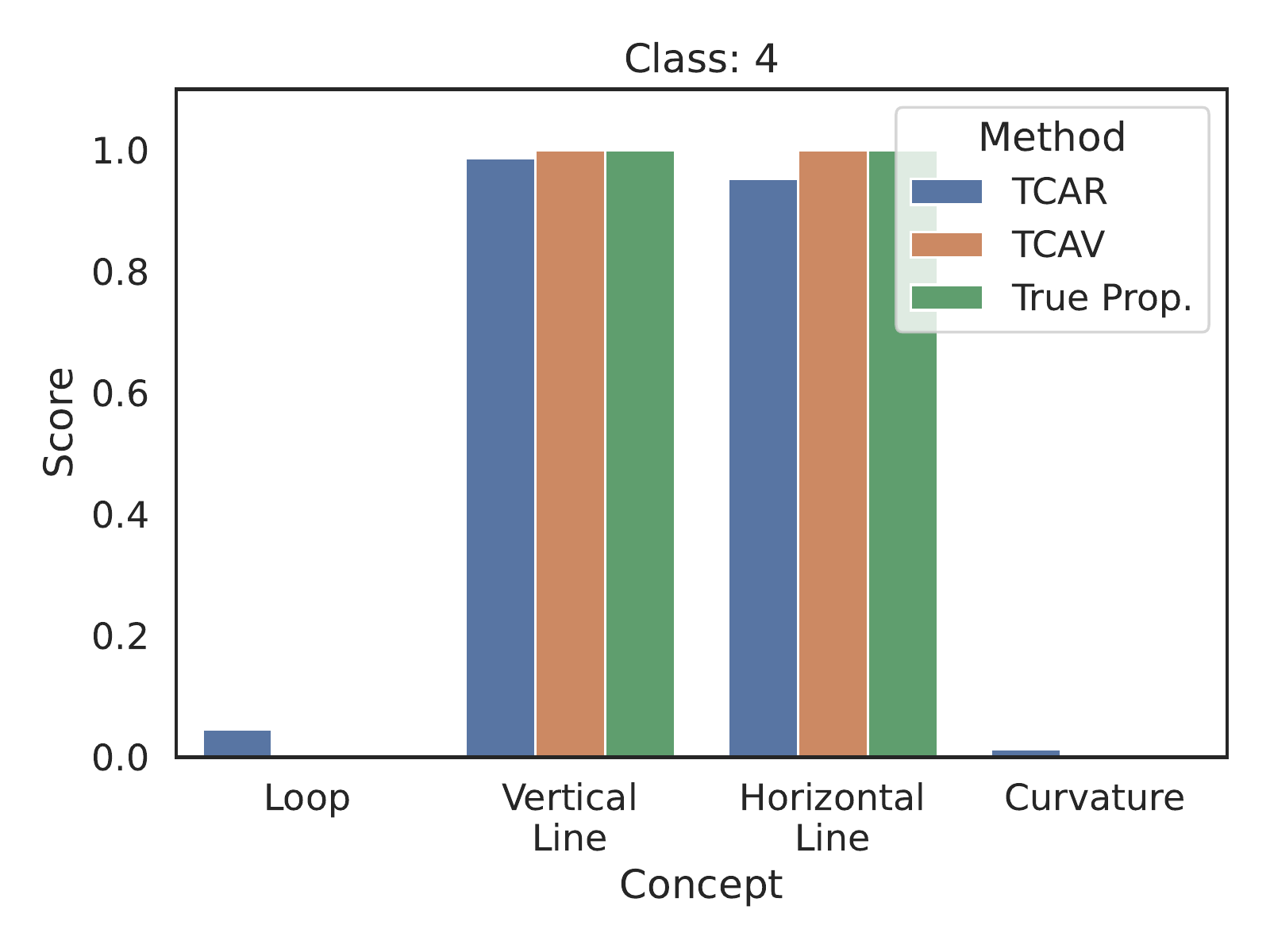}
		\caption{Class 4}
	\end{subfigure}
	\hfil
	\begin{subfigure}{0.32\linewidth}
		\includegraphics[width=\linewidth]{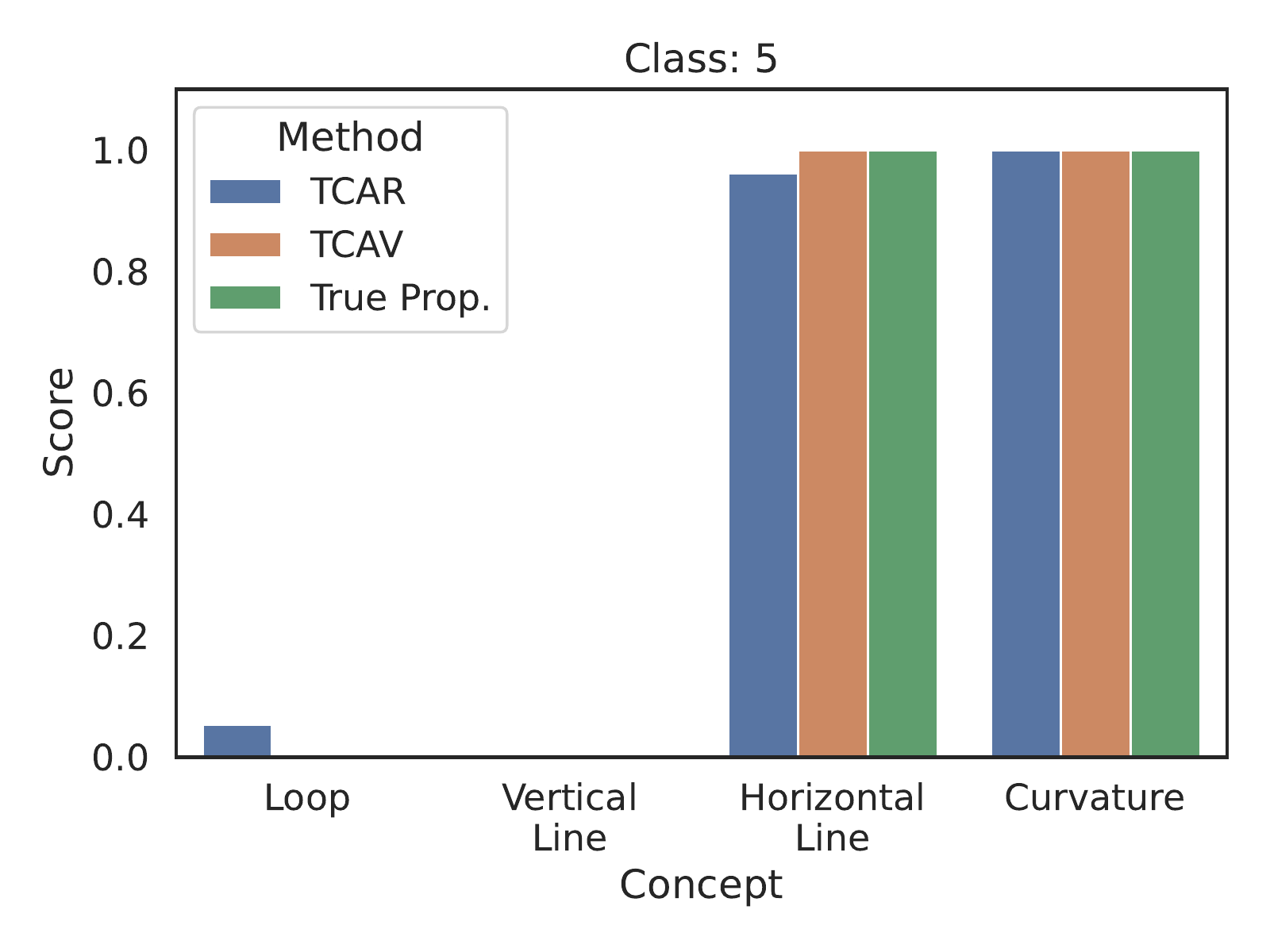}
		\caption{Class 5}
	\end{subfigure}
	\begin{subfigure}{0.32\linewidth}
		\includegraphics[width=\linewidth]{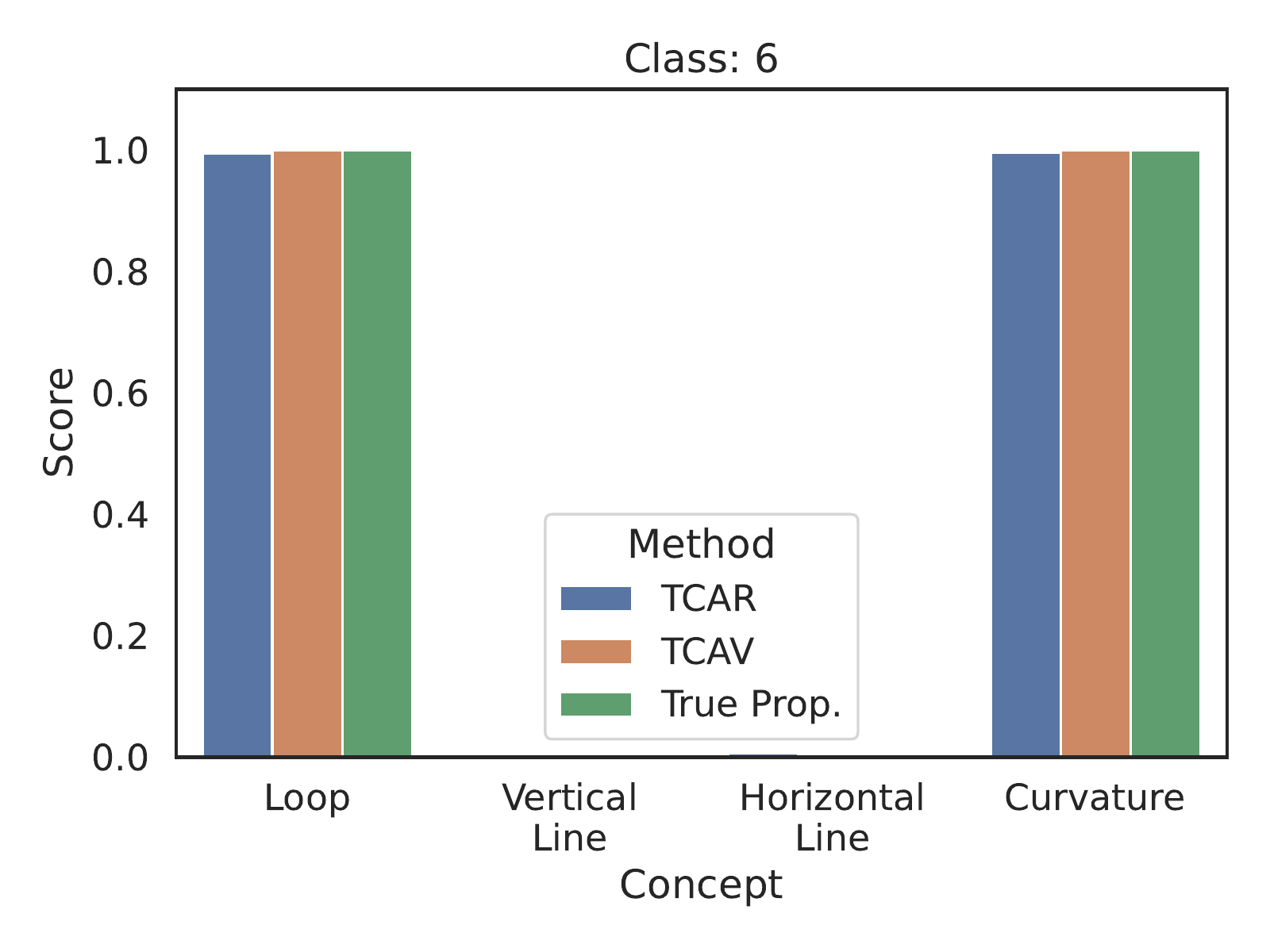}
		\caption{Class 6}
	\end{subfigure}
	\hfil
	\begin{subfigure}{0.32\linewidth}
		\includegraphics[width=\linewidth]{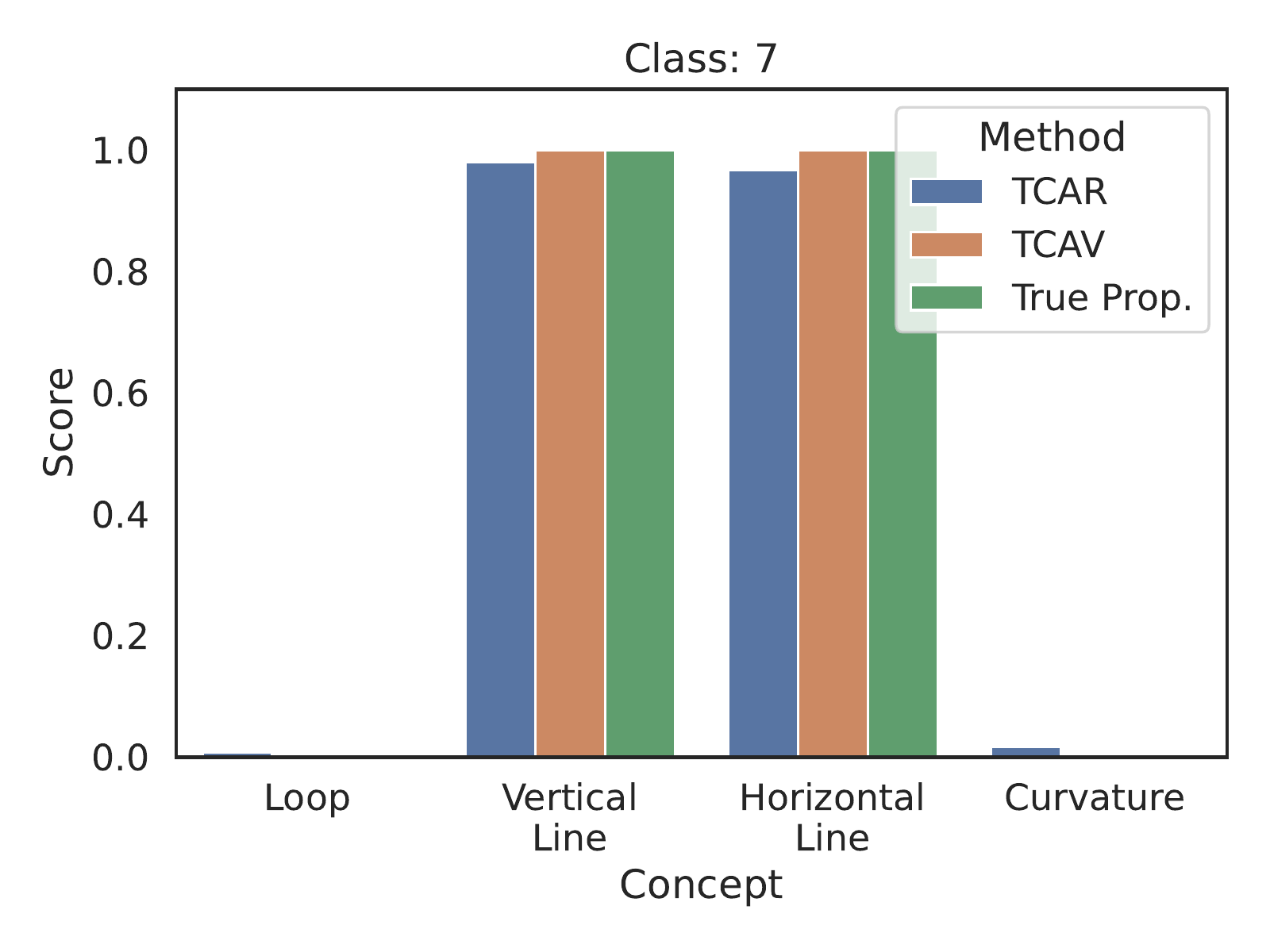}
		\caption{Class 7}
	\end{subfigure}
	\begin{subfigure}{0.32\linewidth}
		\includegraphics[width=\linewidth]{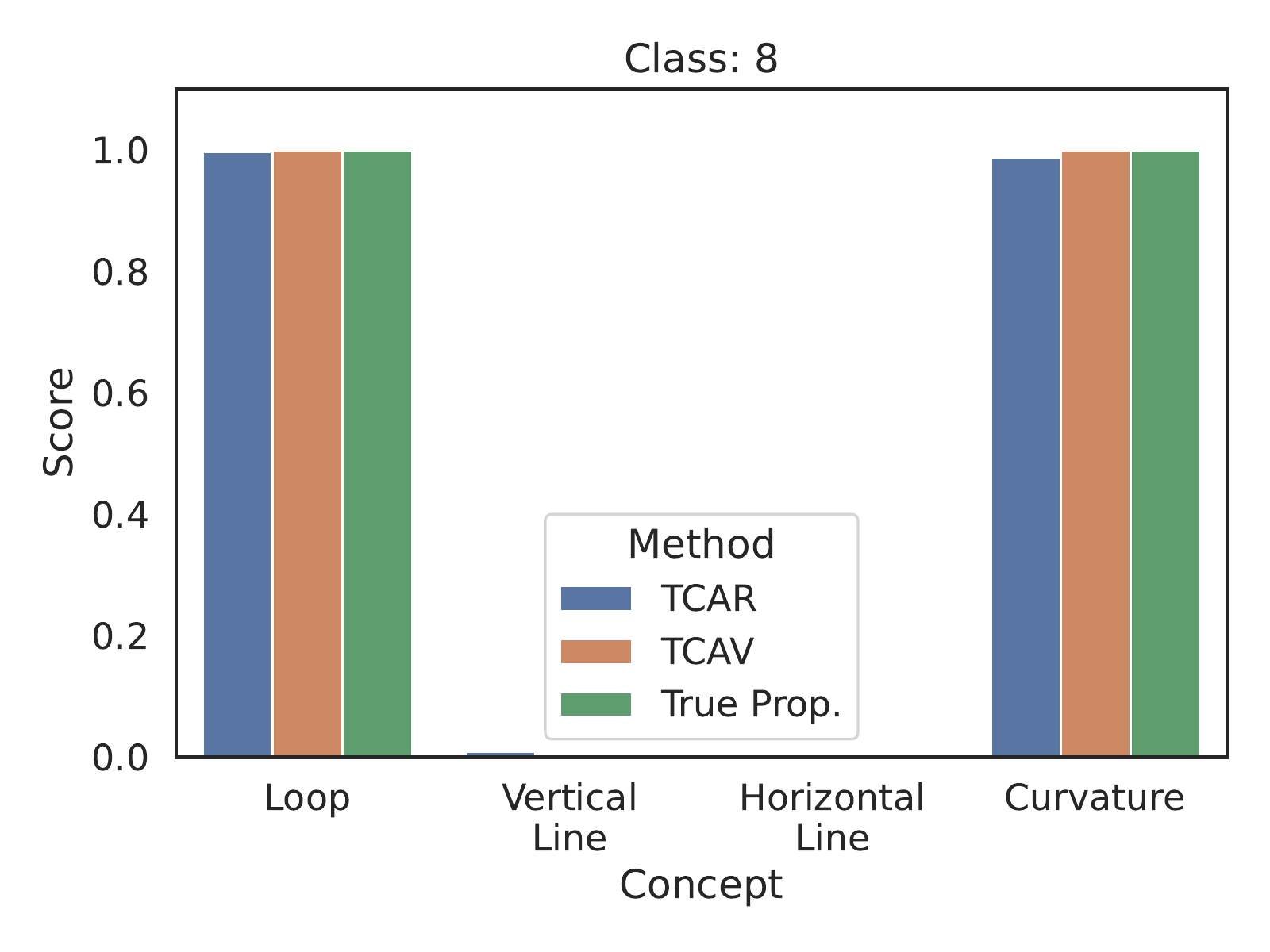}
		\caption{Class 8}
	\end{subfigure}
	\hfil
	\begin{subfigure}{0.32\linewidth}
		\includegraphics[width=\linewidth]{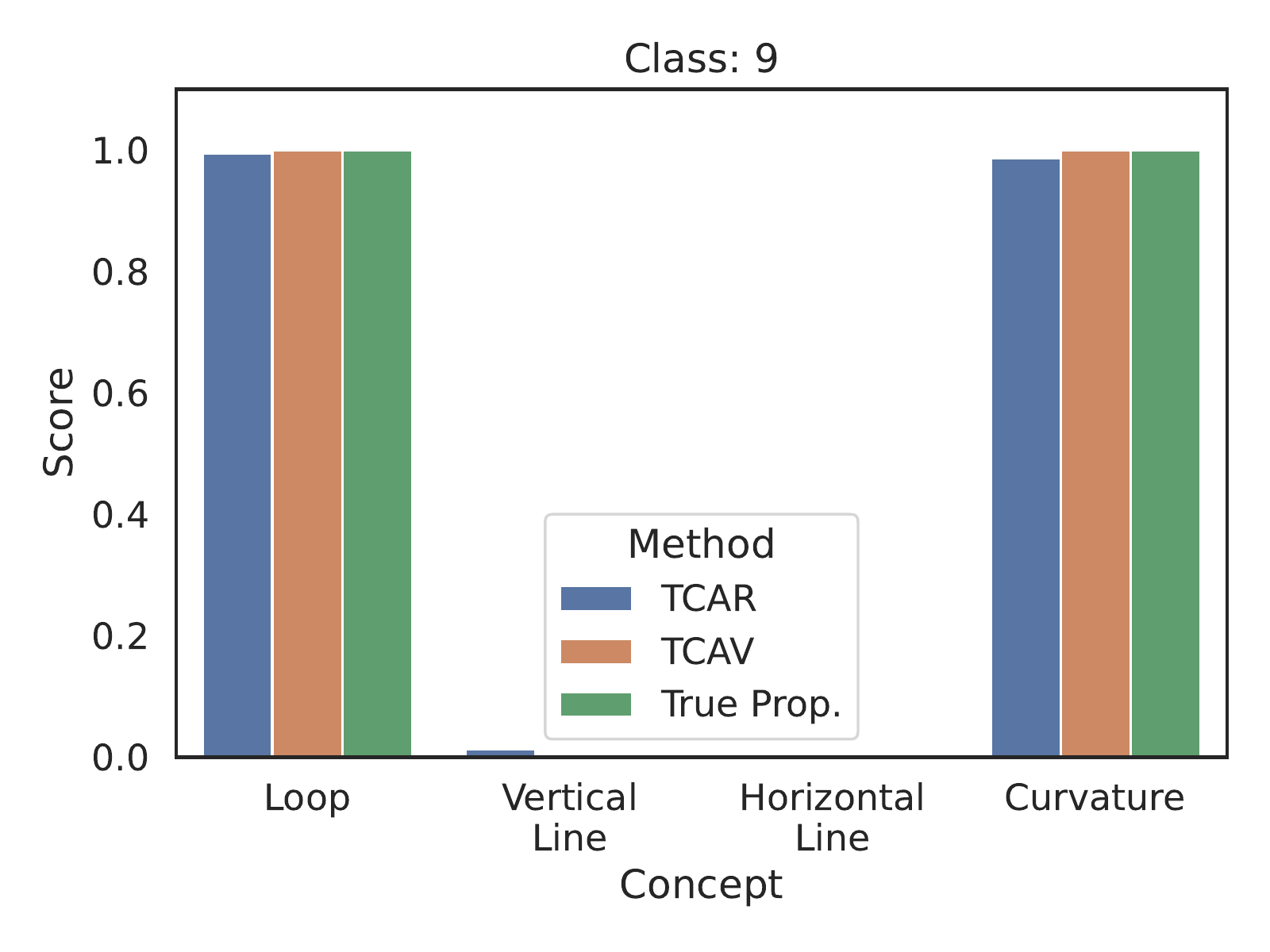}
		\caption{Class 9}
	\end{subfigure}
	
	\caption{Global concept explanations for MNIST}
	\label{fig:mnist_tcar_extra}
\end{figure}

\begin{figure}[!ht]
	\centering
	\begin{subfigure}{0.32\linewidth}
		\includegraphics[width=\linewidth]{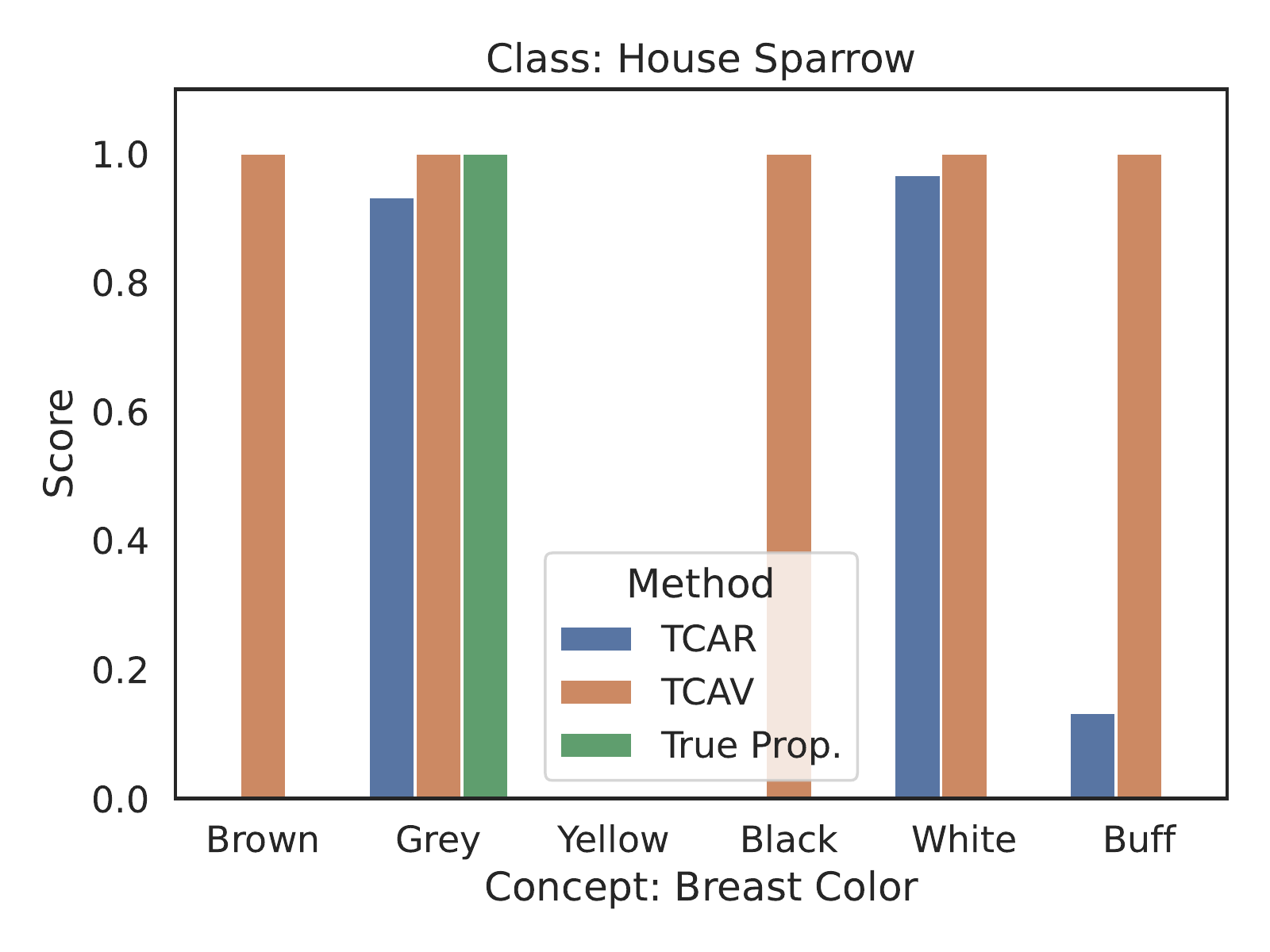}
		\caption{House sparrow breast colour}
	\end{subfigure}
	\hfil
	\begin{subfigure}{0.32\linewidth}
		\includegraphics[width=\linewidth]{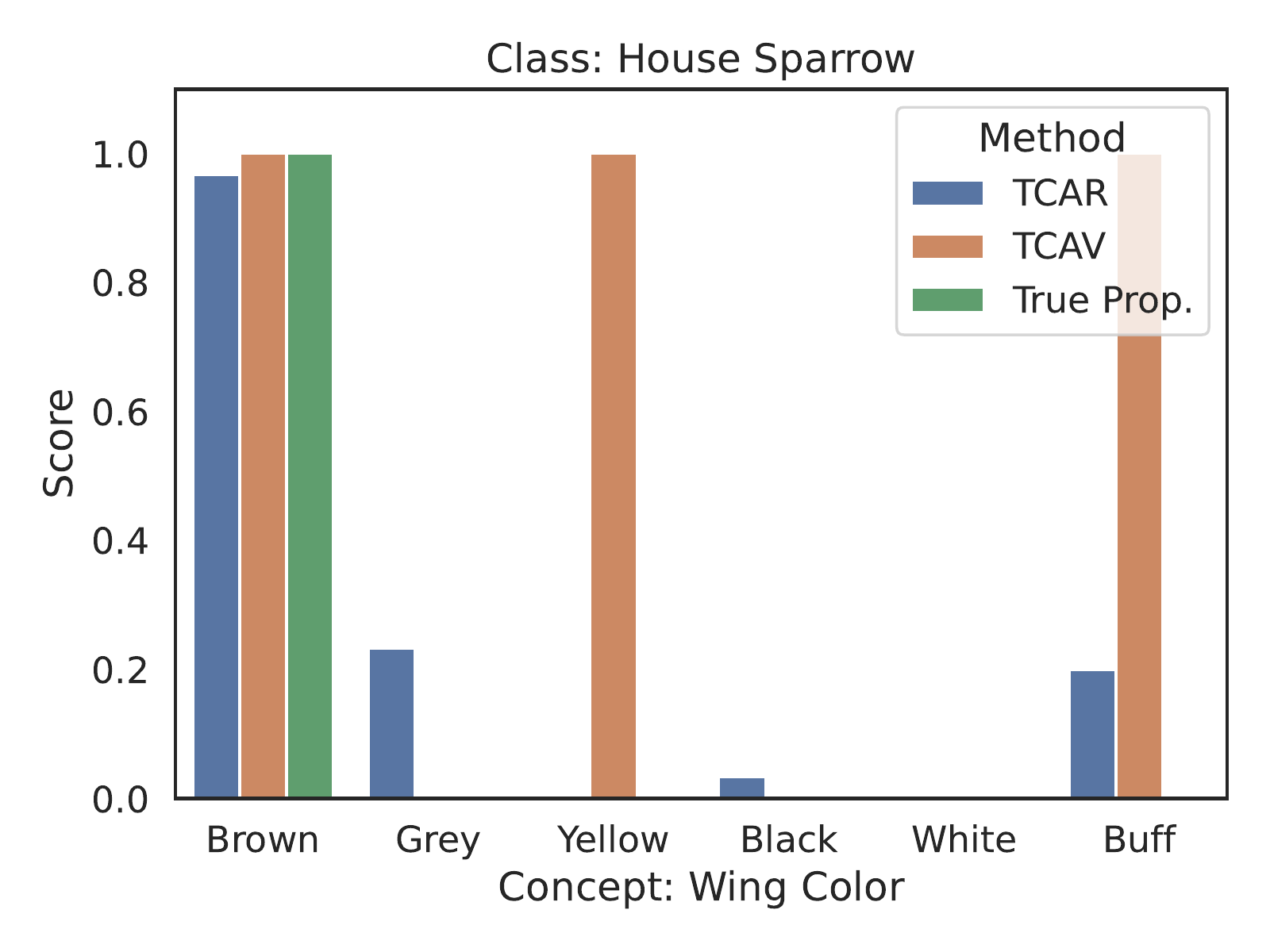}
		\caption{House sparrow wing colour}
	\end{subfigure}
	\hfil
	\begin{subfigure}{0.32\linewidth}
		\includegraphics[width=\linewidth]{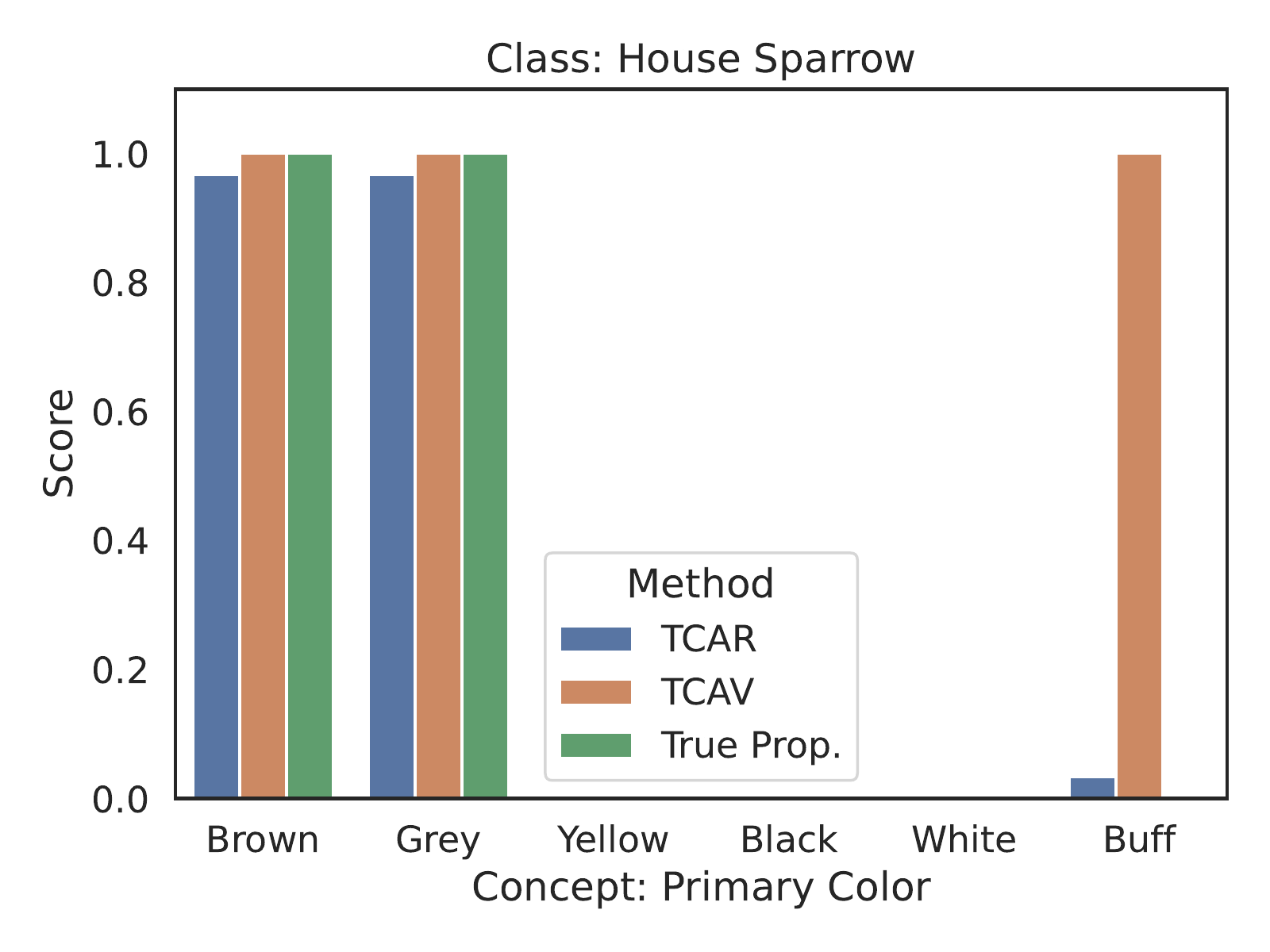}
		\caption{House sparrow primary colour}
	\end{subfigure}
	\begin{subfigure}{0.32\linewidth}
		\includegraphics[width=\linewidth]{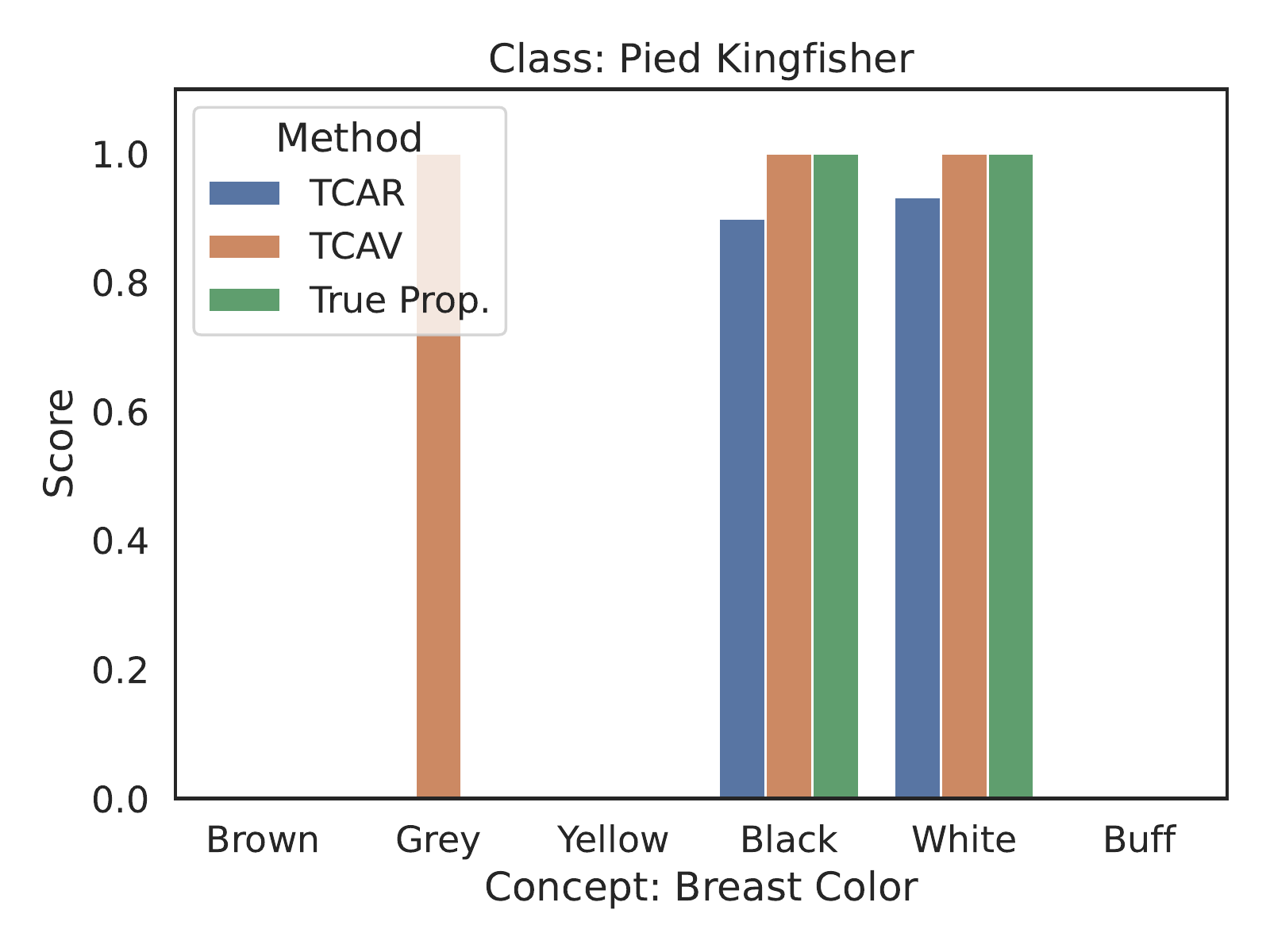}
		\caption{Pied kingfisher breast colour}
	\end{subfigure}
	\hfil
	\begin{subfigure}{0.32\linewidth}
		\includegraphics[width=\linewidth]{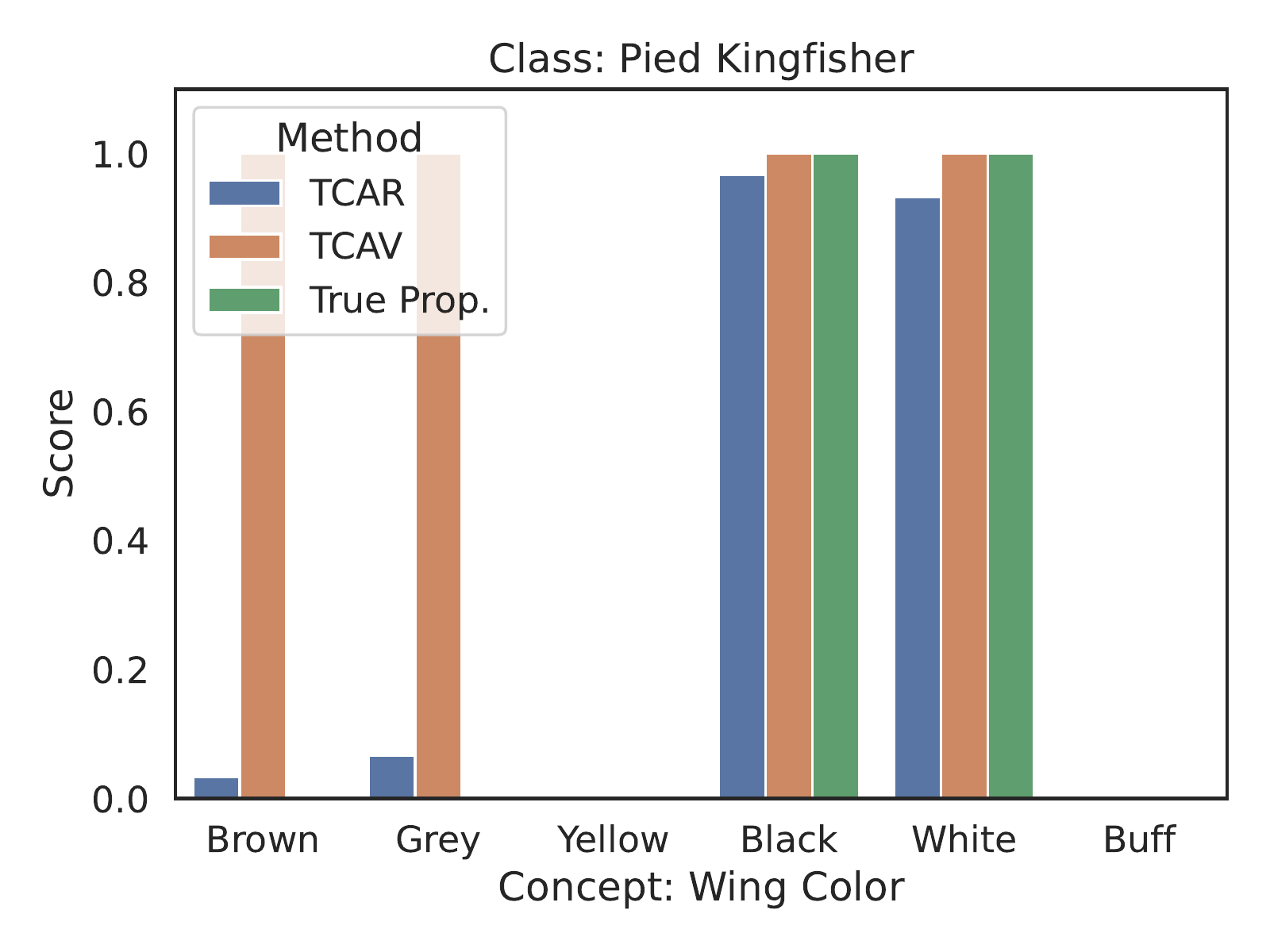}
		\caption{Pied kingfisher wing colour}
	\end{subfigure}
	\hfil
	\begin{subfigure}{0.32\linewidth}
		\includegraphics[width=\linewidth]{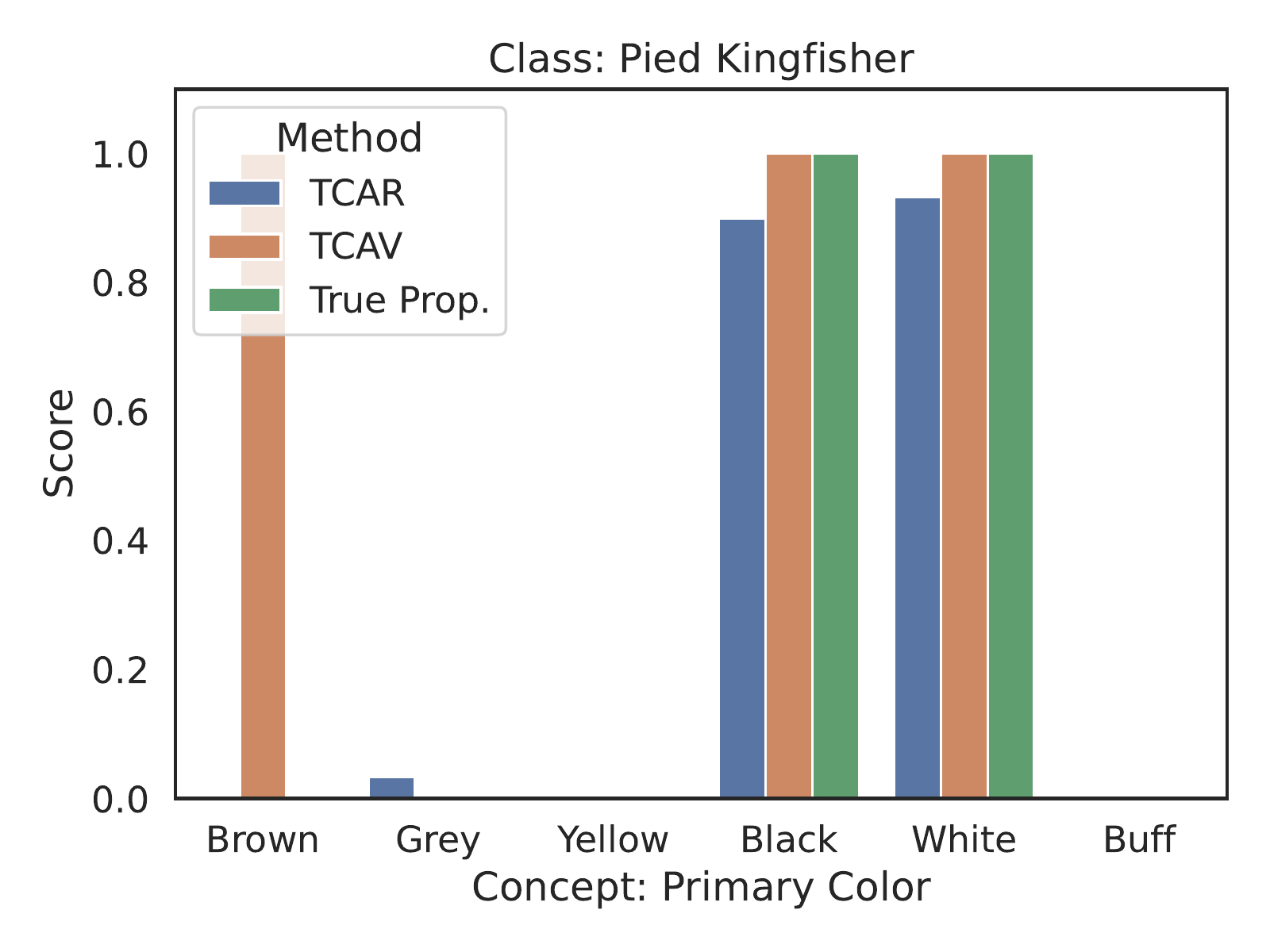}
		\caption{Pied kingfisher primary colour}
	\end{subfigure}
	\begin{subfigure}{0.32\linewidth}
		\includegraphics[width=\linewidth]{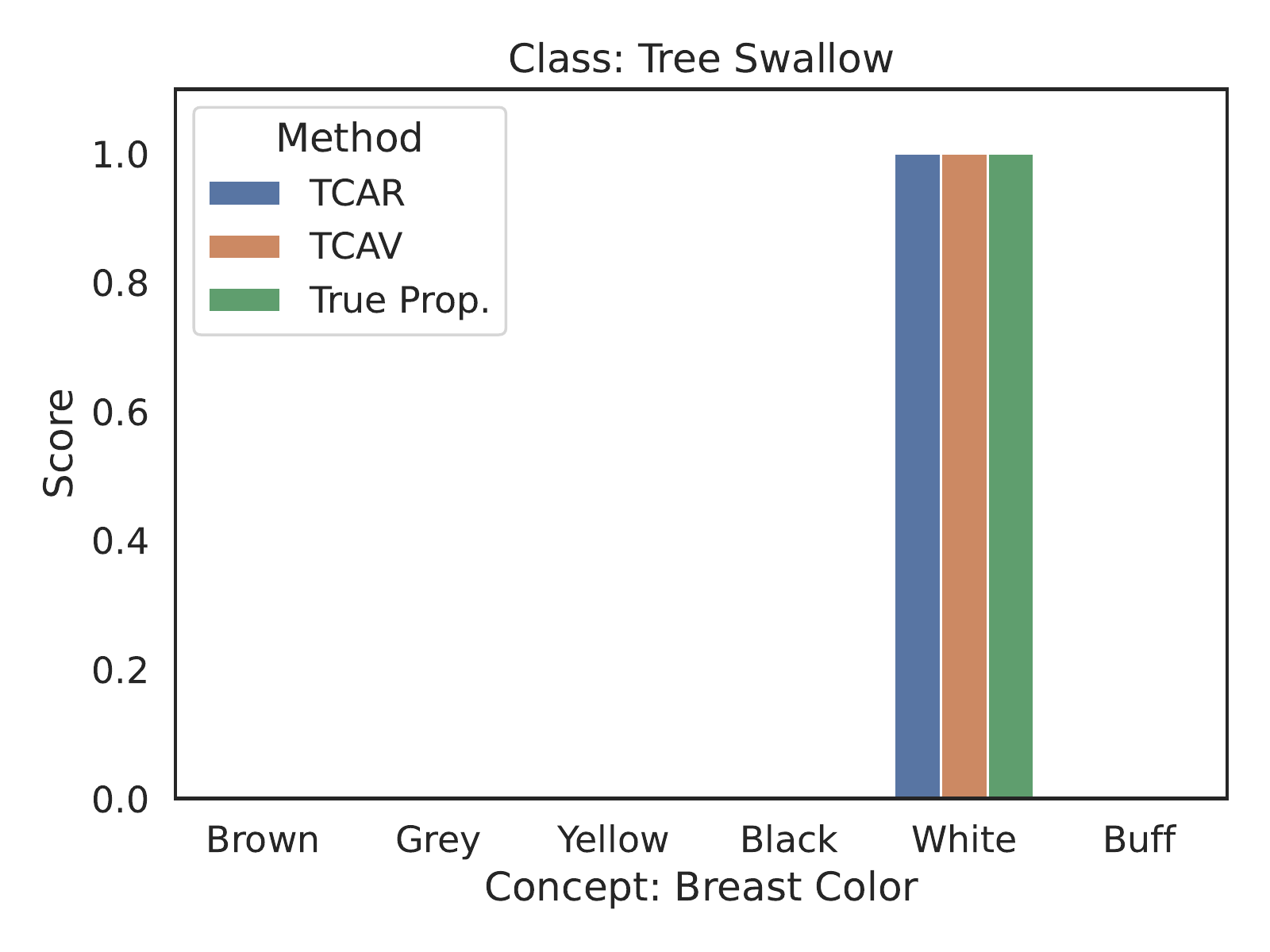}
		\caption{Tree swallow breast colour}
	\end{subfigure}
	\hfil
	\begin{subfigure}{0.32\linewidth}
		\includegraphics[width=\linewidth]{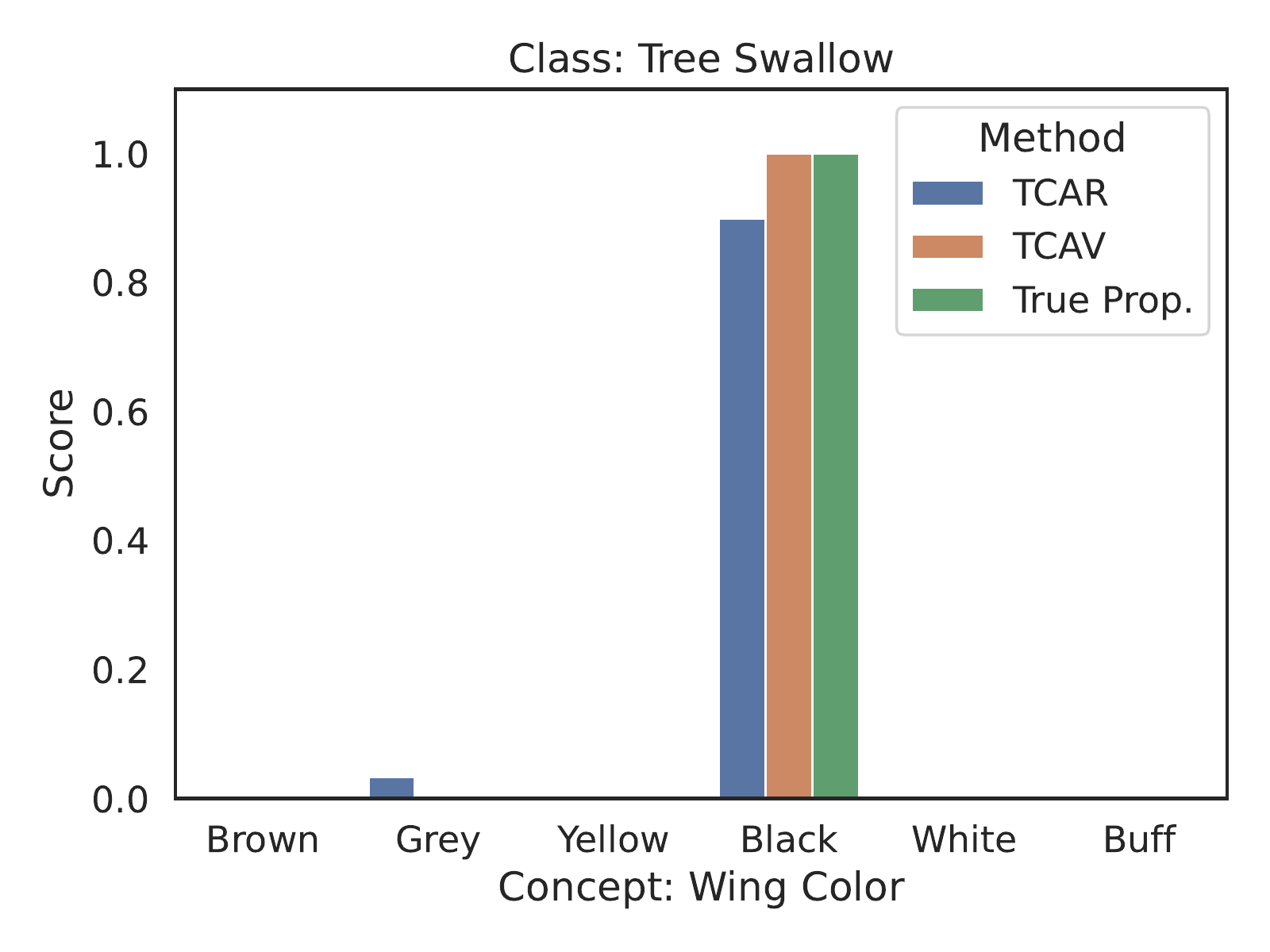}
		\caption{Tree swallow wing colour}
	\end{subfigure}
	\hfil
	\begin{subfigure}{0.32\linewidth}
		\includegraphics[width=\linewidth]{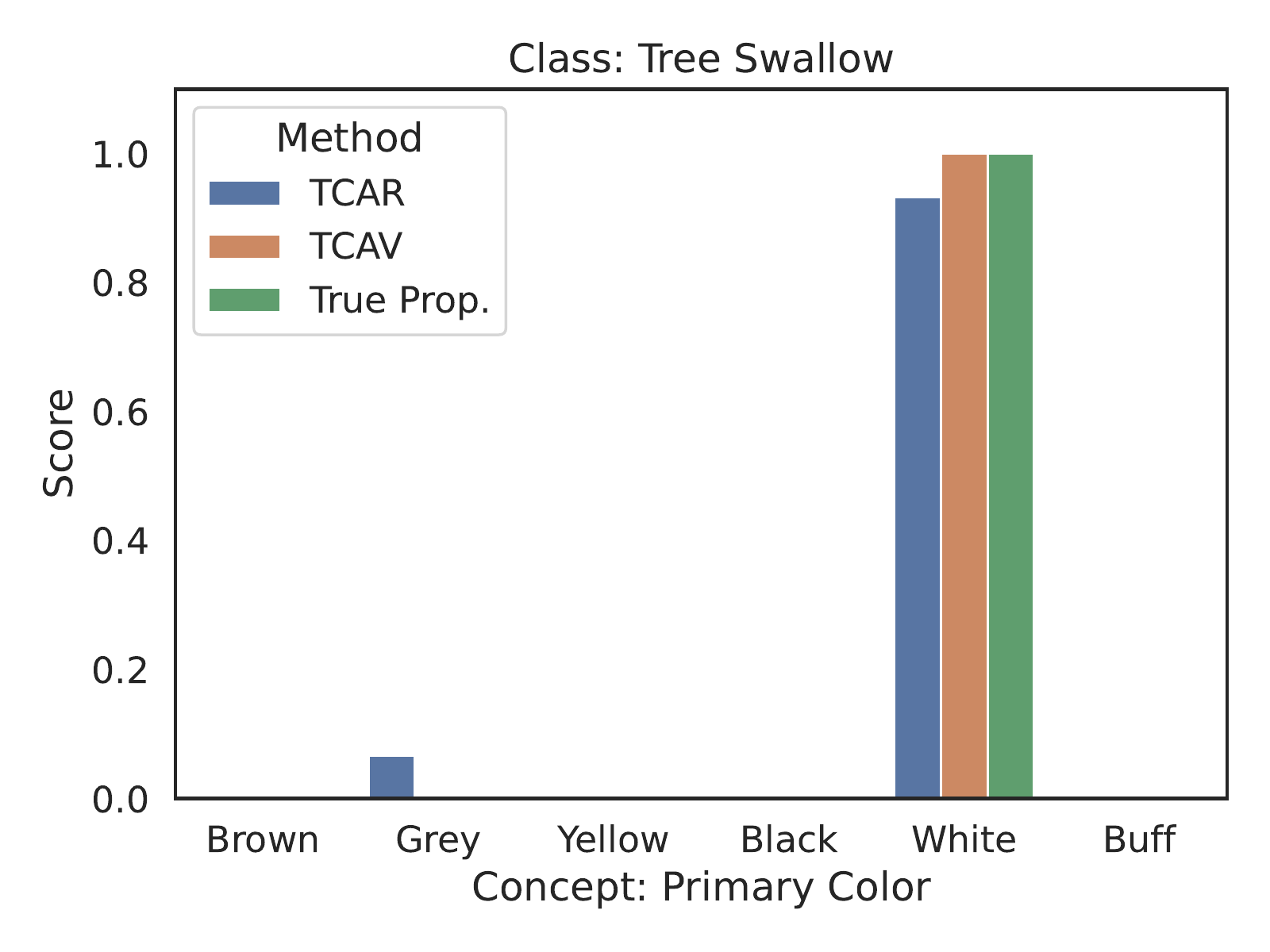}
		\caption{Tree swallow primary colour}
	\end{subfigure}
	
	\caption{Global concept explanations for CUB}
	\label{fig:cub_tcar_extra}
\end{figure}

\begin{figure}[!ht]
	\centering
	\begin{subfigure}{0.15\linewidth}
		\includegraphics[width=\linewidth]{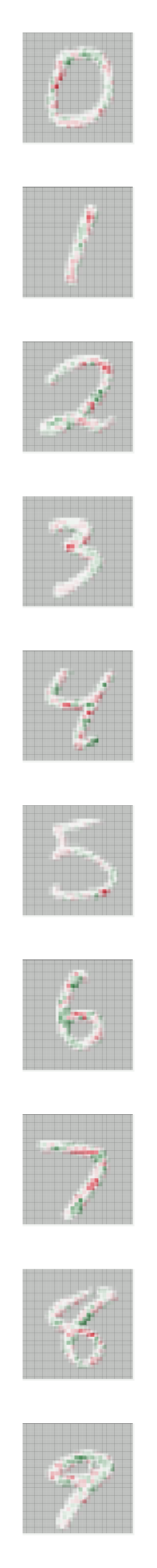}
		\caption{Loop}
	\end{subfigure}
	\hfil
	\begin{subfigure}{0.15\linewidth}
	\includegraphics[width=\linewidth]{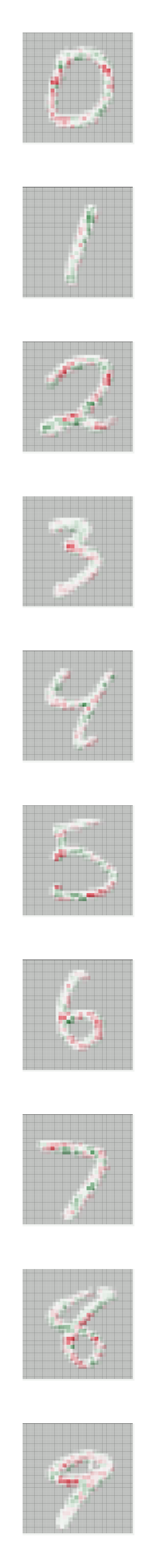}
	\caption{Vertical line}
	\end{subfigure}
	\hfil
	\begin{subfigure}{0.15\linewidth}
		\includegraphics[width=\linewidth]{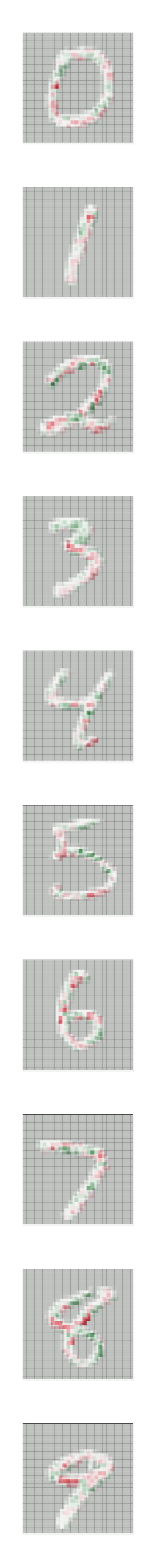}
		\caption{Horiz. line}
	\end{subfigure}
	\hfil
	\begin{subfigure}{0.15\linewidth}
		\includegraphics[width=\linewidth]{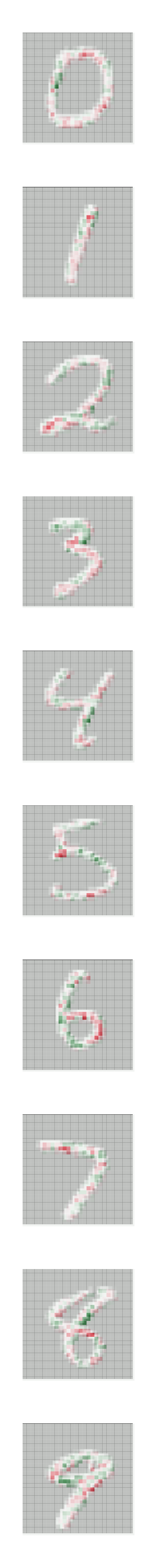}
		\caption{Curvature}
	\end{subfigure}
	\hfil
	\begin{subfigure}{0.15\linewidth}
		\includegraphics[width=\linewidth]{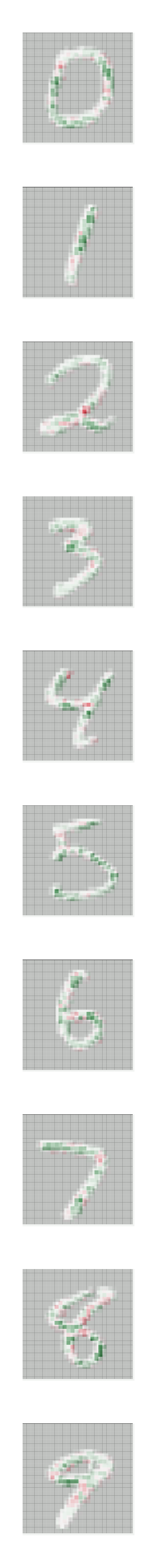}
		\caption{Vanilla}
	\end{subfigure}

	\caption{Concept-based saliency maps for MNIST}
	\label{fig:mnist_saliency}
\end{figure}

\begin{figure}[!ht]
	\centering
	\begin{subfigure}{0.49\linewidth}
		\includegraphics[width=\linewidth]{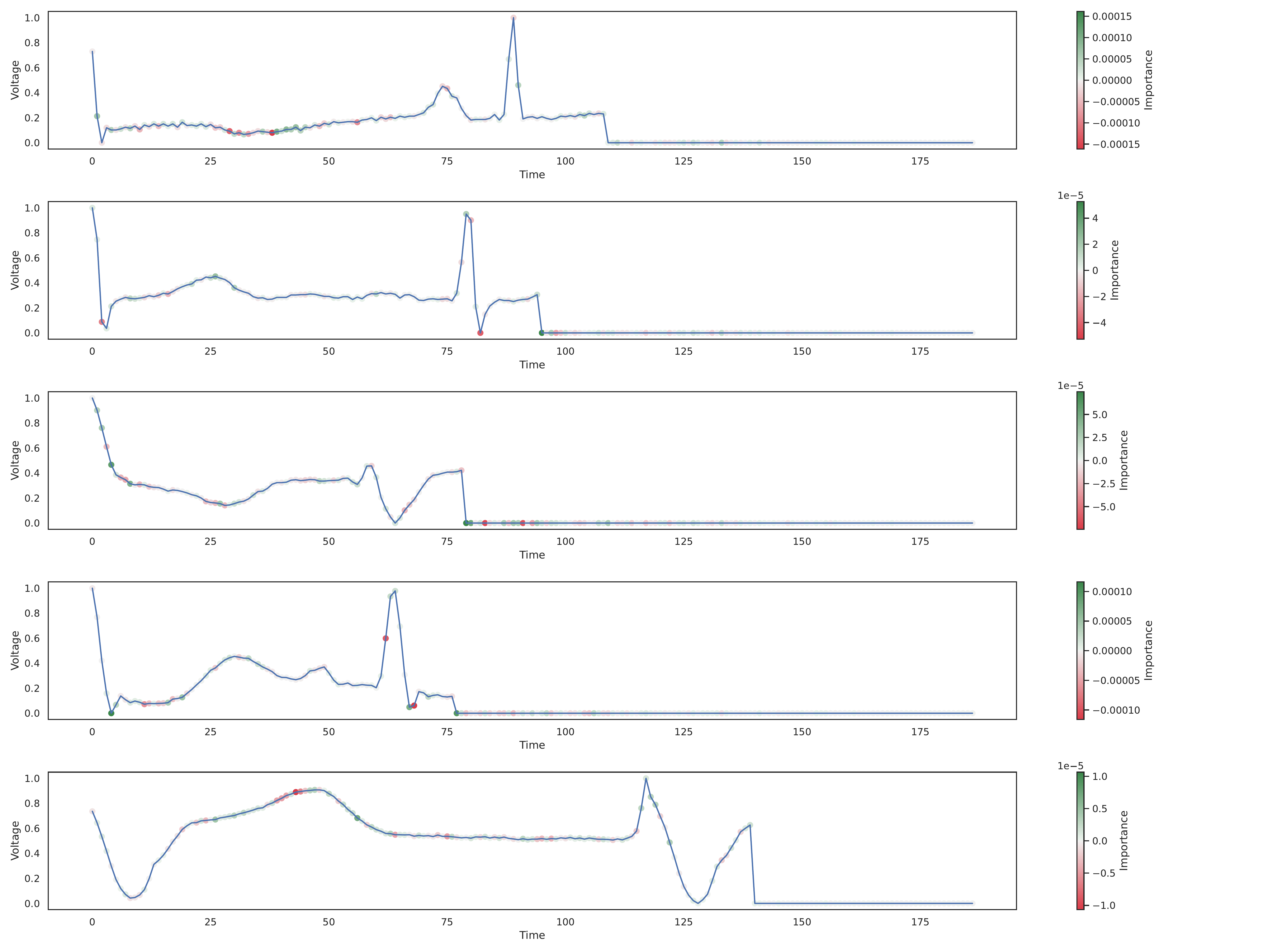}
		\caption{Supraventricular}
	\end{subfigure}
	\hfil
	\begin{subfigure}{0.49\linewidth}
		\includegraphics[width=\linewidth]{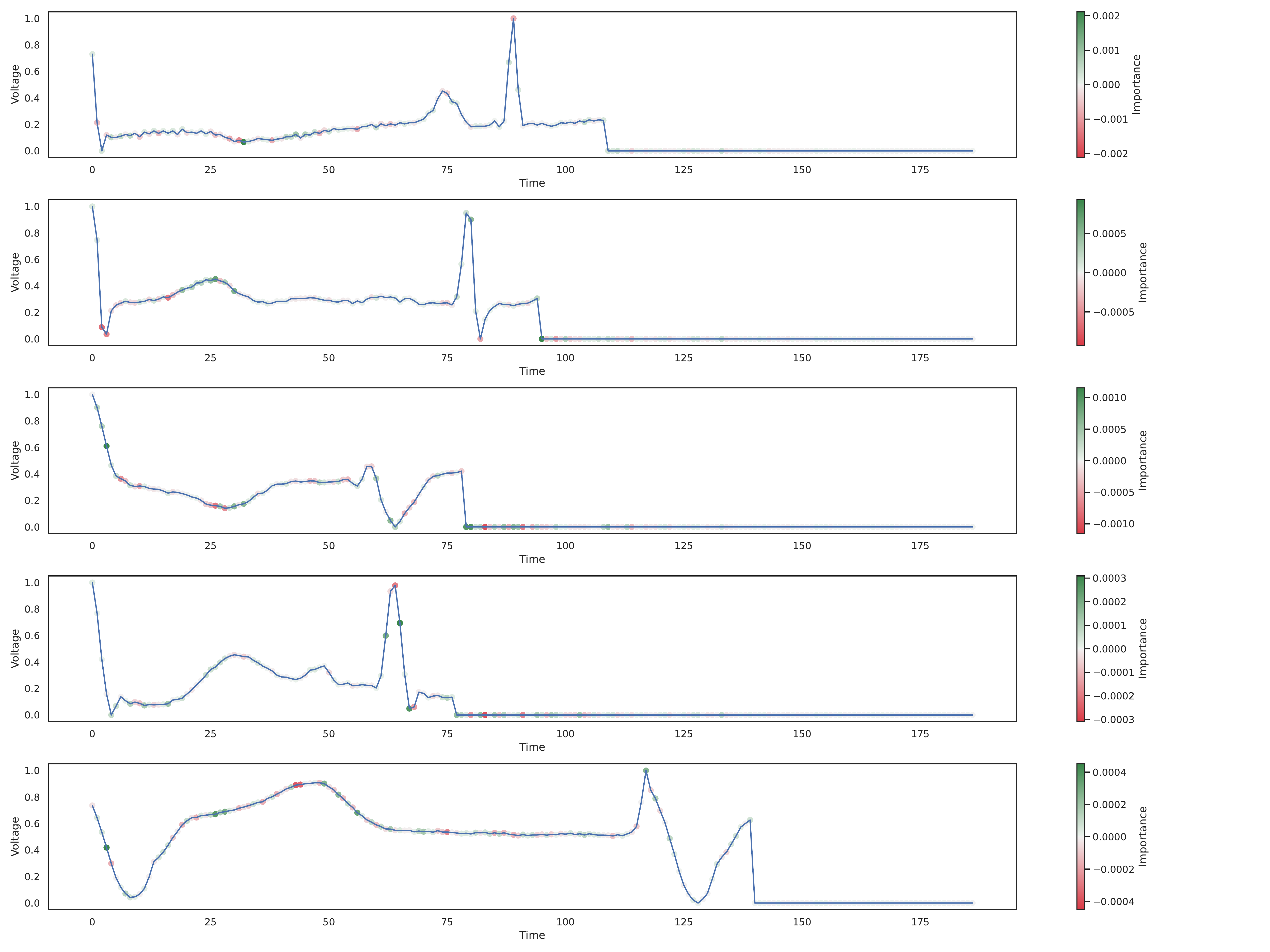}
		\caption{Premature ventricular}
	\end{subfigure}
	\hfil
	\begin{subfigure}{0.49\linewidth}
		\includegraphics[width=\linewidth]{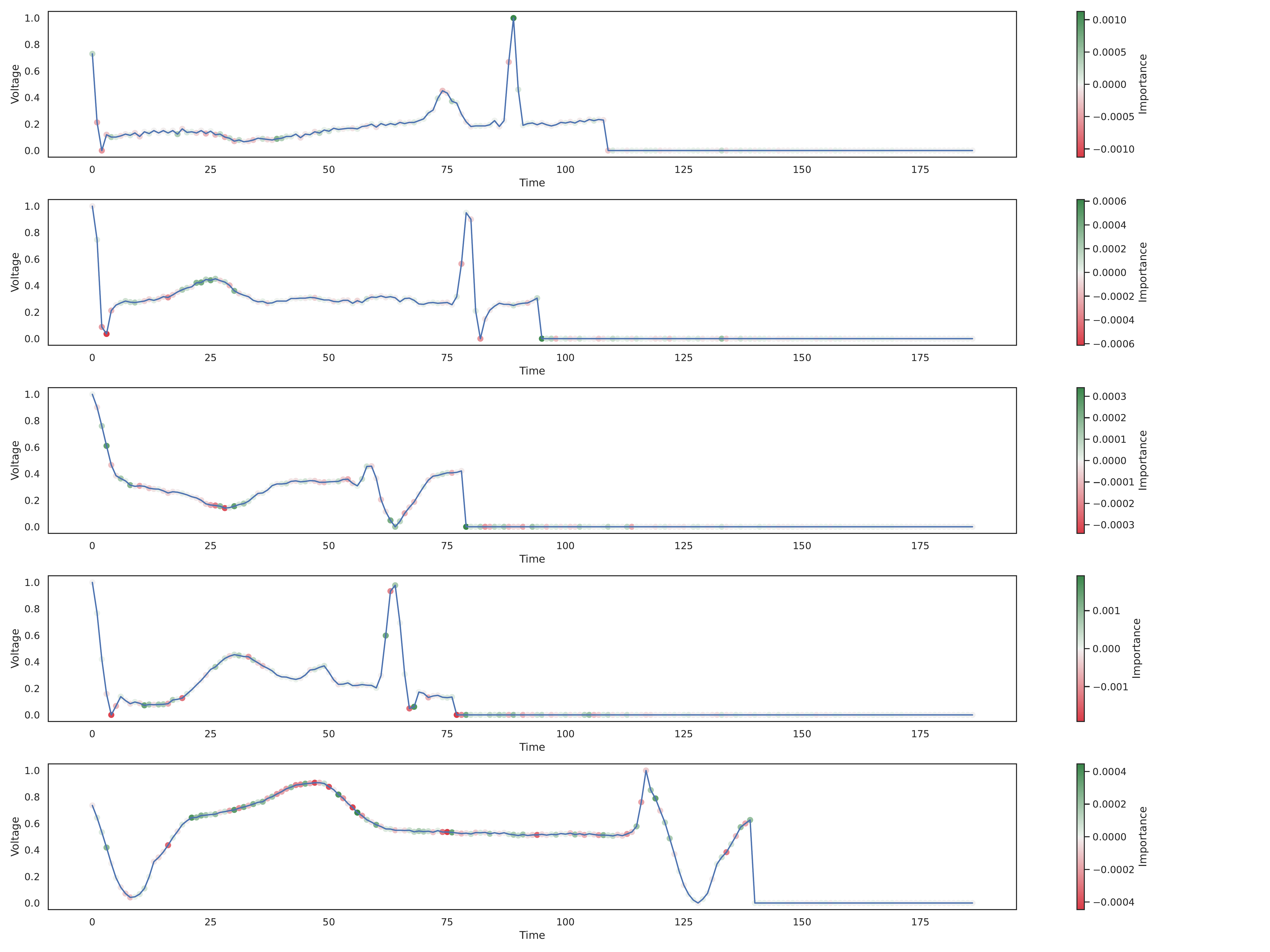}
		\caption{Fusion beats}
	\end{subfigure}
	\hfil
	\begin{subfigure}{0.49\linewidth}
		\includegraphics[width=\linewidth]{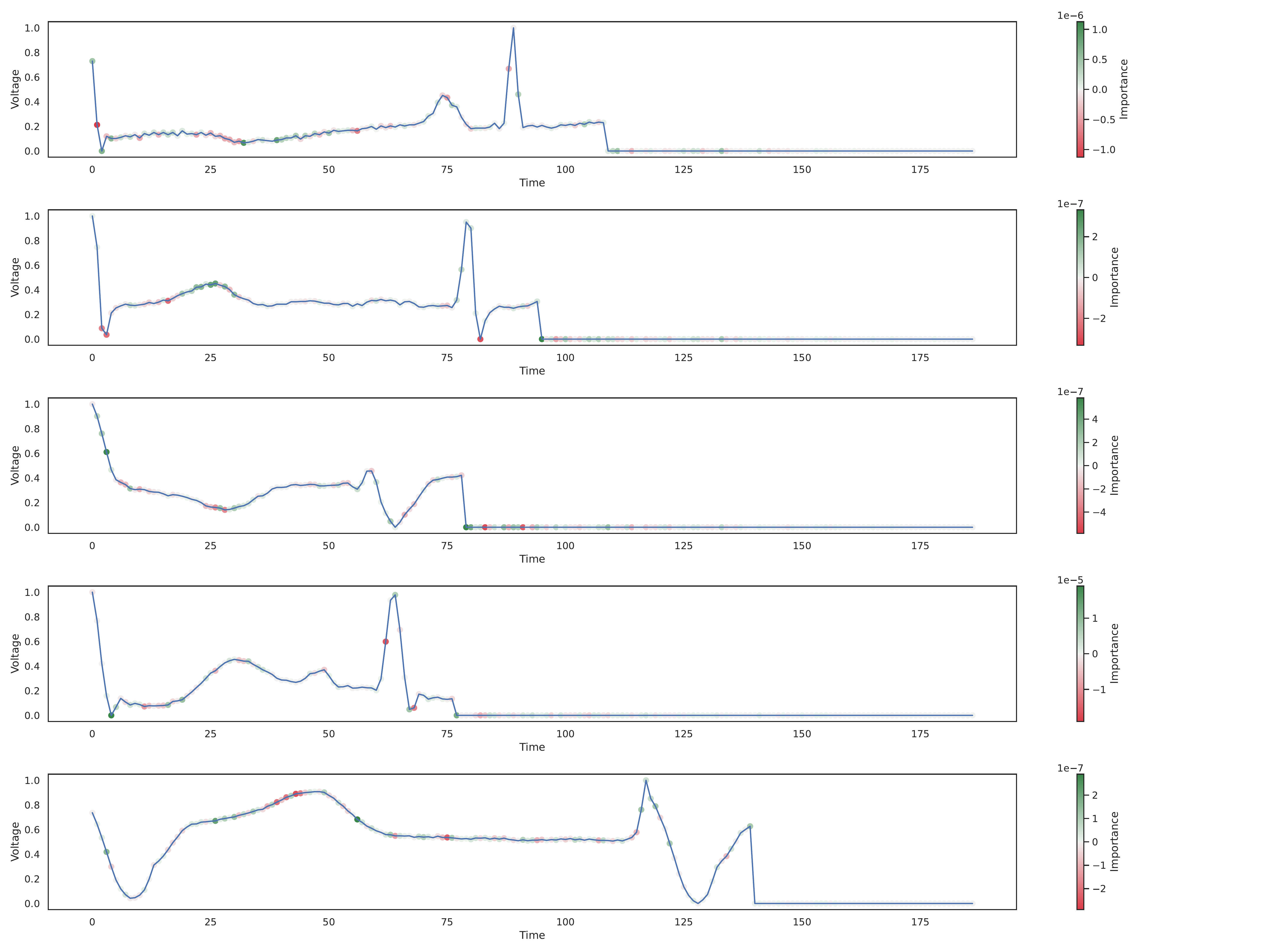}
		\caption{Unknown}
	\end{subfigure}
	\hfil
	\begin{subfigure}{0.49\linewidth}
		\includegraphics[width=\linewidth]{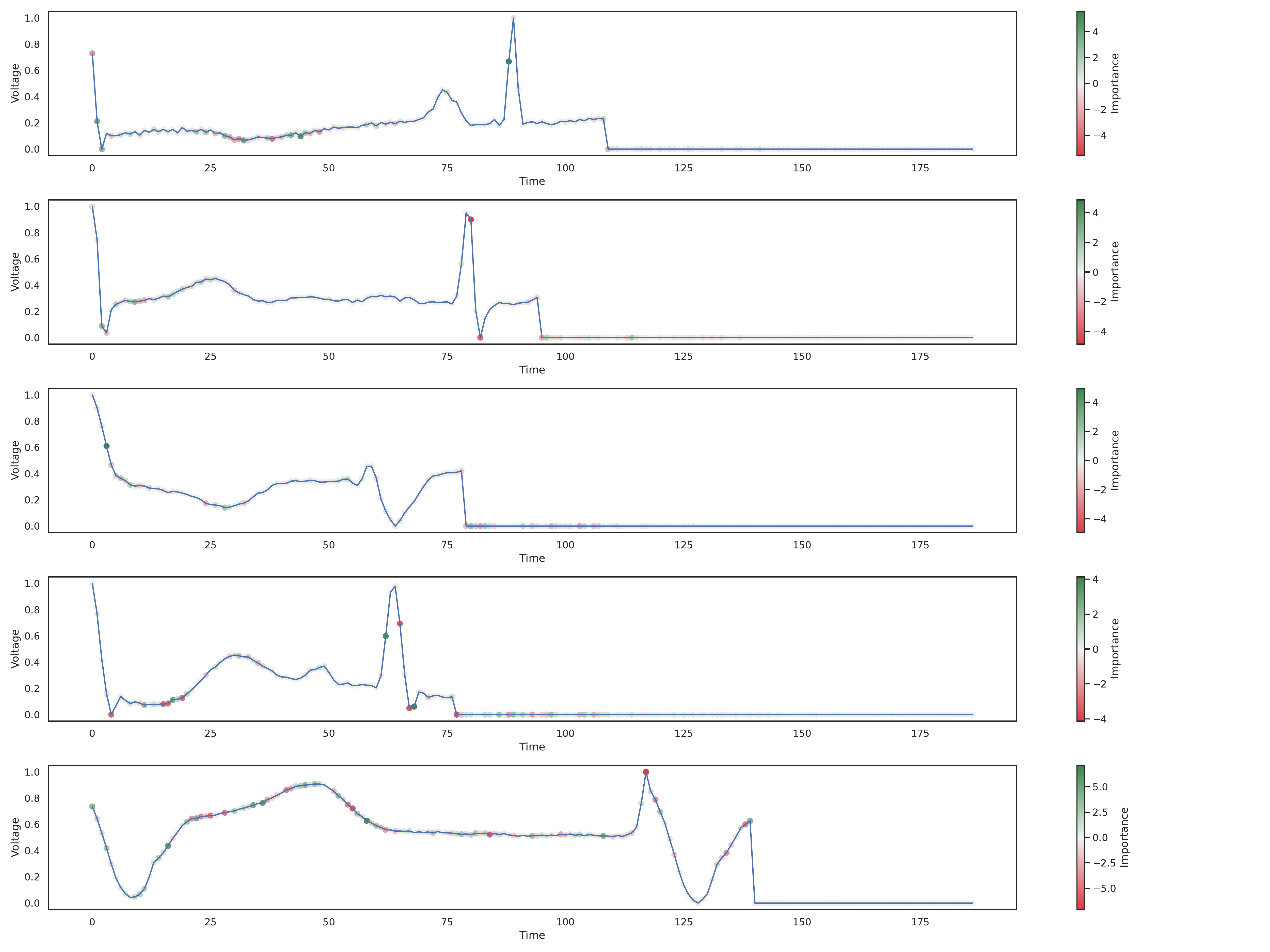}
		\caption{Vanilla}
	\end{subfigure}
	
	\caption{Concept-based saliency maps for ECG}
	\label{fig:ecg_saliency}
\end{figure}

\begin{figure}[!ht]
	\centering
	\begin{subfigure}{0.49\linewidth}
		\includegraphics[width=\linewidth]{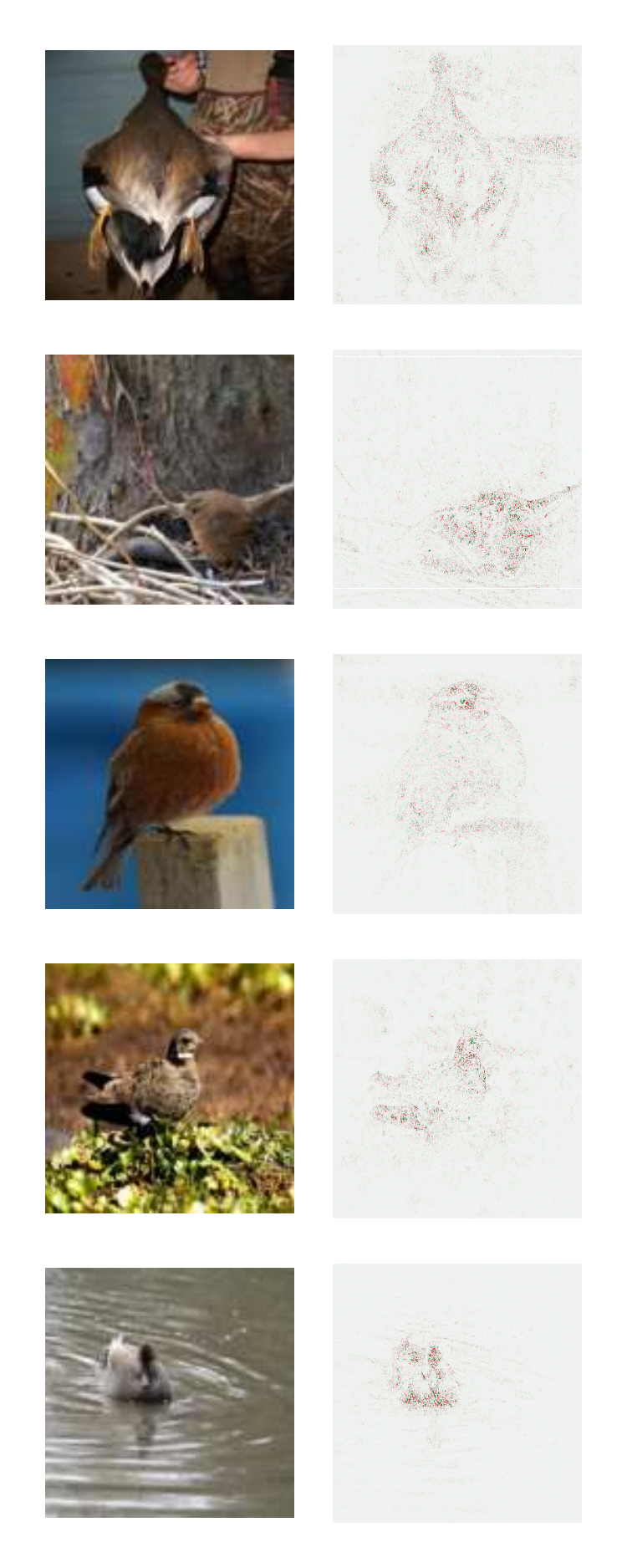}
		\caption{Brown belly}
	\end{subfigure}
	\hfil
	\begin{subfigure}{0.49\linewidth}
		\includegraphics[width=\linewidth]{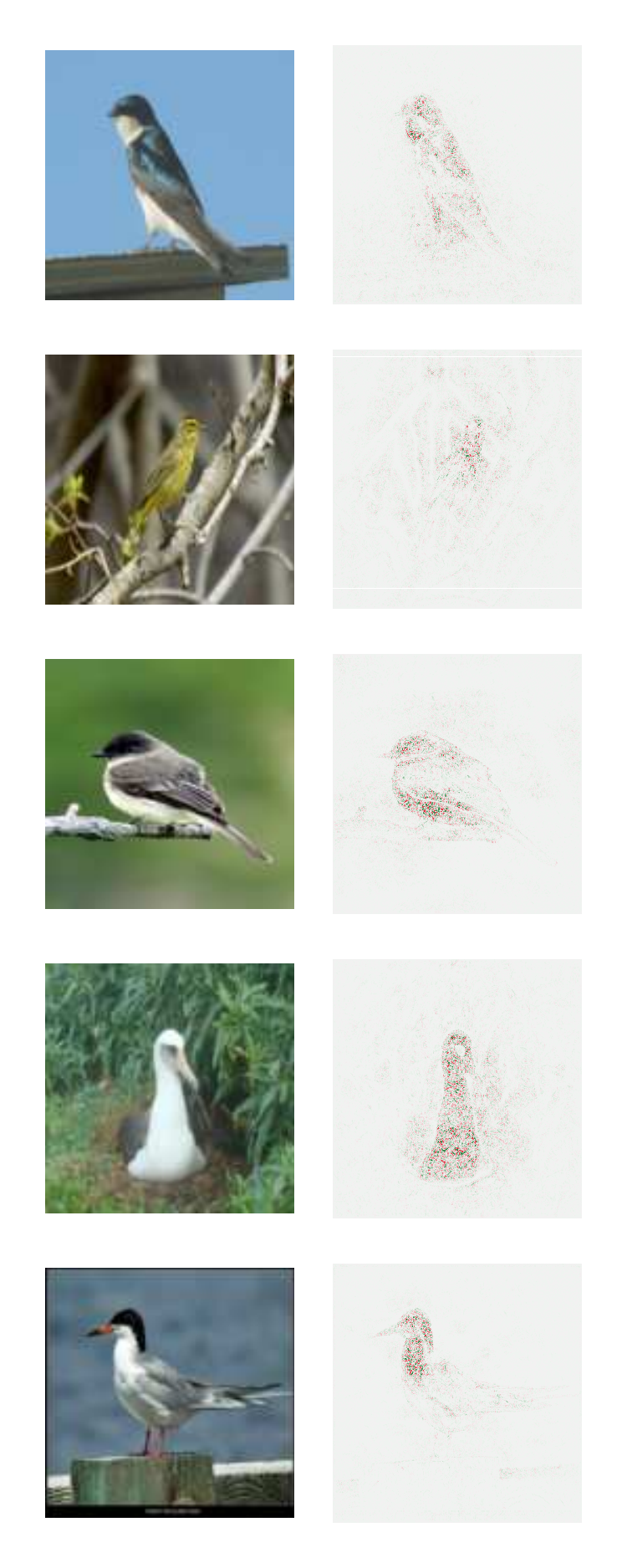}
		\caption{Solid belly pattern}
	\end{subfigure}

	\caption{Concept-based saliency maps for CUB (1/3)}
	\label{fig:cub_saliency_belly}
\end{figure}

\begin{figure}[!ht]
	\centering
	\begin{subfigure}{0.49\linewidth}
		\includegraphics[width=\linewidth]{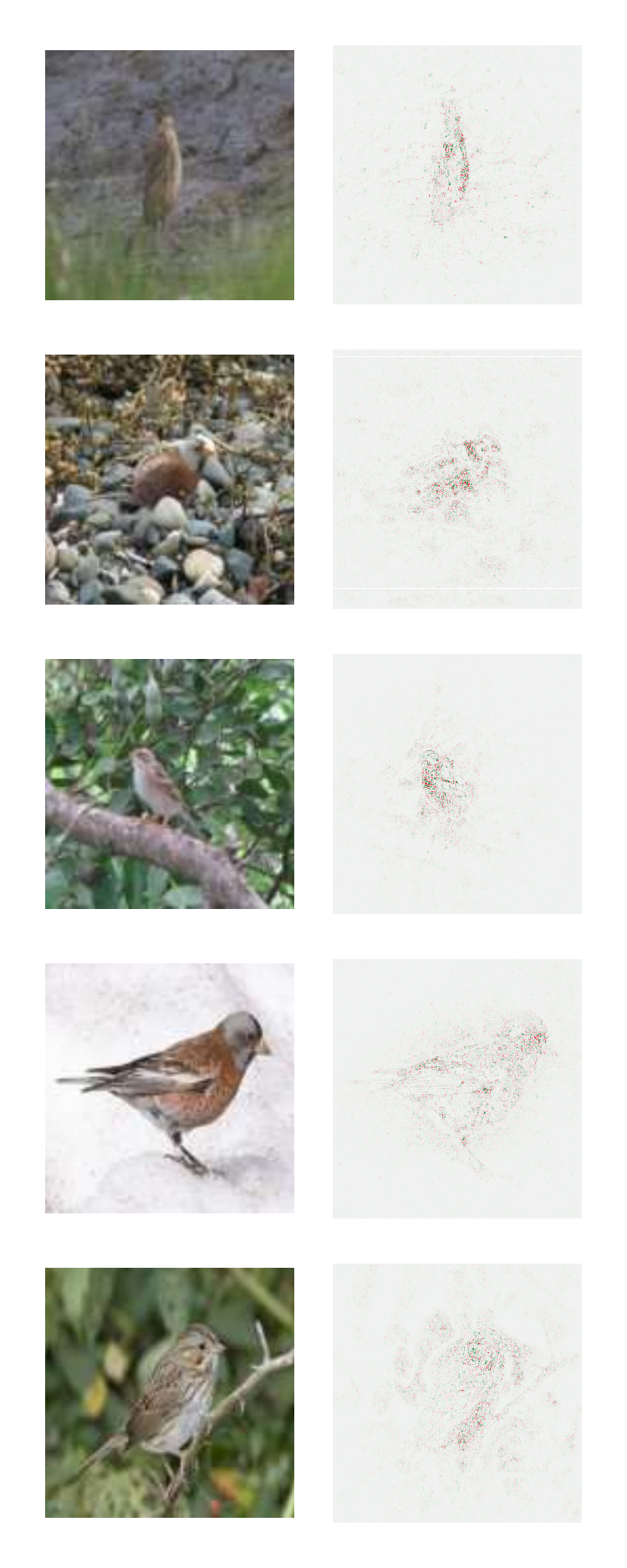}
		\caption{Stripped back pattern}
	\end{subfigure}
	\hfil
	\begin{subfigure}{0.49\linewidth}
		\includegraphics[width=\linewidth]{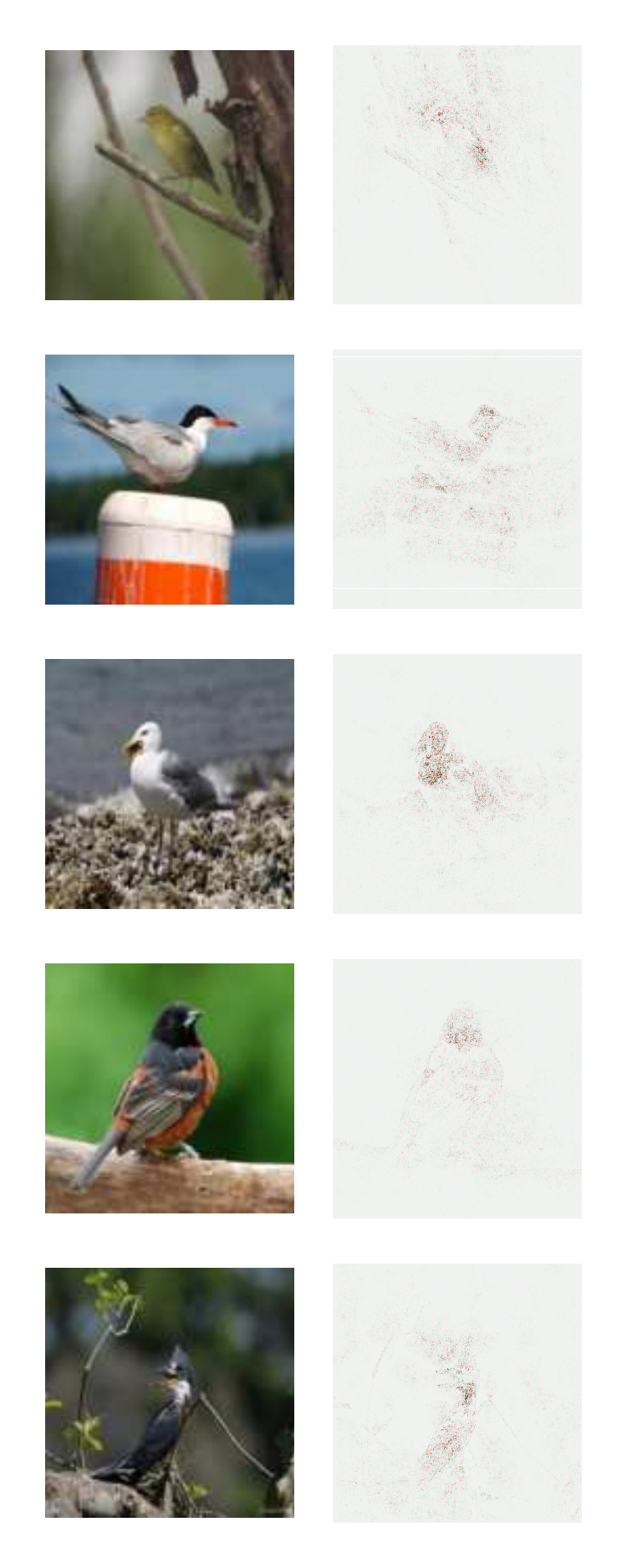}
		\caption{Solid back pattern}
	\end{subfigure}
	
	\caption{Concept-based saliency maps for CUB (2/3)}
	\label{fig:cub_saliency_back}
\end{figure}

\begin{figure}[!ht]
	\centering
	\begin{subfigure}{0.49\linewidth}
		\includegraphics[width=\linewidth]{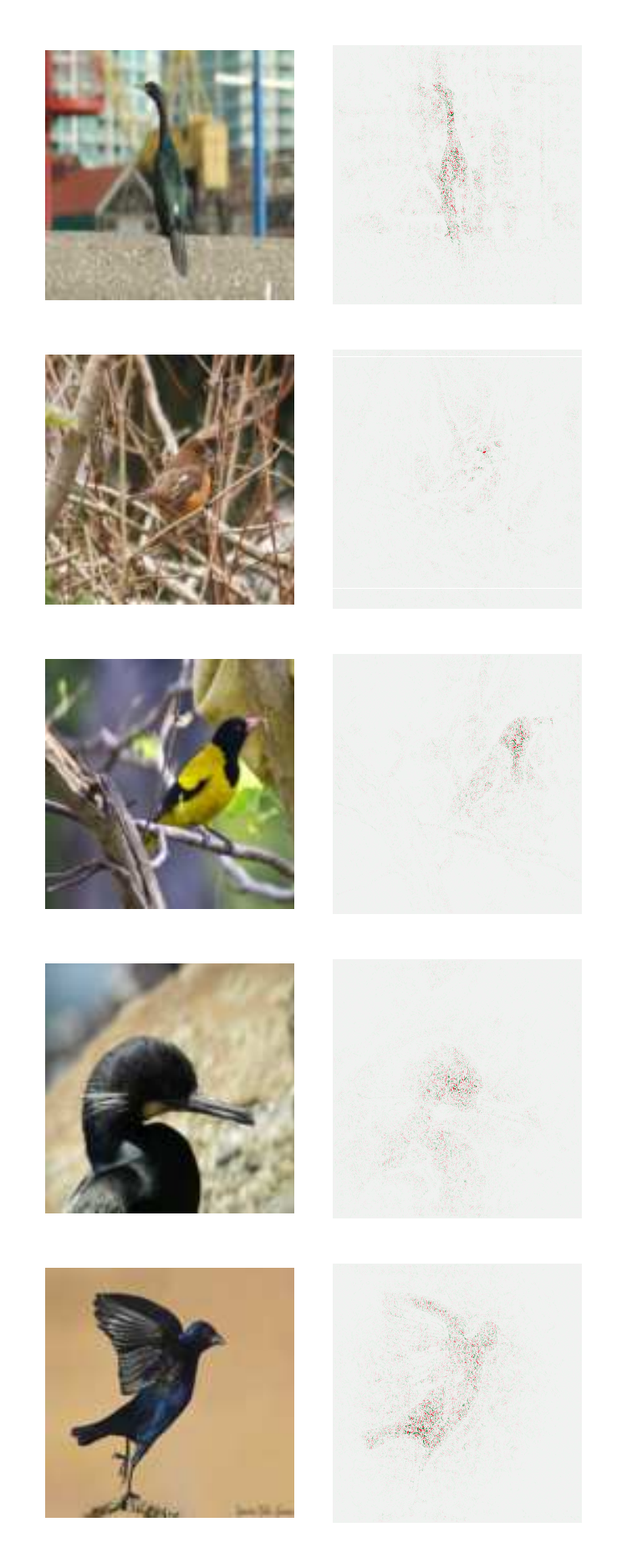}
		\caption{Black breast}
	\end{subfigure}
	\hfil
	\begin{subfigure}{0.49\linewidth}
		\includegraphics[width=\linewidth]{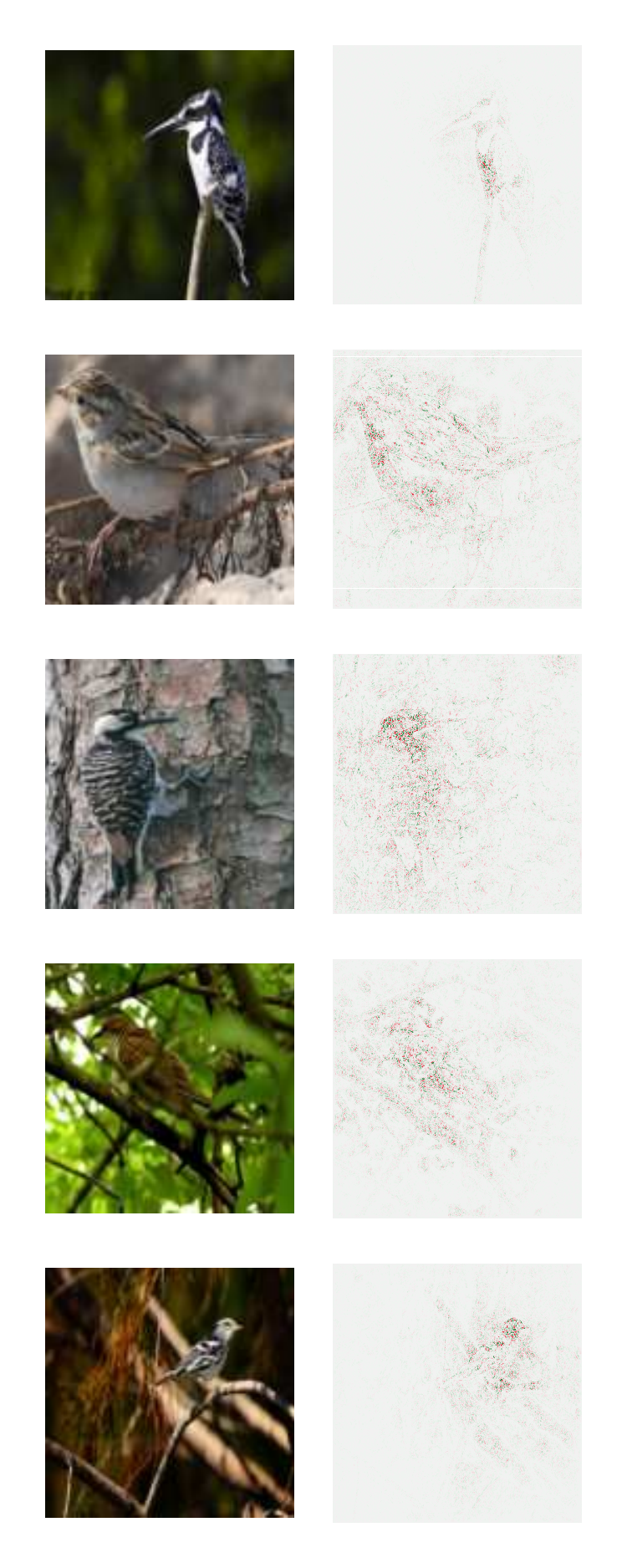}
		\caption{White breast}
	\end{subfigure}
	
	\caption{Concept-based saliency maps for CUB (3/3)}
	\label{fig:cub_saliency_breast}
\end{figure}

\end{document}